\documentclass[10pt,twocolumn,letterpaper]{article}

\usepackage[arxiv]{cvpr}      
\usepackage{iitem}
\usepackage{booktabs}
\usepackage{colortbl}
\usepackage{multirow}

\newcommand{\myheading}[1]{\vspace{1ex}\noindent \textbf{#1}}

\def\veps{{\boldsymbol{\epsilon}}}
\def\tref{\text{ref}}

\usepackage{amsthm}
\theoremstyle{plain}
\newtheorem{theorem}{Theorem}
\newtheorem{proposition}{Proposition}
\newtheorem{lemma}{Lemma}
\newtheorem{assumption}{Assumption}

\usepackage{makecell}
\usepackage{booktabs}

\usepackage{amsmath,amsfonts,bm}

\def\eqref#1{equation~\ref{#1}}

\def\1{\bm{1}}

\def\rvc{{\mathbf{c}}}

\def\rvz{{\mathbf{z}}}

\def\vc{{\bm{c}}}

\def\vr{{\bm{r}}}

\def\vw{{\bm{w}}}
\def\vx{{\bm{x}}}
\def\vy{{\bm{y}}}
\def\vz{{\bm{z}}}

\def\mI{{\bm{I}}}

\DeclareMathAlphabet{\mathsfit}{\encodingdefault}{\sfdefault}{m}{sl}
\SetMathAlphabet{\mathsfit}{bold}{\encodingdefault}{\sfdefault}{bx}{n}

\def\gG{{\mathcal{G}}}

\def\gL{{\mathcal{L}}}

\def\gN{{\mathcal{N}}}

\newcommand{\E}{\mathbb{E}}

\definecolor{cvprblue}{rgb}{0.21,0.49,0.74}
\usepackage[pagebackref,breaklinks,colorlinks,allcolors=cvprblue]{hyperref}
\AtBeginDocument{
  \crefname{table}{Tab.}{Tabs.}
  \Crefname{table}{Tab.}{Tabs.}
  \crefname{figure}{Fig.}{Figs.}
  \Crefname{figure}{Fig.}{Figs.}
  \crefname{section}{Sec.}{Secs.}
  \Crefname{section}{Sec.}{Secs.}
}

\def\paperID{11880} 
\def\confName{CVPR}
\def\confYear{2026}
\def\Approach{DeDPO}

\title{\Approach: Debiased Direct Preference Optimization for Diffusion Models}

\author{
  Khiem Pham$^{1}$\thanks{Equal contribution.} \quad
  Quang Nguyen$^{2}$\footnotemark[1] \quad
  Tung Nguyen$^{1}$\footnotemark[1] \quad
  Jingsen Zhu$^{1}$ \quad
  Michele Santacatterina$^{3}$ \quad  \\
  Dimitris Metaxas$^{2}$ \quad
  Ramin Zabih$^{1}$ \\
  $^{1}$Cornell University\quad
  $^{2}$Rutgers University \quad
  $^{3}$NYU
}

\begin{document}
\maketitle
\begin{abstract}
Direct Preference Optimization (DPO) has emerged as a predominant alignment method for diffusion models, facilitating off-policy training without explicit reward modeling. However, its reliance on large-scale, high-quality human preference labels presents a severe cost and scalability bottleneck. To overcome this, We propose a semi-supervised framework augmenting limited human data with a large corpus of unlabeled pairs annotated via cost-effective synthetic AI feedback. Our paper introduces Debiased DPO (DeDPO), which uniquely integrates a debiased estimation technique from causal inference into the DPO objective. By explicitly identifying and correcting the systematic bias and noise inherent in synthetic annotators, DeDPO ensures robust learning from imperfect feedback sources, including self-training and Vision-Language Models (VLMs). Experiments demonstrate that DeDPO is robust to the variations in synthetic labeling methods, achieving performance that matches and occasionally exceeds the theoretical upper bound of models trained on fully human-labeled data. This establishes DeDPO as a scalable solution for human-AI alignment using inexpensive synthetic supervision.

\end{abstract}

\section{Introduction}
\label{sec:intro}
Text-to-image diffusion models have demonstrated impressive capabilities in generating photorealistic and semantically consistent images from natural language prompts~\cite{ho2023diffusion, dhariwal2021diffusion, rombach2022high}. These models are typically trained with supervised learning on large-scale image--text datasets~\cite{schuhmann2022laion}, which enables broad generalization but offers limited control over fine-grained image attributes valued by users, such as prompt adherence or aesthetic quality. As a result, generated outputs may often diverge from user intent despite textual relevance.

To bridge this gap, preference-based alignment methods have been developed to fine-tune generative models with human feedback. In language models, Reinforcement Learning from Human Feedback (RLHF)~\cite{ouyang2022training,jaech2024openai} has proven effective by training a reward model on pairwise human annotations and applying policy-gradient methods~\cite{mnih2016asynchronous,schulman2017proximal} for fine-tuning. Similar RL-based strategies have been extended to image generation~\cite{black2023training,fan2023dpok}, where diffusion models are optimized using rewards derived from human judgments. However, these methods incur substantial computational costs due to the need to train and maintain a separate reward model. To mitigate this, Direct Preference Optimization (DPO)~\cite{rafailov2023direct} was introduced to directly fine-tune language models on human preference pairs, eliminating the reward model requirement. Building on this success, Diffusion-DPO~\cite{wallace2024diffusion} extends DPO to text-to-image diffusion models, achieving efficient human alignment. This approach has since been adapted to various generative tasks, including image inpainting~\cite{shen2025follow}, video generation~\cite{liu2025improving,wu2025densedpo}, and physically grounded 3D generation~\cite{li2025dso}.

Despite its effectiveness, DPO-based methods face a key challenge: acquiring large-scale human preference labels is prohibitively expensive and time-consuming, limiting scalability. To address this, recent efforts have explored using synthetic preferences as a cheaper alternative~\cite{guo2024direct,Lee2024RLAIF}. However, such feedback may diverge from human judgment, introduce noise, and cause instability during training. Other approaches attempt to create synthetic preferences by degrading high-quality images~\cite{pengself}, but the lack of real human supervision can cause misalignment and limit effectiveness. In essence, directly training on AI-generated or synthetic preferences can be seen as learning from misaligned or noisy feedback, which undermines the reliability of alignment.

To address this problem, we propose \textbf{\Approach}, a simple yet effective method that combines DPO with a debiased estimator to improve the training stability and robustness to imperfect feedback. \Approach~retains the desirable properties of DPO, training with completely offline data, while simultaneously achieving robustness to potential errors in synthetic preferences. Our method achieves performance equal to, and sometimes incrementally better than DPO trained on fully labeled data, even when only a small set of high-quality human preferences is available alongside a larger set of potentially noisy synthetic preferences. Experiments demonstrate that \Approach~consistently improves alignment quality and robustness in settings where human feedback is scarce and synthetic feedback is unreliable, offering a scalable path to aligning diffusion models with human values.

\section{Related Work}
\myheading{Alignment by synthetic preferences.}
Synthetic preferences for DPO can be obtained from two types of models: reward models trained on large datasets of preference pairs (e.g., PickScore \cite{kirstain2023pick}, HPSv2 \cite{wu2023human}), and pretrained vision-language models (e.g., CLIP \cite{radford2021learning}, Aesthetic \cite{schuhmann2022laionaesthetics}) that were not specifically trained on preference data. The original Diffusion-DPO ~\cite{wallace2024diffusion} proposes using both types of these synthetic preferences. Recent work~\cite{karthik2025scalable} uses reward models to label synthetically generated image pairs for DPO training as a scalable approach for learning from online data. However, this approach inherently assumes access to a large dataset of labeled preference pairs, \textit{whereas we consider a more realistic setting with only a small set of labeled pairs and a larger set of unlabeled pairs}. Therefore, the most relevant approach to our work is the second one, which utilizes pretrained vision-language models that were not trained on preference pairs. Another line of work by \citet{yuan2024self} uses self-play: the diffusion model iteratively competes against its own earlier versions to improve alignment. These approaches (self-play or pretrained-model feedback) motivate our focus on data-efficient alignment without large human datasets. Beyond text-to-image generation, there is a plethora of works using multimodal synthetic data to train large vision-language models~\cite{liu2023visual,li2024textbind,zhang2023llavar,wang2023see,li2023llava,deng2024enhancing,li2024vlfeedback}. These typically distill knowledge from responses of much larger models such as ChatGPT and Gemini. However, to our knowledge, there is no prior work on using multimodal models to generate preference labels for DPO training.


\myheading{Robust alignment with noisy labels.}
AI feedback (synthetic preference) may be noisy and inconsistent with real human preference, leading to unstable alignment, so it is crucial to develop a robust alignment strategy against noisy preference labels. Existing work largely follows two directions. The first line assumes a parametric noise model on the labels~\cite{bukharin2024robust,chowdhury2024provably,mitchellnote}. For instance, \citet{mitchellnote} introduce a conservative variant of DPO by smoothing the binary labels (analogous to label smoothing~\cite{szegedy2016rethinking}), and \citet{chowdhury2024provably} further debias this loss and analyze its sub-optimality gap. A second line uses distributionally robust optimization (DRO)~\cite{xu2025robust,wu2024towards,kim2025lightweight,mandal2025distributionally}, either by reweighting DPO gradients to approximate a robust objective~\cite{xu2025robust} or by adding gradient regularization terms~\cite{mandal2025distributionally}. In contrast, we do not assume a specific noise model and, crucially, we assume access to only a small set of high-quality human preference labels. This reference set allows us to correct bias in AI feedback rather than optimizing for worst-case noise alone. Moreover, prior robust DPO work focuses on language models, while we study robustness for visual generation. Related approaches in RLHF learn robust reward models~\cite{yan2024reward,chakraborty2024maxmin,fu2025reward,ye2025robust,padmakumar2024beyond}, whereas our method remains reward-model-free and directly tackles noisy labels from pretrained models.

\vspace{-5pt}
\paragraph{Doubly Robust / Debiased estimators.}
Doubly Robust / Debiased estimators originated in the missing-data and causal-inference literature~\cite{Robins1994,Scharfstein1999,Bang2005} and have since been applied to off-policy reinforcement learning~\cite{Dudik2014,Jiang2016,Thomas2016} and causal machine learning~\cite{chernozhukov2018double}. The key idea is to combine two nuisance models that target the same quantity into a meta-estimator that remains consistent if \emph{either} model is correctly specified, and attains semi-parametric efficiency when they are both sufficiently accurate~\cite{chernozhukov2018double,kennedy2024semiparametric,foster2023orthogonal}. When used as a loss, such debiased objectives yield efficient parameter estimation. Closest to our setting, these ideas have been used for semi-supervised classification~\cite{zhu2023doubly} by treating pseudo-labels as noisy surrogates of missing labels and designing a debiased loss robust to pseudo-label errors. However, they neither leverage AI feedback from external models nor enforce sample splitting, which is crucial for unbiased and efficient debiased estimation (see our method section). More recently, \citet{Xu2025} apply these techniques to robustly align large language models, but their algorithm requires online data generation, departing from the fully offline nature of DPO. In contrast, we integrate debiased estimation directly into preference optimization for diffusion models, improving robustness to synthetic, noisy preferences while preserving the offline training setting.
\section{Preliminaries}
\myheading{Text-to-image diffusion model.} 
Diffusion models~\cite{song2020score, ho2020denoising, song2020denoising, dhariwal2021diffusion, rombach2022high} generate images by learning to reverse a gradual noising process applied to image latents. Formally, they model the transformation from a standard Gaussian noise map \(\boldsymbol{\veps} \sim \gN(0, \mI)\) to a sample from the target data distribution via a denoising diffusion process. The forward (noising) process perturbs a clean image latent \(\vx_0\) across multiple timesteps \(t\) as:
\begin{equation}    
\vx_t = \alpha_t \vx_0 + \sigma_t \veps,
\label{eq:noise_t}
\end{equation}
where \(\alpha_t\) and \(\sigma_t\) are time-dependent scaling coefficients satisfying \(\alpha_0 = \sigma_T = 1\) and \(\alpha_T = \sigma_0 = 0\). The reverse process learns to recover \(\vx_0\) by denoising \(\vx_t\) using a noise estimator \(\veps_\theta\), trained to predict the noise \(\veps\) via:
\begin{equation} 
\min_{\theta} \mathbb{E}_{t \sim \mathcal{U}(0, T), \veps \sim \gN(0, \mI)} \left\| \veps_\theta(\vx_t, t) - \veps \right\|^2_2.
\end{equation}

In the text-to-image setting, models are further conditioned on text embeddings \(\rvc\), modifying the noise predictor to \(\veps_\theta(\vx_t, t, \rvc)\). This conditioning enables the generation to be guided by textual prompts. To strengthen alignment between the image and the text, many models incorporate classifier-free guidance (CFG)~\cite{dhariwal2021diffusion, ho2022classifier}, which blends conditional and unconditional predictions using a guidance scale \(\gamma\). The guided output is computed as:
\begin{equation}
\hat{\veps}_\theta(\vx_t, t, \rvc) = \veps_\theta(\vx_t, t) + \gamma \left( \veps_\theta(\vx_t, t, \rvc) - \veps_\theta(\vx_t, t) \right),
\end{equation}
where \(\veps_\theta(\vx_t, t, \rvc)\) denotes the prediction conditioned on the text, and \(\veps_\theta(\vx_t, t)\) denotes the unconditional prediction. CFG enhances both the fidelity and semantic relevance of generated images without relying on an external classifier.

\myheading{DPO for text-to-image diffusion models.} Direct Preference Optimization (DPO)~\cite{rafailov2023direct} offers an efficient alternative to reward modeling for aligning generative models with human preferences. Diffusion-DPO~\cite{wallace2024diffusion} extends this approach to text-to-image diffusion models by adapting the DPO objective to the denoising process. Since diffusion models generate images through a sequence of latent states over \(T\) timesteps, directly computing the full trajectory likelihood is intractable. To address this, Diffusion-DPO proposes a surrogate loss based on denoising score comparisons at individual timesteps. For a given text prompt \(\vc\) and an image pair \((\vx_t^w, \vx_t^l)\) annotated by human preference $\vx_t^w\succ\vx_t^l$, with Gaussian noise samples \((\veps^w, \veps^l) \sim \gN(0, \mI)\), the preference loss is approximated as:
\begin{align}
&\mathcal{L}_{\text{DPO}}(\theta) = \E_{t, \vc, \vx_t^w, \vx_t^l}[-\log\gG_\theta(\vc,\vx_t^l,\vx_t^w)], \label{eq:dpo} \\
&\text{where }\gG_\theta(\vc,\vx_t^l,\vx_t^w) =  \notag \\
&\quad \sigma\Big(- \beta\big(\|\veps^w - \veps_\theta(\vx_t^w, t)\|_2^2 - \|\veps^w - \veps_\tref(\vx_t^w, t)\|_2^2 \notag\\
&\quad - \|\veps^l - \veps_\theta(\vx_t^l, t)\|_2^2 + \|\veps^l - \veps_\tref(\vx_t^l, t)\|_2^2\big)
\Big) \notag
\end{align}
where $\sigma(\cdot)$ is the sigmoid function, $\vx_t^w$ and $\vx_t^l$ are the noise-perturbed images at timestep $t$ (\cref{eq:noise_t}), \(\veps_\theta\) is the trainable noise predictor and \(\veps_{\text{ref}}\) is a frozen reference model. This formulation encourages the model to improve denoising quality on preferred samples relative to less preferred ones, effectively aligning generation with preference signals without relying on reward modeling. It also regularizes the model to stay close to the reference model \citep{rafailov2023direct}.

\section{Method}
\myheading{Problem setting.} Due to the high cost of acquiring human preference labels, we consider a semi-supervised DPO setting with a small labeled set of text-to-image pairs and a larger unlabeled set. For the latter, we use AI feedback to generate synthetic preference labels.

\myheading{DPO as binary classification.} We first re-interpret DPO in \cref{eq:dpo} as an equivalent binary classification problem. For an \emph{unordered} image pair \((\vx^0, \vx^1)\) with text prompt $\vc$, we denote $\vz$ as the preference label over the pair, where $\vz=1$ denotes preference on $\vx^1$ and $\vz=0$ otherwise. With this label, we can rewrite the loss in \cref{eq:dpo} as a binary cross-entropy loss
\begin{equation}\label{eq:dpo-class}
\begin{aligned}
    & \mathcal{L}_{\text{DPO}}(\theta) = 
    \E_{t, \vc, \vx_t^0, \vx_t^1} \left[ \gL(\gG_\theta(\vc,\vx_t^0,\vx_t^1), \vz) \right], \\
    &\quad\text{where } \gL(a,b) = -b\log a + (1-b)\log(1-a)
\end{aligned}
\end{equation}
which is a binary classification loss with predictive model $\hat\vz = \gG_\theta(\vc,\vx_t^0,\vx_t^1)$ and target label $\vz$.

\myheading{Notes on symbols.}
We drop the subscript $t$ for brevity. For example, $\vx_t^0$ is written as $\vx^0$. We also drop the subscripts of the expectation operator $\E$. $\E_n$ denotes the average over $n$ samples (equivalent to the operator $\frac{1}{n} \sum_{i=1}^n \cdot$). $\vy$ denotes the input triplet $\vy = (\vc, \vx^0, \vx^1)$. Therefore, \cref{eq:dpo-class} can be written as:
\begin{equation}
\label{eq:short-dpo}
\mathcal{L}_{\text{DPO}}(\theta) = \E_n \gL (\gG_\theta(\vy), \vz)
\end{equation}
We will slightly overload $\gG$ to mean any mapping from the input $\vy$ to the label prediction $\hat \vz$, reserving $\gG_\theta$ for the model's implicit margin (\cref{eq:dpo}). Additionally, we note that the expectation is also taken over the denoising time $t$. 

\myheading{Towards a semi-supervised approach.} With our semi-supervised setting, let $\vy_l$ and $\vy_u$ be the labeled and unlabeled data, and $n_l$ and $n_u$ be the number of labeled and unlabeled samples. For the unlabeled data, we generate \emph{synthetic preference} labels $\hat \gG(\vy_u)$ by some cheap but possibly incorrect annotator $\hat\gG$. A straightforward and naive solution to solve the semi-supervised problem is to use the empirical average loss:
\begin{equation}
L_\text{OR}(\theta) = \E_{n_l} \gL(\gG_\theta(\vy_l), \rvz_l) +\E_{n_u} \gL(\gG_\theta(\vy_u), \hat \gG(\vy_u))
\end{equation}
which is also known as the Outcome Regression (OR) estimator of the loss in causal inference literature, \ie we are \textit{imputing} the missing label $\rvz_u$ with the synthetic preference. However, this estimator is biased because the synthetic preference $\hat \rvz$ differs from the ground truth human preference $\rvz$ in general. Alternatively, in this paper, we propose using the \emph{debiased loss} as follows:
\begin{equation}
\begin{aligned}
\label{eq:dr}
&L_\text{\Approach}(\theta) = \E_{n_l+n_u} \gL (\gG_\theta(\vy), \hat \gG(\vy)) \\
&+ \E_{n_l} (\gL (\gG_\theta(\vy_l), \rvz_l) - \gL (\gG_\theta(\vy_l), \hat \gG(\vy_l)))
\end{aligned}
\end{equation}

In the remaining part of this section, we'll verify that our proposed debiased loss in \cref{eq:dr} is unbiased and robust, describe how we generate synthetic preference by $\hat\gG$ to train the diffusion model in practice, and theoretically show the convergence rate of the diffusion model $\theta$ (seen as an implicit classifier) is robust to the potentially slow rate of the synthetic preference model $\hat \gG$.

\subsection{Our DeDPO loss is unbiased}
The first important property of the debiased loss is that it is an unbiased estimator of the population-level DPO loss \cref{eq:short-dpo} no matter whether the pseudo label $\hat \gG(\vy)$ is correct or not.

\begin{proposition}
The expectation of our proposed debiased loss in \cref{eq:dr} is equal to the expectation of the original DPO loss in \cref{eq:dpo}, \ie $L_\text{DeDPO}$ is unbiased:
\begin{equation}
\E [L_\text{\Approach}(\theta)] = \E[\gL_{\text{DPO}}(\theta)]
\end{equation}
\end{proposition}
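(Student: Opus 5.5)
The plan is to take expectations term by term in \cref{eq:dr}, using linearity of expectation and the assumption that labeled and unlabeled inputs $\vy$ are drawn i.i.d.\ from the same distribution. Labeling is treated as missing-completely-at-random, so whether a sample falls in the labeled set carries no information about $(\vy,\rvz)$. We also take the annotator $\hat\gG$ to be a fixed function that is independent of the training samples, e.g.\ fit on held-out data; this is the sample-splitting requirement emphasized in the related-work section. Under these conditions, for a fixed $\theta$, every summand in an empirical average $\E_n$ has the same expectation as a single population draw. The expectation is also taken over $t$ and the noise, which enter the same way in every term.

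\textbf{Step 1.} Take the expectation of the first term. By identical distribution,
\[
\E\big[\E_{n_l+n_u}\gL(\gG_\theta(\vy),\hat\gG(\vy))\big]=\E[\gL(\gG_\theta(\vy),\hat\gG(\vy))].
\]

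\textbf{Step 2.} Take the expectation of the correction term over the labeled set. By linearity,
\[
\E\big[\E_{n_l}(\gL(\gG_\theta(\vy_l),\rvz_l)-\gL(\gG_\theta(\vy_l),\hat\gG(\vy_l)))\big]=\E[\gL(\gG_\theta(\vy),\rvz)]-\E[\gL(\gG_\theta(\vy),\hat\gG(\vy))].
\]

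\textbf{Step 3.} Sum the two results. The pseudo-label terms cancel exactly, leaving $\E[\gL(\gG_\theta(\vy),\rvz)]$. This is the expectation of $\mathcal{L}_{\text{DPO}}(\theta)$ as rewritten in \cref{eq:dpo-class} and \cref{eq:short-dpo}, so the result equals $\E[\mathcal{L}_{\text{DPO}}(\theta)]$ regardless of how accurate $\hat\gG$ is.

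The argument itself is immediate. The real obstacle is stating the conditions under which the cancellation holds. If $\hat\gG$ were trained on the same labeled samples, then $\gL(\gG_\theta(\vy_l),\hat\gG(\vy_l))$ would not have the same mean as its counterpart on fresh draws, and the cancellation would fail. Likewise, if labeled and unlabeled inputs came from different distributions, the two pseudo-label expectations would differ. I would therefore state explicitly that the identity holds pointwise in a fixed $\theta$, meaning before optimization, not at the data-dependent minimizer. I would also state that it requires $\hat\gG$ to be independent of the labeled sample, justified by sample splitting or cross-fitting, and that labeled and unlabeled data share a common distribution.
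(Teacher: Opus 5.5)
Your proof is correct and follows essentially the paper's argument. The paper groups the loss as the labeled true-label average plus the difference between the full-sample and labeled pseudo-label averages, and argues this difference has mean zero because both averages are unbiased for $\E[\gL(\gG_\theta(\vy),\hat\gG(\vy))]$; this is the same linearity-and-identical-distribution cancellation you carry out, and your explicit conditions (fixed $\theta$, a common distribution for labeled and unlabeled data, and $\hat\gG$ independent of the training samples) are what the paper's one-line claim uses implicitly.
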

\begin{proof}
Simply re-order the terms in \cref{eq:dr} as:
\begin{equation}
\begin{aligned}
&L_\text{\Approach}(\theta) =\ 
\E_{n_l} \gL (\gG_\theta(\vy_l), \rvz_l) \\
&+ \left[ \E_{n_l + n_u} \gL (\gG_\theta(\vy), \hat \gG(\vy)) - \E_{n_l} \gL (\gG_\theta(\vy_l), \hat \gG(\vy_l)) \right]
\end{aligned}
\end{equation}
Then, the expectation of the first term is the population-level DPO loss \cref{eq:short-dpo},
and the expectation of the second term is 0 because both are unbiased estimators of $\E[\gL (\gG_\theta(\vy), \hat \gG(\vy))]$.
\end{proof}
This tells us that we can safely use the synthetic preference labels and still our loss is unbiased, which leads to bias-free learning.
Moreover, when the synthetic labels are perfect i.e. $\hat \gG(\vy) = \rvz$, the  debiased loss becomes $\E_{n_l+n_u} \gL (\gG_\theta(\vy), \hat \gG(\vy))$, effectively utilizing all of the data,
and when they are completely wrong, then asymptotically it becomes $\E_{n_l} \gL (\gG_\theta(\vy_l), \rvz_l)$ i.e. only the labeled data is used.

\subsection{Robustness against incorrect synthetic labels}
We strive to give an intuitive explanation of how the debiased loss works. First, we combine the 3 loss terms into one with a new target label:
\begin{proposition}
Our debiased loss is characterized by using pure pseudo-labels for unlabeled data and applying an amplified correction from pseudo-labels toward ground-truth labels for labeled data.
\begin{equation}
L_\text{\Approach}(\theta) = \E_{n_l + n_u} \gL (\gG_\theta(\vy), \hat \vw)
\end{equation}
where:
\begin{equation*}
\hat \vw = \begin{cases}
\hat \gG(\vy) & \text{if } \vy \text{ is unlabeled} \\ 
\hat \gG(\vy) + \frac{n_l + n_u}{n_l} (\rvz - \hat \gG(\vy)) & \text{if } \vy \text{ is labeled as }\rvz 
\end{cases}
\end{equation*}
\label{prop:geometric}
\end{proposition}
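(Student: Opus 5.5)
The argument rests on one structural fact: the loss $\gL(a,b)$ is affine in its second argument $b$. Read as the binary cross-entropy it is meant to be, $\gL(a,b) = -b\log a - (1-b)\log(1-a) = -\log(1-a) + b\,(\log(1-a) - \log a)$. For a fixed prediction $a$, any linear combination of targets with coefficients summing to one can therefore be pulled inside the second argument:
\[
\textstyle\sum_k \lambda_k \gL(a,b_k) = \gL\bigl(a, \sum_k \lambda_k b_k\bigr) \quad \text{whenever } \textstyle\sum_k \lambda_k = 1.
\]
If $\sum_k \lambda_k = 0$, the linear combination $\sum_k \lambda_k \gL(a,b_k)$ reduces instead to $(\sum_k \lambda_k b_k)(\log(1-a)-\log a)$. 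I will state this lemma first. The printed definition in \cref{eq:dpo-class} has a sign slip (it reads $+(1-b)\log(1-a)$), but the affine property holds under either sign convention, so the argument does not depend on which one is used.

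Next I rewrite every average in \cref{eq:dr} as a single sum over the pooled $n_l+n_u$ samples with weight $\frac{1}{n_l+n_u}$. Since $\E_{n_l} = \frac{1}{n_l}\sum_{\text{labeled}}$, the correction term becomes
\[
\frac{1}{n_l+n_u}\sum_{\text{labeled}} \frac{n_l+n_u}{n_l}\Bigl(\gL(\gG_\theta(\vy_l),\rvz_l) - \gL(\gG_\theta(\vy_l),\hat\gG(\vy_l))\Bigr).
\]
The loss then splits sample by sample:
\begin{itemize}
\item each unlabeled sample contributes exactly $\gL(\gG_\theta(\vy),\hat\gG(\vy))$, which gives the first case of $\hat\vw$;
\item each labeled sample contributes $\gL(a,\hat\gG) + c\,\gL(a,\rvz) - c\,\gL(a,\hat\gG)$, where $a = \gG_\theta(\vy)$ and $c = \frac{n_l+n_u}{n_l}$.
\end{itemize}

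For the labeled case, the coefficients $1$, $c$, and $-c$ sum to $1$, and the targets $\hat\gG$, $\rvz$, $\hat\gG$ combine to $\hat\gG + c(\rvz - \hat\gG)$. The lemma therefore collapses the three terms into $\gL(a, \hat\gG + c(\rvz-\hat\gG))$, which is exactly the second case of $\hat\vw$. Summing both cases gives $\E_{n_l+n_u}\gL(\gG_\theta(\vy),\hat\vw)$, as claimed.

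The main subtlety is not algebraic. It is that $\hat\vw$ can fall outside $[0,1]$, because $c > 1$. The identity should therefore be read with $\gL(a,b)$ defined by its affine formula for arbitrary real $b$, not as a genuine cross-entropy with a probability target. I would remark on this explicitly: the affine-extension convention is what makes the statement an identity, and it is also why the proposition describes the correction as "amplified".
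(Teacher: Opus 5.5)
Your proof is correct and follows the paper's argument. The paper likewise rewrites the labeled-only average as a pooled average weighted by $\frac{n_l+n_u}{n_l}\vr$ (with $\vr$ the labeled indicator), then uses the fact that the cross-entropy is linear in its target to merge the three terms into $\gL(\gG_\theta(\vy),\hat\vw)$. Your refinements make explicit what the paper's one-line appeal to linearity leaves implicit: the property is really affine, so the merge is valid only because your coefficients $1$, $c$, $-c$ sum to one; it survives the sign slip in \cref{eq:dpo-class}; and since $\hat\vw$ can leave $[0,1]$, $\gL$ must be read through its affine extension.
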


Please refer to the supplementary for the proof. Intuitively, for unlabeled data points, our loss simply uses the pseudo-labels as the supervision signal. For labeled data points, the loss introduces a correction term $\rvz - \hat \gG(\vy)$ that moves the pseudo-label toward the true label, amplified by the importance weight $\frac{n_l + n_u}{n_l} \ge 1$, which reflects the scarcity of labeled data. When the pseudo-label is already correct, this correction term vanishes and the loss reduces to standard supervised learning. When the pseudo-label is incorrect, the amplified correction not only fixes the error on the labeled point itself but also implicitly influences nearby unlabeled points, pushing them in the correct direction. Under a smoothness assumption on $\hat \gG(\vy)$, the labeled point thus steers its local neighborhood toward the correct label.

The importance weight can also be instance-dependent, especially when the labeled and unlabeled data are from different distributions.
However, we found little gain from using this variant, and in practice, we often \textit{uniformly sample} data from a large initially unlabeled pool for high-quality labeling.

\subsection{Synthetic preference labels}
In this section, we consider 2 main ways to generate synthetic preference labels:
\begin{enumerate}
    \item From the implicit reward margin induced by the current policy $\theta$ itself, \ie self-training.
    \item From a pretrained vision-language model (VLM).
\end{enumerate}

As discussed before, we don't consider feedback from reward models that have been trained on a large dataset of preference pairs (like \cite{karthik2025scalable}) because we are in the label-sparse setting.
The second method is straight forward and the synthetic labels are fixed once DPO starts. We discuss the first method in more detail.

\myheading{Implicit reward margin as pseudo labels.}
Self-training style synthetic labels in image classification are generated by the classifier at the previous iteration $\hat\theta$. The synthetic preference model therefore is $\hat \gG(\vy) =  \gG_{\hat \theta}(\vy)$ i.e. the classifier in \cref{eq:dpo-class} with parameter $\hat \theta$. Specifically, following \cite{sohn2020fixmatch} the loss at each given input is:
\begin{equation}
\delta (\gG_{\hat\theta}(\alpha_1(\vy))) \gL (\gG_\theta(\alpha_2(\vy)), \text{round}(\gG_{\hat\theta}(\alpha_3(\vy))))
\end{equation}
where $\alpha_1, \alpha_2, \alpha_3$ are augmentations of the input $\vy$, respectively, $\text{round}$ converts the soft sigmoid predictions to hard labels, and $\delta$ is a threshold function that zeros out the loss when $\gG_{\hat\theta}(\alpha_1(\vy))$ falls below a certain confidence threshold (when the pseudo prediction is close to 0.5). In our alignment setting, since augmentations applied to the original image $\vx_0$ are not standard in image generation, we instead use the noisy images at different time steps $t$ as the augmentations, preventing confirmation bias from the same input. This requires 2 additional forward passes of the diffusion model for each input. However, we also found that using no augmentations is already effective.

\subsection{Fast convergence of DeDPO under slow convergence of the synthetic preference.}
We now discuss the second theoretical property of our debiased loss which relates the speed of convergence of the learning process with respect to the number of samples and the quality of the synthetic preference model $\hat \gG$.
Technically \cite{zhu2023doubly} articulates this property as convergence of the gradient of the debiased loss to a saddle point. 
We state an alternative convergence result in the excess risk in the spirit of \cite{foster2023orthogonal}:
\begin{theorem}[Informal]
Under certain regularity conditions, namely
\begin{enumerate}
    \item the synthetic preference model $\hat \gG$ is trained on data independent of that for DPO training of $\theta$
    \item both the diffusion model $\theta$ and the preference model $\hat \gG$ are sufficiently smooth
    \item $\theta$ belongs to a VC-subgraph class
\end{enumerate}
then with high probability, the model $\theta$ learned by our \Approach~loss satisfies:
\begin{equation}
    \label{eq:dr-convergence}
\|\theta - \theta^*\|_2^2 \leq O\left(\frac{1}{n_l + n_u}\right) + O \left( \|\hat \gG - \gG^*\|_4^4 \right)
\end{equation}
where $\theta^*$ and $\gG^*$ are optimal versions of $\theta$ and $\hat \gG$, and $\|\cdot\|_2$ and $\|\cdot\|_4$ are $L_2$ and $L_4$ norms: $\|f\|_2 = (\E[f(x)^2])^{1/2}$ and $\|f\|_4 = (\E[f(x)^4])^{1/4}$.
\label{thm:dr}
\end{theorem}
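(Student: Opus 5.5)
We would prove Theorem~\ref{thm:dr} by casting it as orthogonal statistical learning in the sense of \cite{foster2023orthogonal}. The nuisance is the synthetic preference model $\hat\gG$, and the target is $\theta$.

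\textbf{Population risk.} Write the population debiased risk as
\[
R(\theta, g) = \E\big[\gL(\gG_\theta(\vy), g(\vy))\big] + \E\big[\gL(\gG_\theta(\vy), \rvz) - \gL(\gG_\theta(\vy), g(\vy))\big].
\]
The two expectations are taken under the marginal of all data and of labeled data respectively; under uniform sampling these coincide. By the unbiasedness proposition, $R(\theta, g) = \E[\gL(\gG_\theta(\vy),\rvz)]$ for every $g$, so $\theta^* = \arg\min_\theta R(\theta, \gG^*)$ is the target. Condition~1 (sample splitting) lets us treat $\hat\gG$ as fixed when we analyze the empirical minimizer, conditioning on the data used to fit $\hat\gG$.

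\textbf{Step 1: Neyman orthogonality.} We show that the cross derivative $D_g D_\theta R(\theta^*, g)[\theta-\theta^*, g-\gG^*]$ vanishes at $g=\gG^*$. The label enters $\gL(a,b)$ linearly in $b$: $\gL(a,b) = -b\log a - (1-b)\log(1-a)$. Hence the $g$-dependence of $R$ comes only through $\E[(g-\hat g)\cdot(\text{score terms})]$ with opposite signs from the two pieces. Its first-order variation in $g$ cancels exactly when the labeled and unlabeled marginals agree, and this cancellation holds even away from $\gG^*$ in the limit. The $\|\hat\gG-\gG^*\|^4$ term then comes from second-order remainders. These arise from the empirical $n_l/(n_l+n_u)$ reweighting and from the threshold and rounding nonlinearity when the pseudo-label is passed through round or $\delta$. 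We bound each remainder by Cauchy--Schwarz as a product of $\|\gG_\theta-\gG_{\theta^*}\|$-type factors and $\|\hat\gG-\gG^*\|_4^2$. The smoothness in Condition~2 is used to Taylor-expand to second order with bounded remainder.

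\textbf{Step 2: strong convexity and identifiability.} The logistic loss is strongly convex in its logit on a bounded range. Smoothness of the parametrization $\theta \mapsto \gG_\theta$ with a nondegenerate Jacobian, which we assume as part of the regularity conditions, converts risk excess into parameter error:
\[
R(\theta,\gG^*) - R(\theta^*,\gG^*) \ge \lambda\|\theta-\theta^*\|_2^2 .
\]

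\textbf{Step 3: uniform concentration.} Condition~3 (VC-subgraph) yields localized Rademacher or critical-radius bounds of order $1/(n_l+n_u)$, up to logs, for the empirical process of the combined loss with fixed $\hat\gG$. The correction term adds variance scaled by $(n_l+n_u)/n_l$, which we absorb into constants.

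\textbf{Combining.} Following Theorem~1 of \cite{foster2023orthogonal}, we get
\[
\lambda\|\theta-\theta^*\|^2 \le O\!\big(1/(n_l+n_u)\big) + O\!\big(\|\hat\gG-\gG^*\|_4^2\,\|\theta-\theta^*\|\big) + O\!\big(\|\hat\gG-\gG^*\|_4^4\big).
\]
AM--GM then gives \cref{eq:dr-convergence}.

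\textbf{Main obstacle.} We expect Step~1 to be the hardest. The linearity of the loss in the label makes the orthogonal first-order term vanish. The difficulty is getting a genuinely \emph{second}-order remainder that is measured in $L_4$, rather than a first-order $L_2$ term from the mismatch between labeled and unlabeled empirical marginals. We would first handle this under exact uniform sampling. We would then control the residual marginal mismatch as an $O(1/\sqrt{n_l})$ empirical fluctuation, which is independent of $\hat\gG$ thanks to sample splitting and so is folded into the statistical term.
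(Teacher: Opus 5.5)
\textbf{There is a genuine gap in Step 1.} Your Steps 2 and 3 and the closing AM--GM step match the skeleton of the paper's proof: strong convexity of the logistic loss on bounded logits, a VC-type fast rate for the empirical-process term, then absorbing $\|\theta-\theta^*\|_2^2$ into the left side. Step 1, which you rightly flag as the crux, does not work as described.

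Your own observation that $R(\theta,g)=\E[\gL(\gG_\theta(\vy),\rvz)]$ for \emph{every} $g$ means that, with the exact weight $\frac{n_l+n_u}{n_l}$, the population risk does not depend on the nuisance at all. So there is no second-order remainder in $g$ to expand, and neither source you name can supply one.
\begin{itemize}
\item Condition on the independent sample that produced $\hat\gG$. The pseudo-label part of the loss then enters as $\E_{n_l+n_u}h-\E_{n_l}h$ for a function $h$ of the input that does not involve $\rvz$, and this has mean zero for \emph{any} such $h$. Hence composing $\hat\gG$ with round or a threshold $\delta$ creates no bias of any order. These maps are also discontinuous, so Condition~2 gives you no Taylor expansion through them. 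They also come from the self-training pipeline, which violates Condition~1.
\item The labeled/unlabeled empirical mismatch is a mean-zero fluctuation in which $\hat\gG$ affects only the variance. It therefore feeds the $O(1/(n_l+n_u))$ statistical term, not an $n$-free term $\|\hat\gG-\gG^*\|_4^2\|\theta-\theta^*\|_2$.
\end{itemize}
The inequality you attribute to Foster--Syrgkanis is therefore asserted, not derived.

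\textbf{The missing idea is a product of two different nuisance errors.} In the paper, the quartic term is a doubly robust product of two distinct nuisance errors, not the square of one. The formal proof replaces the known weight by a possibly estimated, covariate-dependent importance weight $\gamma(\vy)$. It writes the loss as $\gL(\theta(\vy),w(g,\gamma))$ with $w(g,\gamma)=g+\gamma(\vz-g)$. Strong convexity and first-order optimality of $\hat\theta$ and $\theta^*$ then give $\lambda\|\hat\theta-\theta^*\|_2^2\le\E[(w(\hat g,\hat\gamma)-w(g^*,\gamma^*))(\hat\theta-\theta^*)]$, plus the empirical-process term.

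The paper then proceeds in three moves:
\begin{itemize}
\item It expands $w(\hat g,\hat\gamma)-w(g^*,\gamma^*)=(\hat\gamma-\gamma^*)(\vz-g^*)+(\hat g-g^*)(1-\gamma^*)+(\hat g-g^*)(\gamma^*-\hat\gamma)$.
\item It kills the first two terms using $\E[\vz-g^*\mid\vy]=0$ and $\E[1-\gamma^*\mid\vy]=0$.
\item It bounds the cross term by H\"older with exponents $(4,4,2)$, and invokes the extra assumption $\|\hat\gamma-\gamma^*\|_4\le\|\hat g-g^*\|_4$ so that AM--GM yields $\|\hat g-g^*\|_4^4$.
\end{itemize}
Without an estimated weight and that rate assumption, your argument has no source for the $\|\hat\gG-\gG^*\|_4^4$ term.

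\textbf{How to repair it.} If you stay with exactly known uniform weights, the honest fix is to delete Step 1. The $g$-invariance of $R$, together with your Steps 2 and 3, already gives $\|\hat\theta-\theta^*\|_2^2=O(1/(n_l+n_u))$, with a constant depending on $(n_l+n_u)/n_l$. This implies the stated bound trivially, but you should then say explicitly that the nuisance term is vacuous in that case.

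A minor point: the paper's $\|\theta-\theta^*\|_2$ is the $L_2$ norm of the logit functions. Your nondegenerate-Jacobian assumption is therefore unnecessary.
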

The proof and assumptions mostly follow \cite{foster2023orthogonal}. The most exciting aspect of \cref{thm:dr} is the 4th-power of right side's second term $\|\hat \gG - \gG^*\|$, because it indicates that $\hat \gG$ needs to converge only at a much slower rate $\frac{1}{(n_l + n_u)^{1/4}}$ for the target parameter $\theta$ to achieve the optimal convergence rate $\frac{1}{n_l + n_u}$. Therefore, our diffusion model can learn efficiently and robustly to the error in $\gG$. Notably, the first assumption that the synthetic preference model $\hat \gG$ is trained on an independent set of data from $\theta$ is violated by the self-training technique. This has practical consequences because if both $\theta$ and $\hat \gG$ are trained on the labeled data, intuitively the correction terms in \cref{prop:geometric} will overfit to 0, providing no signal for correction. Therefore, pretrained vision–language models are the preferred choice, and our setting is well suited to them.

\section{Experiments}
\subsection{Experimental setup}
\myheading{Datasets.}
For training, we use the FiFA dataset~\cite{yang2024automated}, a high-quality subset of human preference data filtered from the Pick-a-Pic v2~\cite{Kirstain2023PickaPicAO} and HPSv2~\cite{wu2023human}. Following~\cite{yang2024automated}, we use the FiFA split containing 5K generated image pairs with human-labeled preferences, which was shown to outperform training on the full Pick-a-Pic v2 dataset. To study the semi-supervised setting with scarce human feedback, we partition FiFA into 25\% labeled and 75\% unlabeled pairs. The labeled subset is used with ground-truth human preferences, while the remaining 75\% unlabeled pairs are assigned synthetic preferences from an AI feedback\footnote{https://huggingface.co/Qwen/Qwen2.5-VL-7B-Instruct}. For evaluation, we follow standard text-to-image alignment benchmarks and use two prompt sets at test time: (i) PartiPrompt~\cite{yu2022parti}, which contains 1{,}632 diverse textual prompts, and (ii) the HPSv2 benchmark prompts~\cite{wu2023human}, which consist of 4 categories with 800 prompts each, for a total of 3{,}200 prompts.

\myheading{Evaluation protocols.}
We compare \Approach~against supervised fine-tuning (SFT) and Diffusion-DPO~\cite{wallace2024diffusion} using two backbone diffusion models: SD1.5 and SDXL. We use standard automatic preference metrics. On the PartiPrompt test set, we report PickScore~\cite{Kirstain2023PickaPicAO} (PS) and Aesthetic Score\footnote{https://github.com/christophschuhmann/improved-aesthetic-predictor} (AS), computed over images generated from all prompts. On the HPSv2 benchmark~\cite{wu2023human}, we report the average HPSv2 score across its four categories, following the original protocol.

\myheading{Implementation details.}
\textbf{Training)} We train on the 25\% human-labeled and 75\% pseudo-labeled FiFA-5K \cite{yang2024automated} and HPDv2 \cite{wu2023human} split described above. For SD1.5, we fine-tune for 1{,}000 optimization steps using AdamW \cite{loshchilov2017AdamW}, a learning rate of $1\times 10^{-7}$, a piecewise-constant learning rate schedule with 10 warmup steps, a global batch size of 128, 2 GPUs, and 8 gradient accumulation steps. For SDXL, we train for 100 steps using the AdaFactor \cite{shazeer2018adafactor} optimizer with a learning rate of $2\times 10^{-8}$, 5 warmup steps, and the same piecewise-constant schedule. \textbf{Inference)} At test time, we sample images using 20 denoising steps with DDPMSolver++ \cite{lu2025dpm} as the noise scheduler and a classifier-free guidance (CFG) scale of 7.5 for all methods and backbones.

\begin{table*}[t]

\small
\caption{Main results for SD1.5 and SDXL trained on two preference datasets, FiFA-5K \cite{yang2024automated} and HPDv2 \cite{wu2023human}. The best results are \textbf{bolded}, while the second-best results are \underline{underlined} and zero un-preference pairs indicating using only ground truth preferences.}
\label{tab:main}
\setlength{\tabcolsep}{4pt}
\centering
\begin{tabular}{lllccccc}
\toprule
Training set              & Model                 & Method              & \# Pref. pairs & \# Unpref. pairs & PS \cite{Kirstain2023PickaPicAO} (\(\uparrow\))& HPSv2 Avg. \cite{wu2023human} (\(\uparrow\))  & AS (\(\uparrow\))    \\
\midrule
\multirow{10}{*}{FiFA-5K \cite{yang2024automated}} 
                          & \multirow{5}{*}{SD1.5} 
                          & \cellcolor{gray!30} SFT           & \cellcolor{gray!30} 1250           & \cellcolor{gray!30} 0                & \cellcolor{gray!30} 21.64         & \cellcolor{gray!30} 27.62          & \cellcolor{gray!30} \textbf{5.43}           \\
                          &                        & \cellcolor{gray!30} DPO \cite{wallace2024diffusion} + 25\%          & \cellcolor{gray!30} 1250           & \cellcolor{gray!30} 0                & \cellcolor{gray!30} 21.76         & \cellcolor{gray!30} 27.76          & \cellcolor{gray!30} \underline{5.38}           \\
                          &                        & \cellcolor{gray!30} DPO \cite{wallace2024diffusion} + 100\%         & \cellcolor{gray!30} 5000           & \cellcolor{gray!30} 0                & \cellcolor{gray!30} \underline{21.88}         & \cellcolor{gray!30} \underline{27.79}          & \cellcolor{gray!30} \underline{5.38}           \\
                          &                        &   \cellcolor{white} DPO \cite{wallace2024diffusion} + synthetic pref.  & \cellcolor{white} 1250           & \cellcolor{white} 3750             & \cellcolor{white} 21.71         &\cellcolor{white} 27.39          &\cellcolor{white} 5.33           \\
                          &                        & \cellcolor{green!30} \Approach~+ synthetic pref. & \cellcolor{green!30} 1250           & \cellcolor{green!30} 3750             & \cellcolor{green!30} \bf 21.91         & \cellcolor{green!30} \bf 27.80          & \cellcolor{green!30} \bf 5.43           \\ 
\cmidrule(lr){2-8}
                          & \multirow{5}{*}{SDXL}  
                          & \cellcolor{gray!30} SFT           & \cellcolor{gray!30} 1250           & \cellcolor{gray!30} 0                & \cellcolor{gray!30} 22.01         & \cellcolor{gray!30} 27.87          & \cellcolor{gray!30} 5.60           \\
                          &                        & \cellcolor{gray!30} DPO \cite{wallace2024diffusion} + 25\%          & \cellcolor{gray!30} 1250           & \cellcolor{gray!30} 0 & \cellcolor{gray!30} 22.57                & \cellcolor{gray!30} 28.34         & \cellcolor{gray!30} \underline{5.66}                   \\
                          &                        & \cellcolor{gray!30} DPO \cite{wallace2024diffusion} + 100\%         & \cellcolor{gray!30} 5000           & \cellcolor{gray!30} 0                & \cellcolor{gray!30} \bf 22.84         & \cellcolor{gray!30} \textbf{28.76}          & \cellcolor{gray!30} \textbf{5.77}           \\
                          &                        & \cellcolor{white} DPO \cite{wallace2024diffusion} + synthetic pref.  & \cellcolor{white} 1250           & \cellcolor{white} 3750      &     \cellcolor{white} 22.61         & \cellcolor{white} \underline{28.71}          & \cellcolor{white} \underline{5.66}           \\
                          &                        & \cellcolor{green!30} \Approach~+ synthetic pref. & \cellcolor{green!30} 1250           & \cellcolor{green!30} 3750             & \cellcolor{green!30} \underline{22.83}         & \cellcolor{green!30} \textbf{28.76}         & \cellcolor{green!30} \textbf{5.77}          \\
                          
\midrule
\multirow{10}{*}{HPDv2 \cite{wu2023human}} 
                          & \multirow{5}{*}{SD1.5} 
                          & \cellcolor{gray!30} SFT           & \cellcolor{gray!30} 1250           & \cellcolor{gray!30} 0                & \cellcolor{gray!30} 21.48         & \cellcolor{gray!30} 26.94          & \cellcolor{gray!30} 5.26           \\
                          &                        & \cellcolor{gray!30} DPO \cite{wallace2024diffusion} + 25\%          & \cellcolor{gray!30} 1250           & \cellcolor{gray!30} 0                & \cellcolor{gray!30} 21.61         & \cellcolor{gray!30} \underline{27.63}           & \cellcolor{gray!30} 5.38          \\
                          &                        & \cellcolor{gray!30} DPO \cite{wallace2024diffusion} + 100\%         & \cellcolor{gray!30} 5000           & \cellcolor{gray!30} 0                & \cellcolor{gray!30} 21.61         & \cellcolor{gray!30} 27.60          & \cellcolor{gray!30} \underline{5.38}           \\
                          &                        & \cellcolor{white} DPO \cite{wallace2024diffusion} + synthetic pref.  & \cellcolor{white} 1250           & \cellcolor{white} 3750             & \cellcolor{white} \bf 21.69         & \cellcolor{white}  27.62       & \cellcolor{white}   5.40         \\
                          &                        & \cellcolor{green!30} \Approach~+ synthetic pref. & \cellcolor{green!30} 1250           & \cellcolor{green!30} 3750             & \cellcolor{green!30} \underline{21.66}         & \cellcolor{green!30} \bf 27.70          & \cellcolor{green!30} \bf 5.40           \\ 
\cmidrule(lr){2-8}
                          & \multirow{5}{*}{SDXL}  
                          & \cellcolor{gray!30} SFT           & \cellcolor{gray!30} 1250           & \cellcolor{gray!30} 0                & \cellcolor{gray!30} 21.60         & \cellcolor{gray!30} 27.26          & \cellcolor{gray!30} 5.36           \\
                          &                        & \cellcolor{gray!30} DPO \cite{wallace2024diffusion} + 25\%          & \cellcolor{gray!30} 1250           & \cellcolor{gray!30} 0                & \cellcolor{gray!30} 22.48         & \cellcolor{gray!30} 28.44          & \cellcolor{gray!30} \underline{5.71}           \\
                          &                        & \cellcolor{gray!30} DPO \cite{wallace2024diffusion} + 100\%         & \cellcolor{gray!30} 5000           & \cellcolor{gray!30} 0                & \cellcolor{gray!30} \underline{22.53}         & \cellcolor{gray!30} 28.45          & \cellcolor{gray!30} \underline{5.71}           \\
                          &                        & \cellcolor{white} DPO + synthetic pref.  & \cellcolor{white} 1250           &\cellcolor{white} 3750             & \cellcolor{white} 22.52         & \cellcolor{white} \underline{28.53}          & \cellcolor{white} \underline{5.71}           \\
                          &                        & \cellcolor{green!30} \Approach~+ synthetic pref. & \cellcolor{green!30} 1250           & \cellcolor{green!30} 3750             & \cellcolor{green!30} \textbf{22.55}         & \cellcolor{green!30} \textbf{28.56}          & \cellcolor{green!30} \textbf{5.74}          \\
\bottomrule

    \end{tabular}

\end{table*}

\subsection{Results}
\myheading{Quantitative results.}
Table~\ref{tab:main} shows that DeDPO with 25\% human and 75\% synthetic preferences is competitive with, and sometimes slightly better than, the fully supervised DPO baseline that uses 100\% human labels. On FiFA-5K, DeDPO trained on SD1.5 even edges out DPO+100\% in both PickScore and HPSv2 (21.91 vs.\ 21.88 PS; 27.80 vs.\ 27.79 HPSv2), and on SDXL it essentially matches the fully supervised model (22.83 vs.\ 22.84 PS with identical HPSv2). On HPDv2, DeDPO remains on par with DPO+100\% for SD1.5 and slightly improves alignment for SDXL in both metrics, despite using only a quarter of the human annotations. More importantly, in our challenging semi-supervised setting, mixing human and synthetic labels with DPO consistently underperforms DeDPO under the same label budget. For example, on FiFA-5K, DPO with synthetic preferences is noticeably worse than DeDPO for both backbones (e.g., a gap of about 0.2 PickScore on SD1.5), and similar trends hold on HPDv2. This indicates that simply adding synthetic preferences can degrade alignment, whereas DeDPO is robust to these noises and can turn synthetic labels into a net gain relative to both semi-supervised DPO and the fully-supervised DPO upper bound.

\myheading{Qualitative results.}
\cref{fig:showstopper} compares SDXL samples from DeDPO trained with 25\% human and 75\% synthetic preferences (top row) to DPO trained with 25\% human labels and to the fully supervised DPO model with 100\% human labels. Across all four prompts, DeDPO better captures fine-grained details and compositional constraints. For example, in the “astronaut Lincoln” prompt, DeDPO renders a clear opaque helmet and a lunar setting, whereas others omit the helmet and has weaker space context. In the “I chased an elephant in my pajamas” and “carriage on the moon” prompts, DeDPO produces scenes that more faithfully reflect the described narrative and background elements (e.g., the movement context, the Statue of Liberty and pyramids), while DPO often simplifies or drops key components. For the steampunk pharaoh, DeDPO generates sharper styling cues such as the goggles and leather jacket. Overall, these examples illustrate that, even with substantially fewer human-labeled preferences, DeDPO attains or surpasses the visual quality and prompt adherence of DPO trained on the full high-quality dataset.
{
\setlength{\tabcolsep}{1pt}
\begin{figure*}
    \centering
    \begin{tabular}{m{1.5em}<{\centering}m{4cm}<{\centering}m{4cm}<{\centering}m{4cm}<{\centering}m{4cm}<{\centering}}
         & A statue of Abraham Lincoln wearing an opaque and shiny astronaut's helmet. The statue sits on the moon, with the planet Earth in the sky.
 & One morning I chased an elephant in my pajamas
 & Horses pulling a carriage on the moon's surface, with the Statue of Liberty and Great Pyramid in the background. The Planet Earth can be seen in the sky.
 & A photograph of a portrait of a statue of a pharaoh wearing steampunk glasses, white t-shirt and leather jacket. \\
         \rotatebox{90}{SDXL DeDPO + synthetic pref} & \includegraphics[width=4cm]{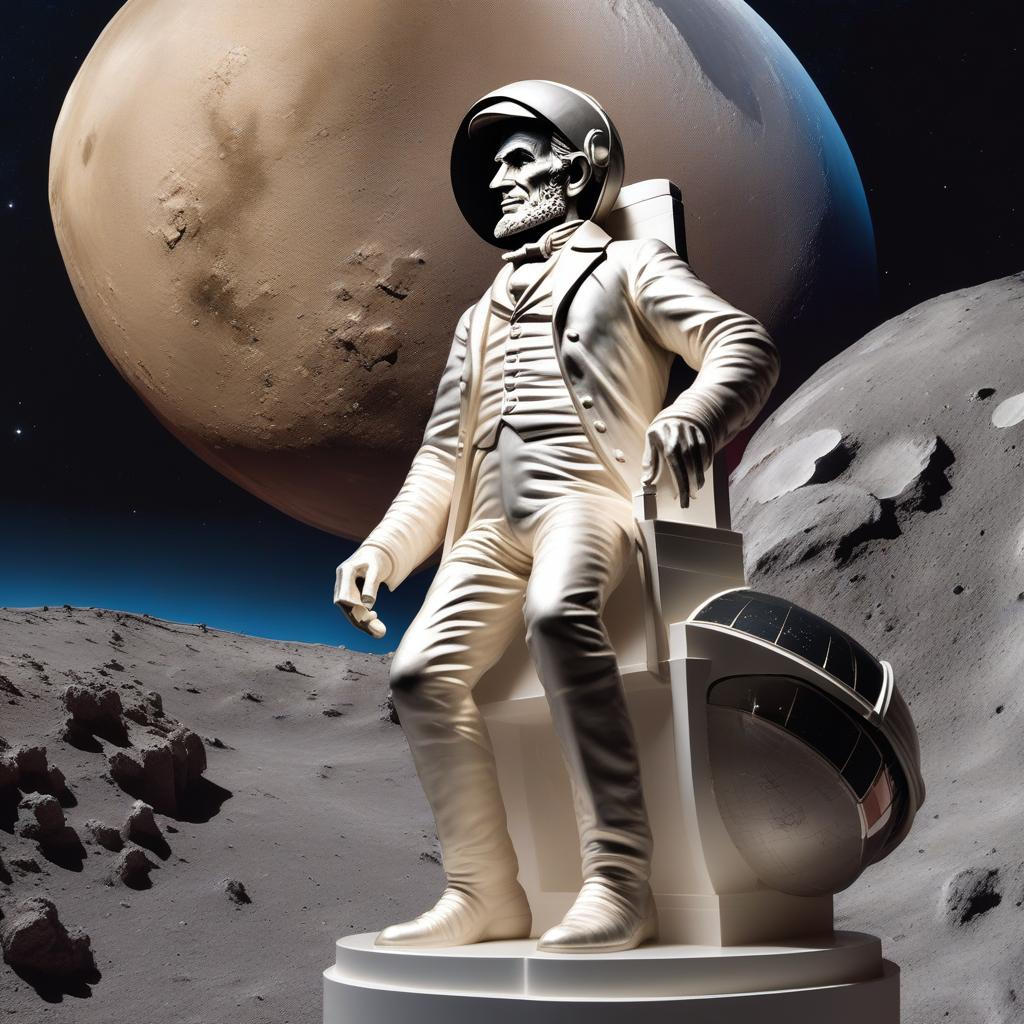} & \includegraphics[width=4cm]{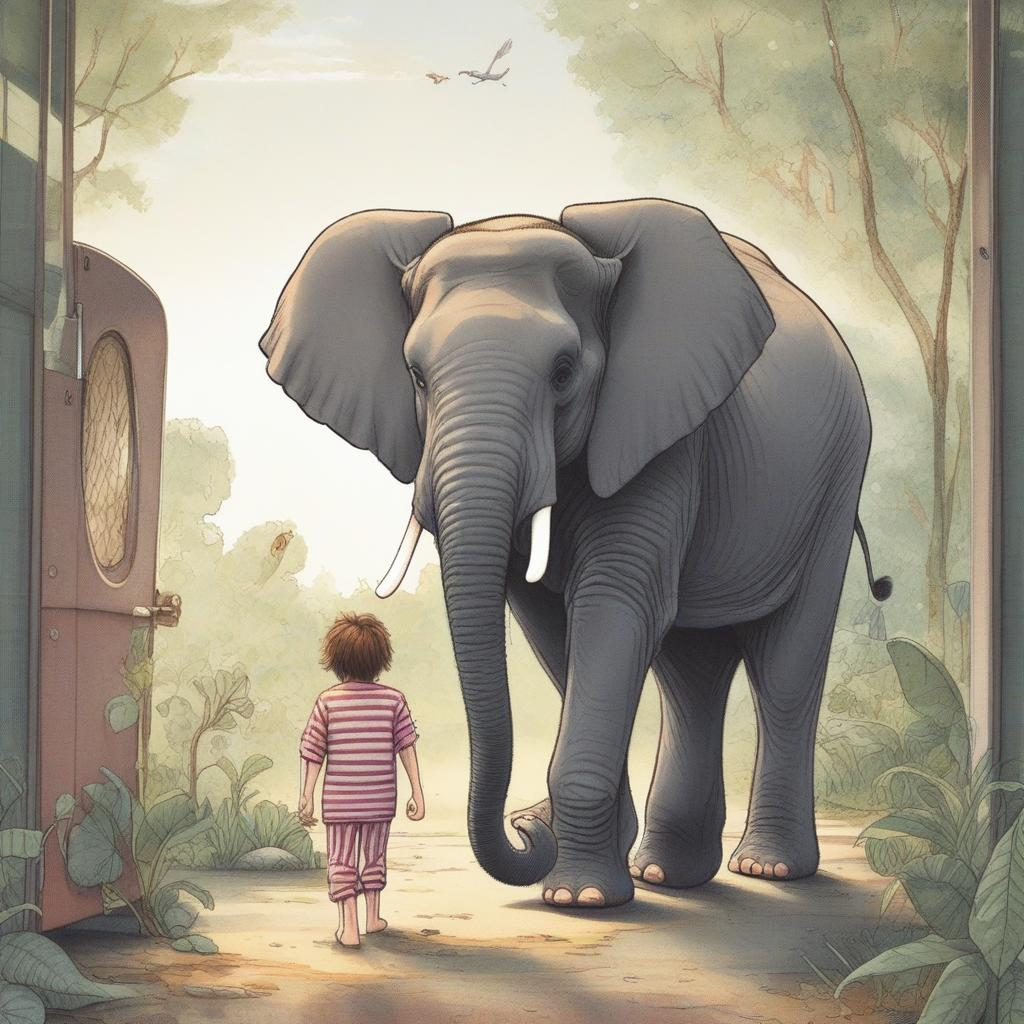} & \includegraphics[width=4cm]{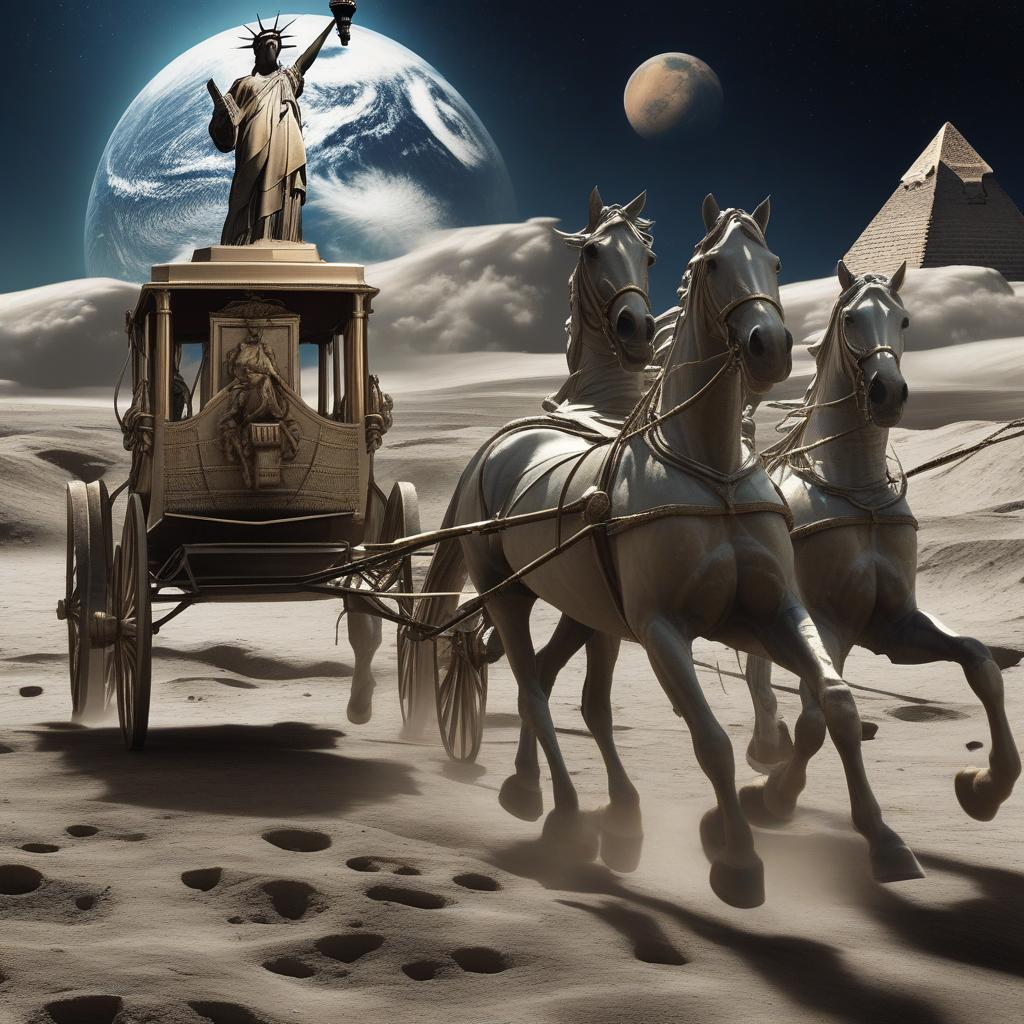} & \includegraphics[width=4cm]{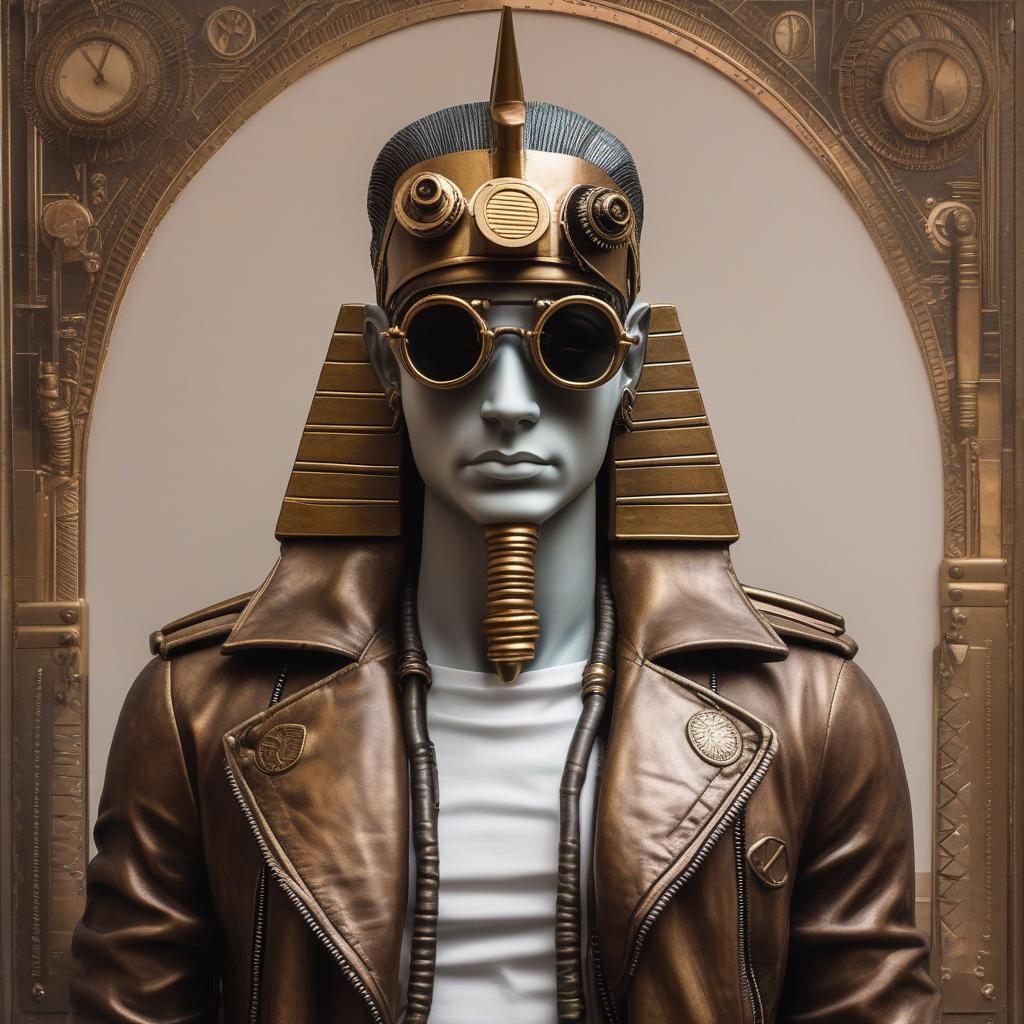} \\[-2pt]
         \rotatebox{90}{SDXL DPO + 25\%} & \includegraphics[width=4cm]{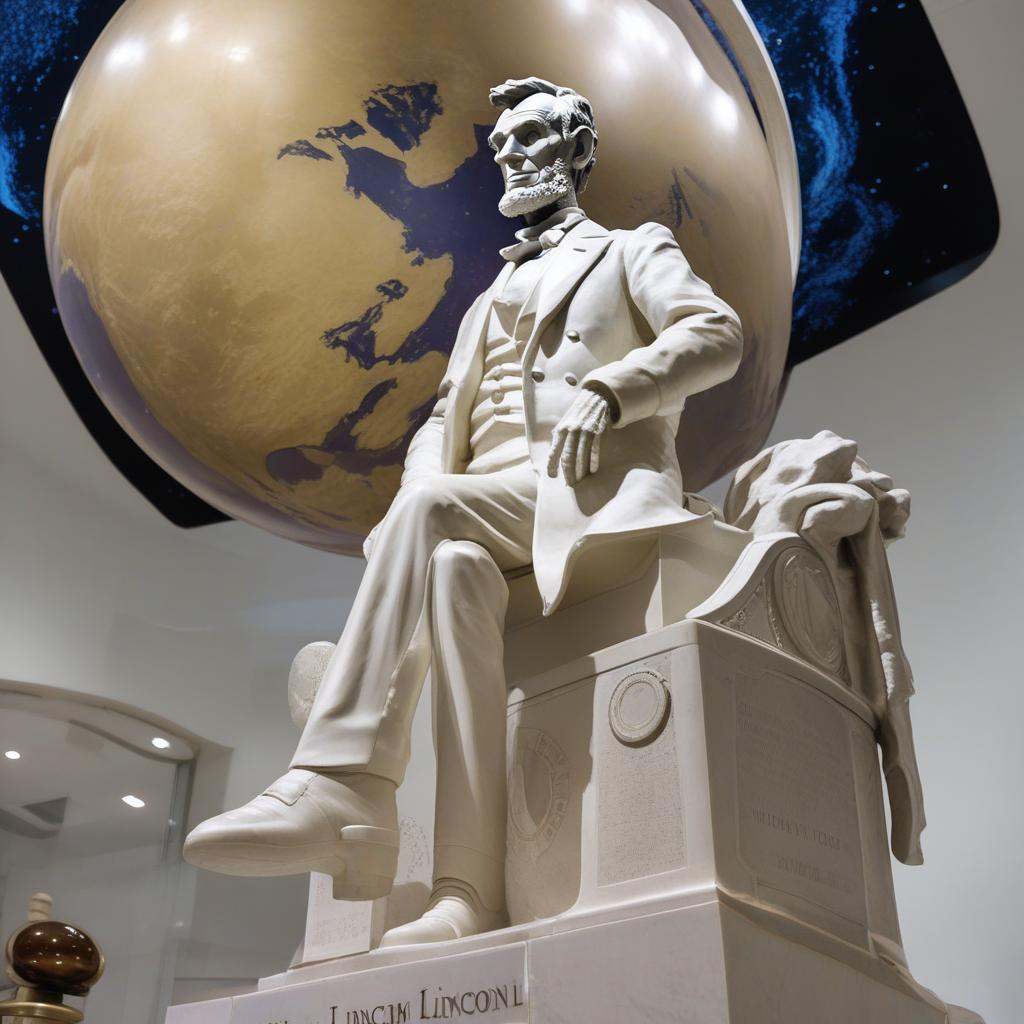} & \includegraphics[width=4cm]{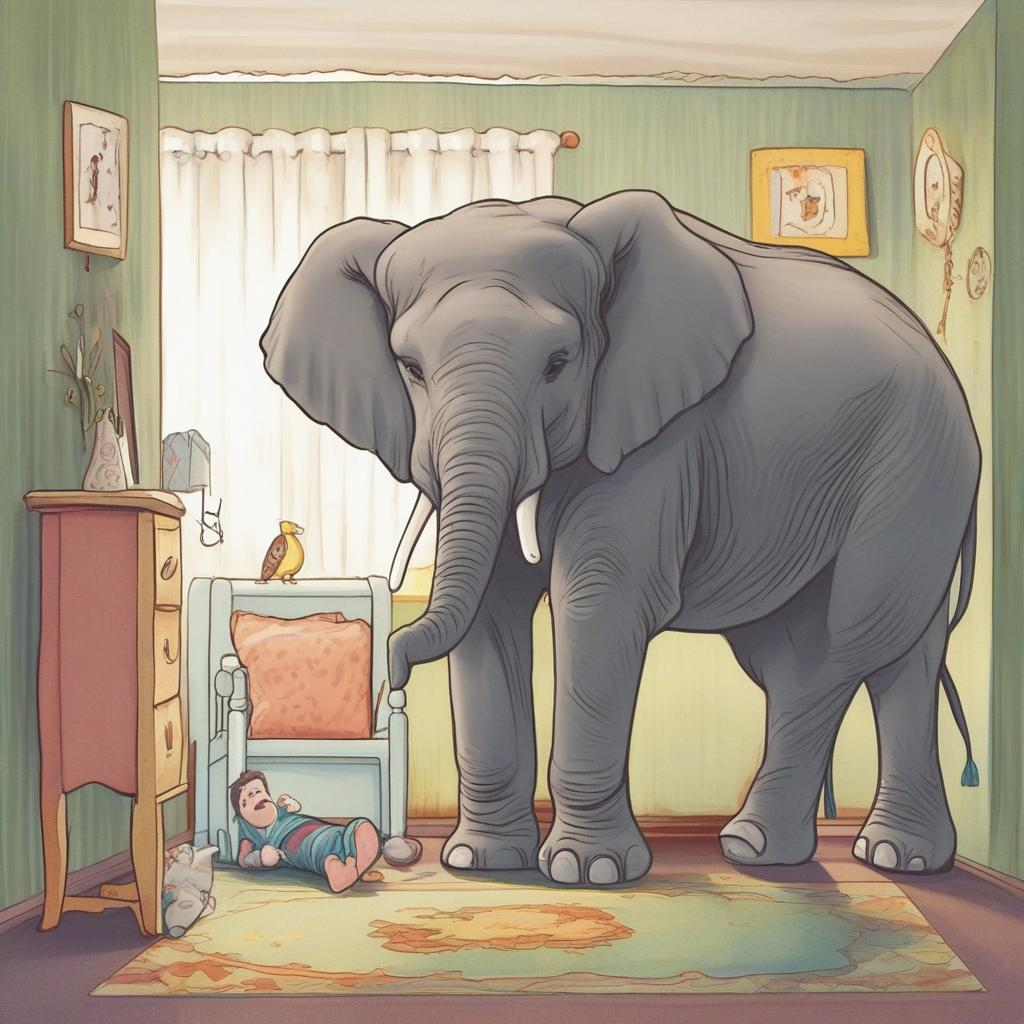} & \includegraphics[width=4cm]{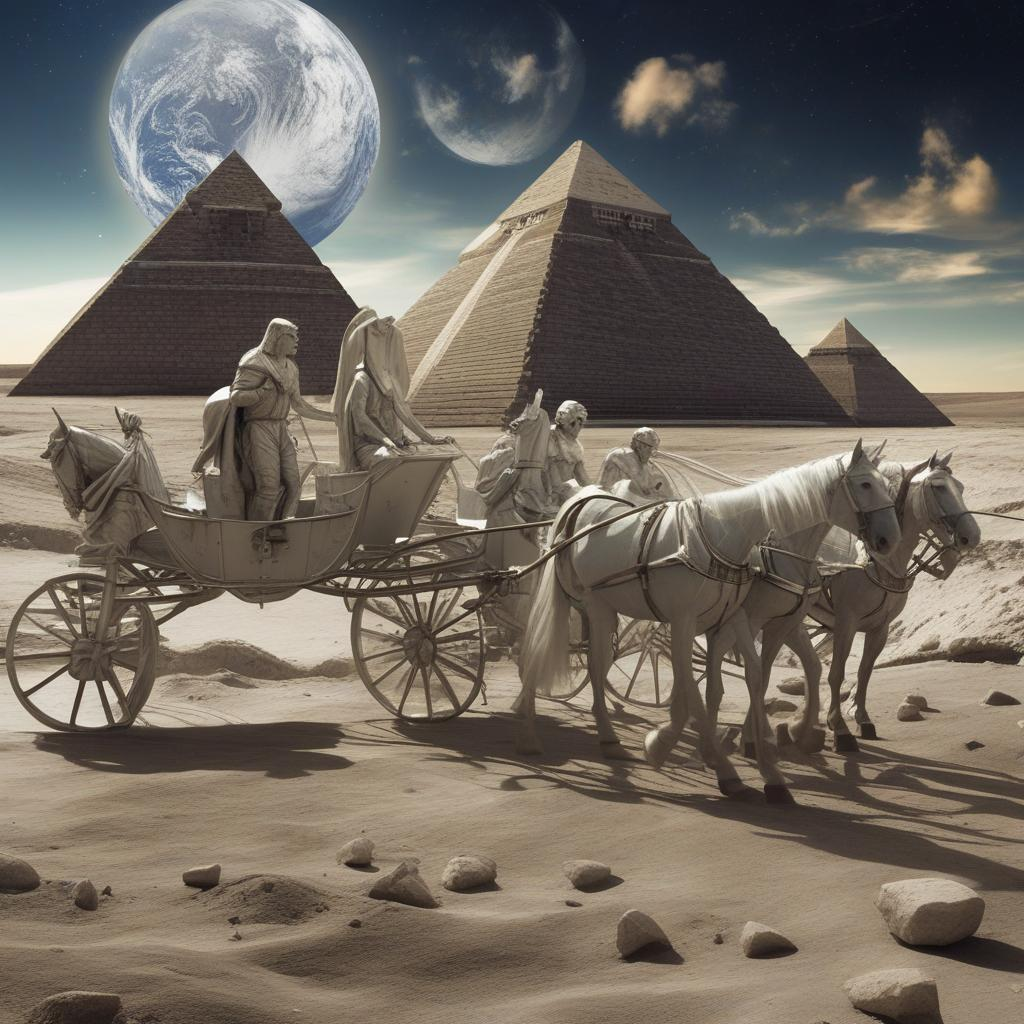} & \includegraphics[width=4cm]{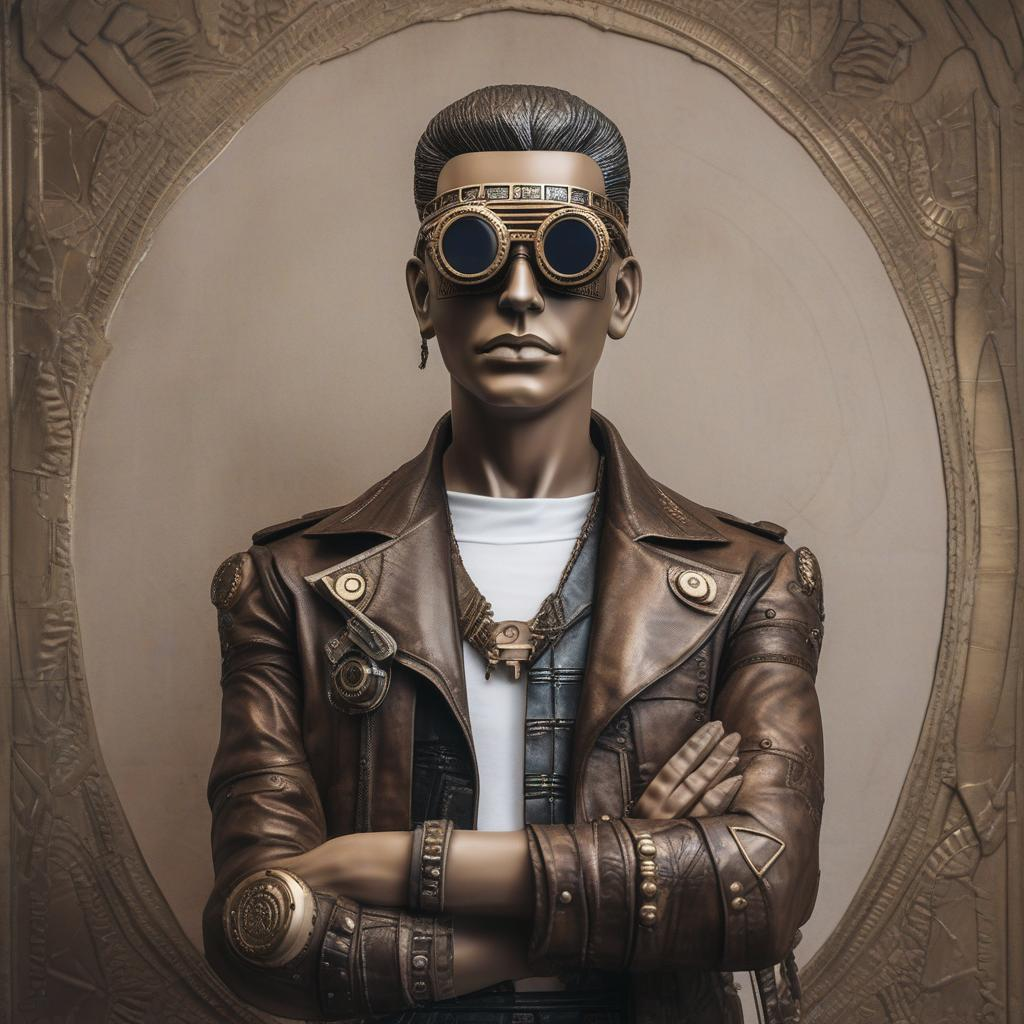} \\[-2pt]
         \rotatebox{90}{SDXL DPO + 100\%} & \includegraphics[width=4cm]{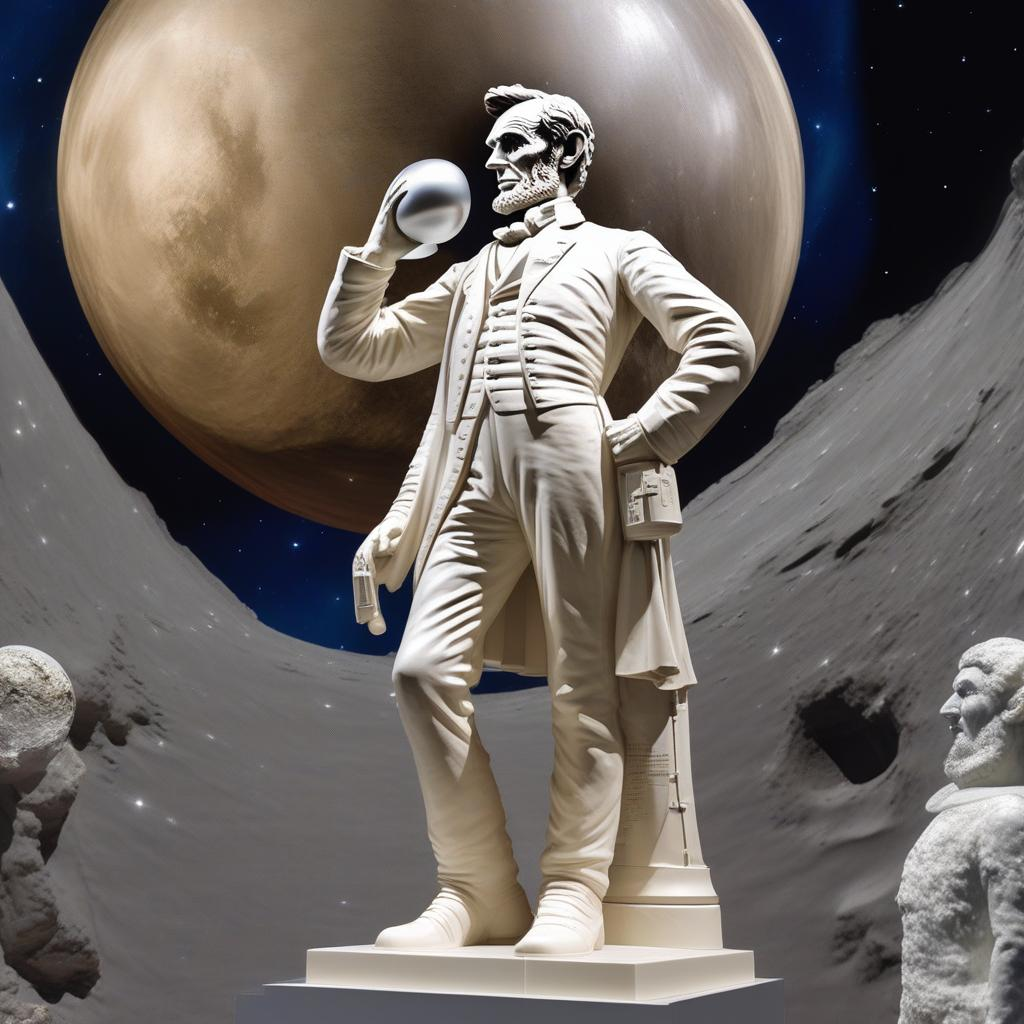} & \includegraphics[width=4cm]{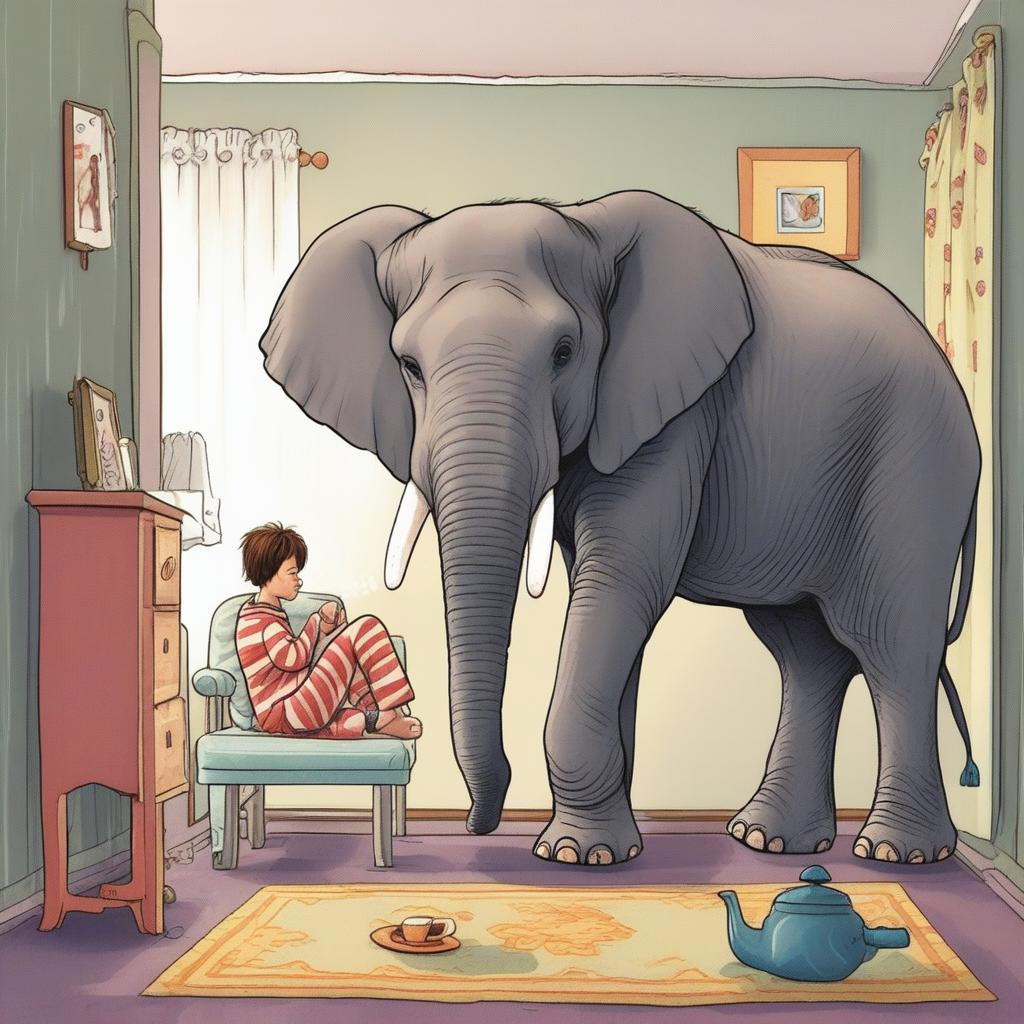} & \includegraphics[width=4cm]{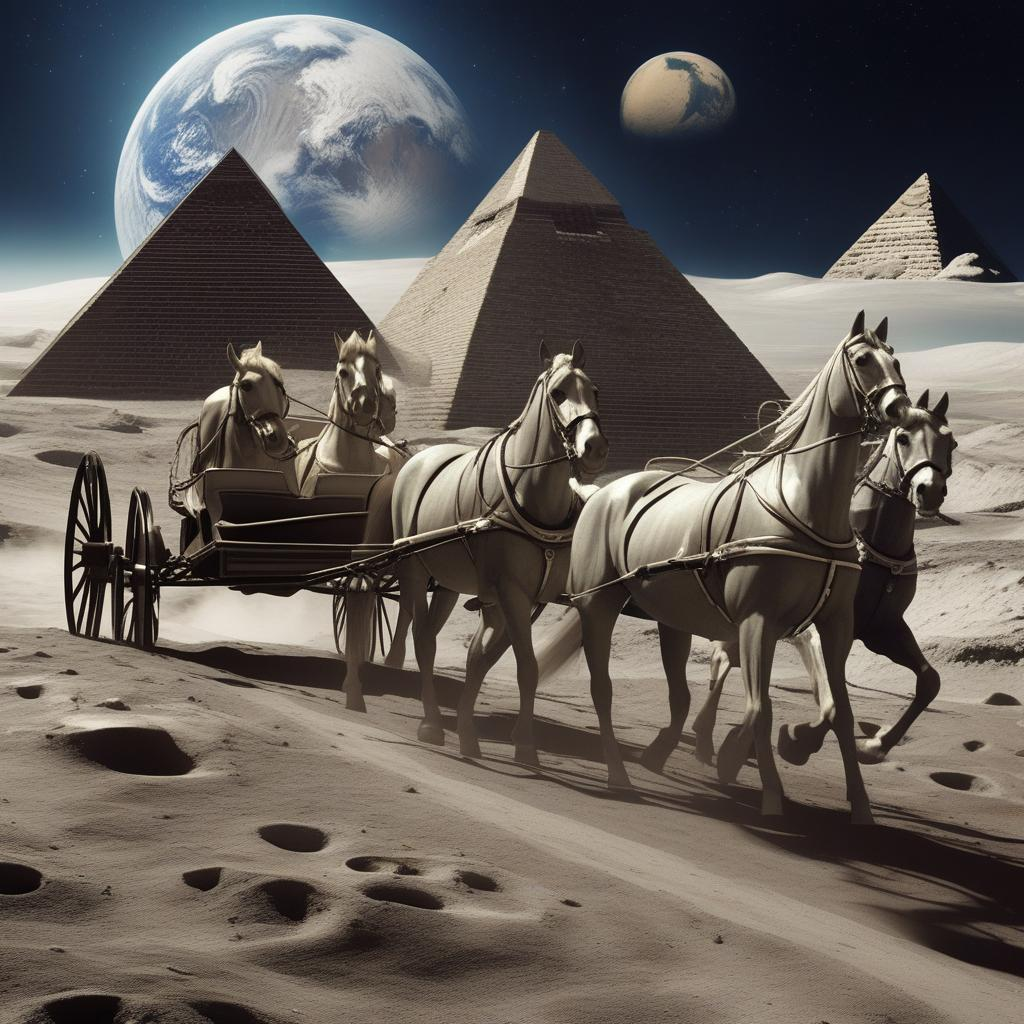} & \includegraphics[width=4cm]{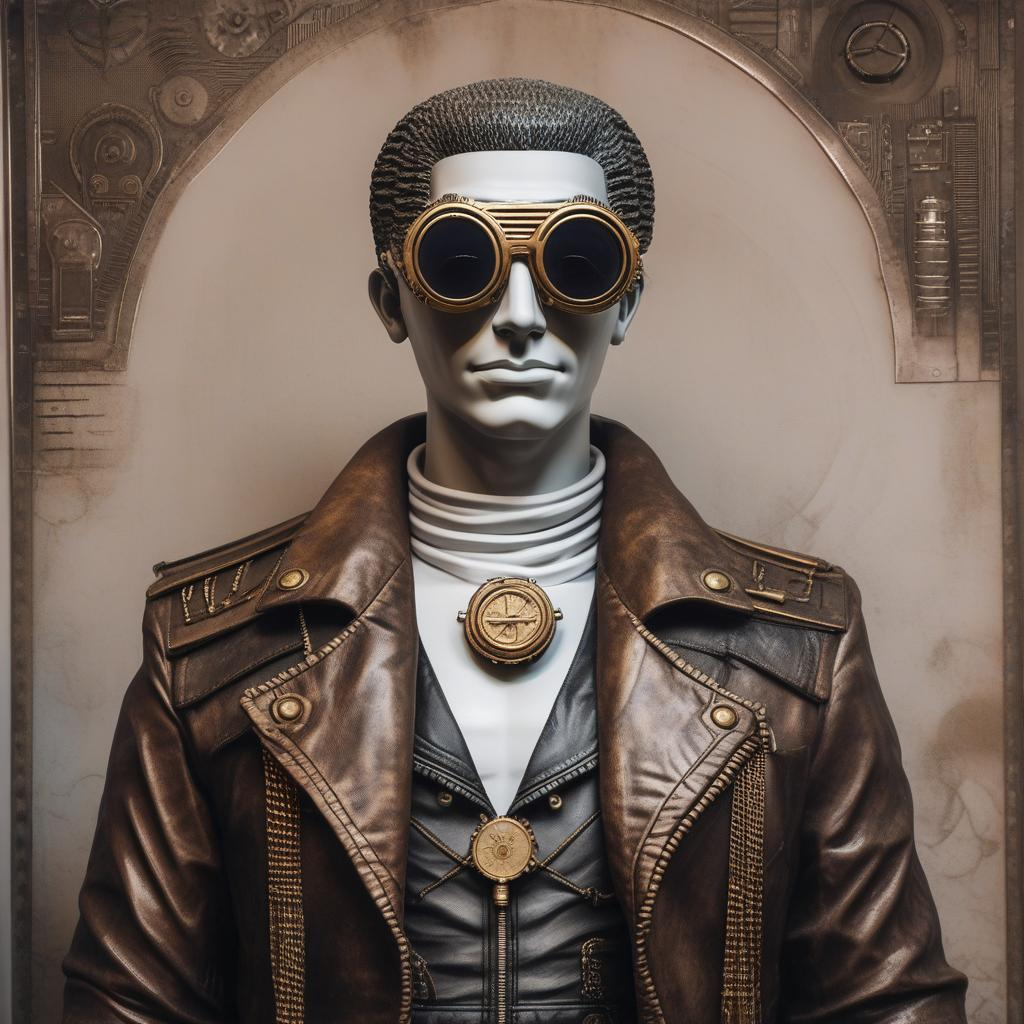} \\[-2pt]
    \end{tabular}
    \caption{Our method achieves performance comparable to models trained on high-quality labels, particularly excelling at capturing subtle details within complex prompts. DeDPO successfully renders challenging elements like the astronaut helmet on Abraham Lincoln's statue, the Statue of Liberty on the lunar carriage, and specific styling details in the pharaoh's steampunk attire.  }
    \label{fig:showstopper}
\end{figure*}
} 

\myheading{AI as a judge.}
To assess qualitative alignment, we adopt an AI-as-a-judge protocol following the human evaluation setup of Diffusion-DPO~\cite{wallace2024diffusion}. We use Gemini 2.5 Flash as the judge: for each PartiPrompt~\cite{yu2022parti} text prompt, we generate one image from each model, present the pair in random order, and ask the judge to pick a winner (or tie) along three criteria: \emph{General Preference}, \emph{Visual Appeal}, and \emph{Prompt Alignment}, along with a short rationale. We run two SDXL comparisons on 200 randomly sampled prompts: (a) \Approach{} (25\% human, 75\% synthetic) vs.\ fully supervised DPO (100\% human), and (b) \Approach{} vs.\ DPO under the same 25\%/75\% semi-supervised setting. As shown in \cref{fig:ai_judge}, the judge prefers \Approach{} over fully supervised DPO in most cases for General Preference and Visual Appeal (about 57\% vs.\ 27–31\% wins), with similar Prompt Alignment and many ties. Against semi-supervised DPO, the advantage is even clearer: \Approach{} wins roughly 76–78\% of comparisons in General Preference and Visual Appeal, and substantially more often in Prompt Alignment.
\begin{figure*}[t]
    \centering
    \begin{subfigure}{0.48\linewidth}
        \centering
        \includegraphics[width=\linewidth]{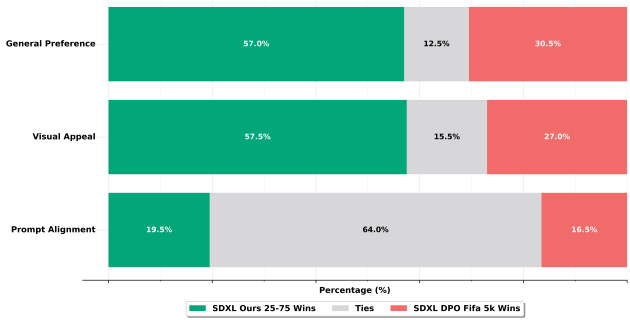}
        \caption{\Approach{} vs.\ DPO (100\% human).}
        \label{fig:ai_judge_75}
    \end{subfigure}\hfill
    \begin{subfigure}{0.48\linewidth}
        \centering
        \includegraphics[width=\linewidth]{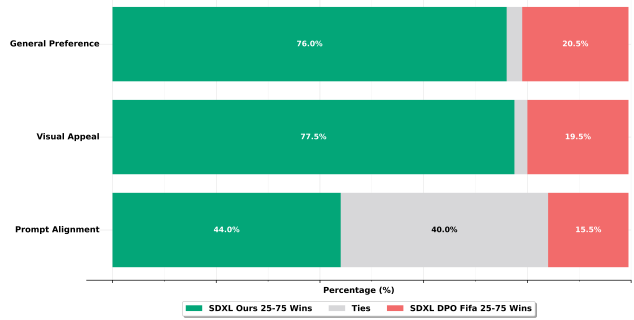}
        \caption{\Approach{} vs.\ DPO (25\% human - 75\% synthetic).}
        \label{fig:ai_judge_100}
    \end{subfigure}
    \caption{AI-as-a-judge comparison on 200 PartiPrompt prompts~\cite{yu2022parti}. Gemini 2.5 Flash evaluates each image pair on General Preference, Visual Appeal, and Prompt Alignment. The judge can cast either a win or a tie vote for the image pair.}
    \label{fig:ai_judge}
\end{figure*}






\section{Ablation}
\textbf{Effect of synthetic preference sources.}
\cref{tab:synthetic_sources_fifa,tab:synthetic_sources_hpd} compare three sources of synthetic preferences: self-training, CLIP, and the multimodal LLM Qwen. Across both FiFA-5K and HPDv2, \Approach\ is better than DPO and usually yields a small but consistent gain for each feedback source (e.g., on FiFA-5K with SDv1.5 and CLIP, PickScore improves from 21.46 to 21.90; with Qwen, from 21.71 to 21.91; on HPDv2 with SDXL and Qwen, from 22.52 to 22.55). The choice of feedback source also matters. On both datasets, Qwen-based preferences generally give the best performance, particularly for SDXL on FiFA-5K (22.61 $\rightarrow$ 22.83), while CLIP and self-training are slightly weaker. This is consistent with a label-quality analysis on a held-out subset: Qwen’s pseudo labels agree with human preferences in just over approximately 80\% of pairs, whereas CLIP matches only about 50\% of the time, effectively behaving like a noisy guess. 

\begin{table}[t]
\footnotesize
    \centering
    \caption{Effect of different synthetic preference sources in FiFA-5K.}
    \label{tab:synthetic_sources_fifa} 
    \begin{tabular}{llccc}
        \toprule
         Model & Method & Self-Training \cite{zhu2023doubly} & CLIP\cite{radford2021clipscore} & Qwen \cite{qwen2.5-VL}  \\
         \midrule
         SDv1.5  & DPO \cite{wallace2024diffusion}  & 21.79 & 21.46 & 21.71 \\ 
         \rowcolor{green!30} SDv1.5 & \Approach & 21.84 & 21.90 & 21.91 \\ 
         SDXL  & DPO \cite{wallace2024diffusion}  & 22.60 & 22.50 & 22.61  \\ 
         \rowcolor{green!30} SDXL & \Approach & 22.62 & 22.57 & 22.83  \\
         \bottomrule
         & 
    \end{tabular}
\end{table}

\begin{table}[t]
\footnotesize
    \centering
    \caption{Effect of different synthetic preference sources in HPDv2.}
    \label{tab:synthetic_sources_hpd} 
    \begin{tabular}{llccc}
        \toprule
         Model & Method & Self-Training \cite{zhu2023doubly} & CLIP\cite{radford2021clipscore} & Qwen \cite{qwen2.5-VL}  \\
         \midrule
         SDv1.5  & DPO \cite{wallace2024diffusion}  & 21.60 & 21.53 & 21.69 \\ 
         \rowcolor{green!30} SDv1.5 & \Approach & 21.65 & 21.60 & 21.66 \\ 
         SDXL  & DPO \cite{wallace2024diffusion}  & 22.50 & 22.45 & 22.52  \\ 
         \rowcolor{green!30} SDXL & \Approach & 22.52 & 22.44 & 22.55  \\
         \bottomrule
         & 
    \end{tabular}
\end{table}

\myheading{Effect of training set size.}
\cref{tab:ablate_train_size} examines how performance changes as we scale the total number of training pairs while keeping the labeled–unlabeled ratio fixed at 25\%–75\%. Here, we train SD1.5 and report PickScore on the PartiPrompt benchmark. Across all sizes, \Approach~consistently outperforms DPO by roughly $0.1$–$0.2$ PickScore and already achieves 21.9x at the smallest 5K setting. For DPO, performance improves only slightly from 5K to 20K (21.71 $\rightarrow$ 21.79) and then saturates or even drops at 50K and 100K, whereas \Approach~remains stable. A plausible explanation, consistent with FiFA~\cite{yang2024automated}, is that enlarging the dataset brings in proportionally more noisy preferences from Pick-a-Pic-v2; DPO is more sensitive to this noise, while \Approach~is more robust.

\begin{table}[th]
\footnotesize
    \centering
    \caption{Effect of training set size when keeping the ratio labeled-unlabeled at 25\%-75\%}
    \label{tab:ablate_train_size}
    \begin{tabular}{lccccc}
    \toprule
         Method & 5K & 10K & 20K & 50K & 100K \\
    \midrule
         DPO \cite{wallace2024diffusion} & 21.71 & 21.76 & 21.79 & 21.76 & 21.7 \\
         \Approach & 21.91 & 21.86 & 21.87 & 21.85 & 21.87  \\ 
    \bottomrule
    \end{tabular}
\end{table}

\myheading{Effect of unlabeled data size.}
In \cref{fig:unlabeled_size} we fix the number of human-labeled pairs to 1.2K and vary the amount of synthetically-labeled unlabeled data when training SD1.5, evaluating PickScore on PartiPrompt. \Approach~consistently outperforms vanilla DPO for all unlabeled sizes, with a gap of roughly $0.1$–$0.2$ PickScore. For DPO, performance peaks at a moderate amount of unlabeled data (8K: 21.82) but then degrades as we add more synthetic pairs (down to 21.70 at 38K and 21.69 at 98K), indicating that large quantities of noisy synthetic labels can hurt alignment. In contrast, DeDPO's performance remains stable as the number of unlabeled samples scales: it marginally improves from 21.91 to 21.96 (38K samples) and sees only a minimal reduction to 21.89 (98K samples). Throughout this scaling, DeDPO consistently outperforms all DPO variants.

\begin{figure}[th]
    \centering
    \includegraphics[width=0.9\linewidth]{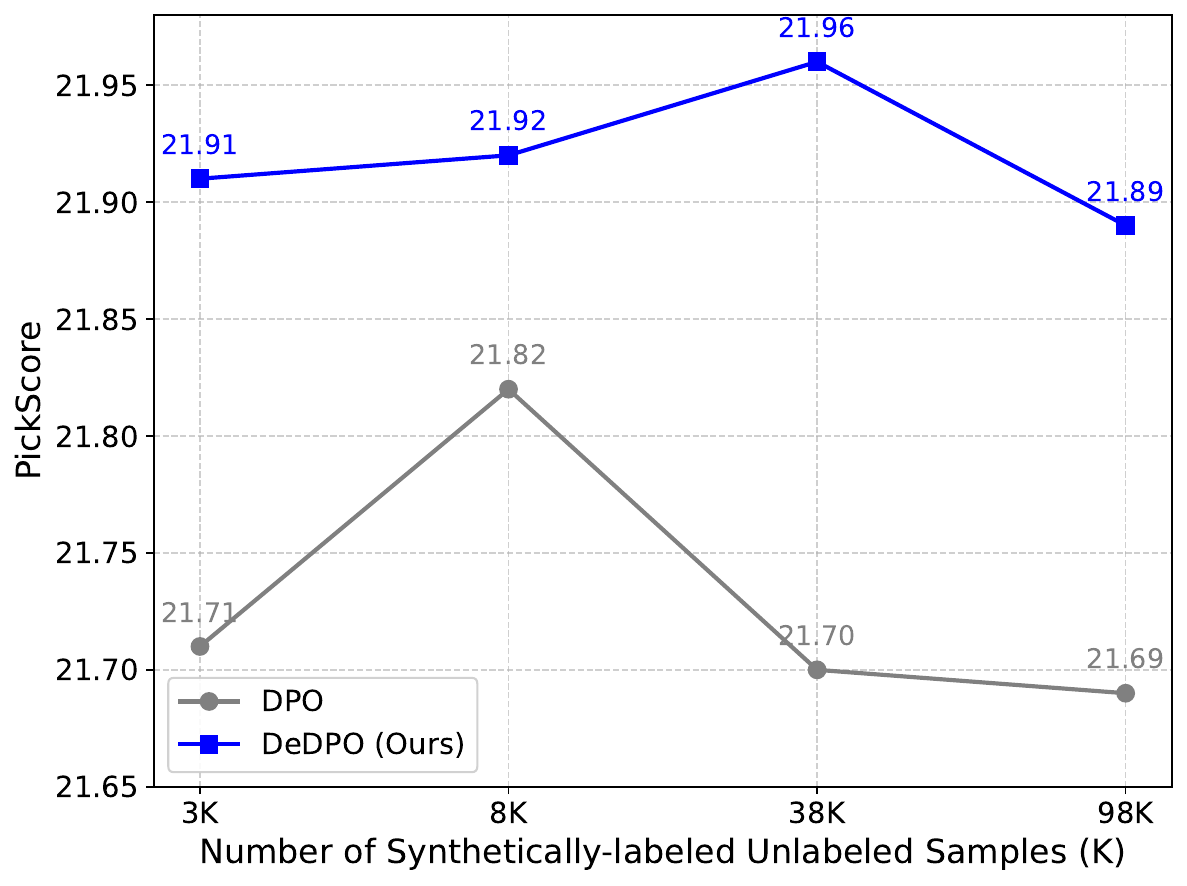}
    \caption{Ablation on the size of the unlabeled data set. We fix the labeled data size to 1.2K pairs and vary the number of unlabeled pairs to 3K, 8K, 38K and 98K. DeDPO clearly show superior results across varying unlabeled data set size.}
    \label{fig:unlabeled_size}
\end{figure}

\vspace{-10pt}
\section{Conclusion \& Discussion}
In this work, we address the scalability bottleneck of preference optimization by proposing Debiased DPO (DeDPO). Motivated by the prohibitive cost of human supervision, DeDPO integrates a debiased estimation method from causal inference to robustly leverage large volumes of inexpensive, yet noisy, synthetic preference labels. Empirical results across multiple diffusion models and synthetic annotators demonstrate that DeDPO significantly outperforms naive synthetic training and matches or exceeds models trained on fully human-labeled data. Consequently, our approach facilitates scalable human-AI alignment, proving that properly debiased synthetic feedback is a reliable and cost-effective surrogate for extensive human labeling.

{
    \small
    \bibliographystyle{ieeenat_fullname}
    \bibliography{main}
}
\clearpage
\setcounter{page}{1}
\maketitlesupplementary

In this supplementary materials, we provide additional quantitative and qualitative evidences, along with formal theoretical proofs, to further support our work on DeDPO.

\section*{DeDPO}
\label{sec:more_dedpo}
\myheading{Further discussion on related works.} As noted in the related work section, our method is different from other existing robustness works. In particular \citet{mitchellnote, chowdhury2024provably} assume that the flip rate is known, while \citet{wu2024towards, xu2025robust} use distributionally robust optimization to guard against the worst-case scenario. Comparing to the former, our setting is far from random flip, as synthetic feedback from other models will have clear error patterns rather than randomly wrong guesses. Comparing to the later, we can beat the worst base thanks to the labeled dataset correcting the error patterns made by the label models. We show this comparison in paragraph \ref{robust-baselines}. 
Secondly, \cite{zhu2023doubly} is a closely related work that applies the same debiased loss to image classification. However, a crucial difference is that they instantiate the framework with self-training only, where synthetic classification labels are taken from the same model. Self-training violates the \cref{assumption:sample-splitting}, also known as sample splitting or cross-fitting, required for \cref{thm:dr} to work for a flexible class of models. Intuitively, if we train a preference model on the same set of labeled data to create synthetic preference labels for the rest of the dataset, we effectively overfit on this labeled dataset, resulting in no correction in \cref{prop:geometric} and so all labels collapse to $\hat \gG$, making the second training vacuous. Theoretically, we can satisfy \cref{assumption:sample-splitting} by dividing the labeled dataset into 2, use the first set to train $\hat \gG$ and synthetically label the unlabeled data, then use the second set to train $\theta$. This works in theory because asymptotically there is no difference between $n$ and $n/2$, but in practice it leads to significant degradation on the quality of both $\hat \gG$ and $\theta$, especially in our label-sparse setting. Besides, we also emphasize that we are the first robust DPO work in the generative vision domain, as all previously mentioned robust works are either in the language domain, or as classification in the vision domain.

\section*{Theoretical Proofs}
\paragraph{Proof of~\cref{prop:geometric}.}

\begin{proof}
First, we rewrite the second term of \cref{eq:dr}:
\begin{equation}
\begin{aligned}
&L_\text{\Approach}(\theta) = \E_{n_l+n_u} \gL (\gG_\theta(\vy), \hat \gG(\vy)) \\
&+ \E_{n_l} (\gL (\gG_\theta(\vy_l), \vz_l) - \gL (\gG_\theta(\vy_l), \hat \gG(\vy_l))) \\
&=\E_{n_l+n_u} \gL (\gG_\theta(\vy), \hat \gG(\vy)) \\
&+\E_{n_l+n_u} \frac{n_l+n_u}{n_l}\vr \left(  \gL (\gG_\theta(\vy), \vz) -  \gL (\gG_\theta(\vy), \hat \gG(\vy))\right) \\
\end{aligned}
\end{equation}
where $\vr$ is a binary variable which is 1 if $\vy$ is labeled and 0 otherwise. Second, we note that the binary cross-entropy loss is linear in the second argument i.e. the target label. Therefore, now that all terms share the same prediction $\gG_\theta(\vy)$, we can add, subtract, and multiply the targets with the weight $\frac{n_l+n_u}{n_l}\vr$ to get:
\begin{equation}
\begin{aligned}
&L_\text{\Approach}(\theta) = \\
& \E_{n_l+n_u} \gL \left(\gG_\theta(\vy), \hat \gG(\vy) + \frac{n_l+n_u}{n_l}\vr (  \vz -  \hat \gG(\vy))\right)\\
&= \E_{n_l+n_u} \gL \left(\gG_\theta(\vy), \hat\vw\right)
\end{aligned}
\end{equation}
for $\hat\vw$ defined in the original claim.
\end{proof}

\paragraph{Proof of~\cref{thm:dr}.}
To state the theorem formally, we need several pre-processing steps. First, we change notations and use $\theta(\vy)$ to mean the logits in \cref{eq:dpo} as a function of the input $\vy$, repeated here for convenience:
\begin{equation}
    \begin{aligned}
    \theta(\vy) = 
    &\Big(- \beta\big(\|\veps^w - \veps_\theta(\vx_t^w, t)\|_2^2 - \|\veps^w - \veps_\tref(\vx_t^w, t)\|_2^2 \notag\\
    &\quad - \|\veps^l - \veps_\theta(\vx_t^l, t)\|_2^2 + \|\veps^l - \veps_\tref(\vx_t^l, t)\|_2^2\big)
    \Big)
    \end{aligned}
\end{equation}
and $g(\vy)$ to mean the synthetic preference function as a function of the input $y$, replacing $\hat \gG(\vy)$ in the original claim. These changes are to make our claim more consistent with \citep{foster2023orthogonal}. In addition, we introduce $\gamma(\vy)$ to mean the importance weight as a function of the input $\vy$. While we have assumed that the importance weight is known and constant i.e. $\gamma(\vy) = \frac{n_l+n_u}{n_l}\vr$, we aim to state the theorem for the general case where $\gamma(\vy)$ is unknown and must be estimated and the sampling depends on the input covariate $\vy$. While we use $\theta$ to denote the general function, we also use $\hat\theta$, $\theta^*$ to denote the learned and the optimal functions, respectively. We do the same for $g$ and $\hat g$. Thus, \cref{eq:dr-convergence} of the theorem is now:
\begin{equation}
    \|\hat\theta - \theta^*\|_2^2 \le O(1/n) + O( \|\hat g - g^*\|_4^4)
\end{equation}
Therefore, we are bounding the difference between \textit{outputs} of the learned $\hat\theta$ and the optimal $\theta^*$ in the $L_2$ norm. We also write our population-level loss as 
\begin{equation}
    \begin{aligned}
    L(\theta,g,\gamma) &= \E [L_\text{\Approach}(\theta,g,\gamma)] \\
    &= \E [\gL (\theta(\vy), g(\vy) + \gamma(\vy) (\vz - g(\vy)))] \\
    &= \E [\gL (\theta(\vy), w(g, \gamma)(\vy))]
    \end{aligned}
\end{equation}
 \ie a \textit{functional} of $\theta, g$ and $\gamma$.

We next state the technical assumptions needed:
\begin{assumption}
 $\hat g(\vy)$ is trained on data independent of that of $\theta(\vy)$.
 \label{assumption:sample-splitting}
\end{assumption}
\begin{assumption}
    \label{assumption:bounded-logits}
    the logits $\theta(\vy)$ is bounded in $[-M, M]$ for all $\vy$ and constant $M$.
\end{assumption}
\begin{assumption}
    \label{assumption:fast-propensity}
    The propensity estimate converges at least as fast as the preference estimate i.e. $\|\gamma - \gamma^*\|_4 \le \|\hat g - g^*\|_4$
\end{assumption}
\begin{assumption}
    \label{assumption:vc}
    The logits $\theta(\vy)$ belongs to a VC-subgraph class of bounded VC-dimension $d$. This means that empirical risk minimization of $\theta$ achieves fast convergence rate with respect to the population risk. Specifically, with probability at least $1 - \delta$, the following inequality holds:
    $$Rate \le C\frac{d\log n + \log(1/\delta)}{n} = O(d/n)$$
\end{assumption}

Recall the binary cross-entropy loss with logits $a$ and target $b$:
\begin{equation}
    \gL(a, b) = - b \log \sigma(a) + (1 - b) \log (1 - \sigma(a))
\end{equation}
and the directional derivative of a functional $L$ of $f$ in the direction of another function $f'$ is:
\begin{equation}
    D_f L(f)[f - f'] = \frac{d}{dt} L(f + t(f - f')) \bigg|_{t=0}
\end{equation}
Before proceeding to the proof, we need the following lemma, which concerns the strong convexity of the loss $\gL$ with respect to $\theta$:
\begin{lemma}[Strong convexity] For any $\theta, \theta'$, the following strong convexity inequality holds:
\begin{equation}
    \begin{aligned}
    &D_{\theta} L(\theta,g,\gamma)[\theta - \theta'] - D_{\theta} L(\theta',g,\gamma)[\theta - \theta'] \\
    &\ge \lambda \|\theta - \theta'\|_2^2
    \end{aligned}
\end{equation}
where $\lambda = \sigma(M)(1 - \sigma(M))$ is a constant by \cref{assumption:bounded-logits}.
\label{lemma:strong-convexity}
\end{lemma}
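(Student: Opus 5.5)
The plan is to compute the directional derivative of the population functional $L(\theta,g,\gamma)=\E[\gL(\theta(\vy),w(g,\gamma)(\vy))]$ explicitly. The key observation is that, because the binary cross-entropy is linear in its target argument, the pseudo-target $w$ drops out of the difference of derivatives. The bound then reduces to a pointwise lower bound on the derivative of the sigmoid over $[-M,M]$.

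\textbf{Step 1: derivative in the logit.} I read the loss with its intended sign,
\[
\gL(a,b)=-b\log\sigma(a)-(1-b)\log(1-\sigma(a)).
\]
Then $\partial_a\gL(a,b)=\sigma(a)-b$, and this holds for any real $b$, including pseudo-targets $w$ outside $[0,1]$. Fix a direction $h=\theta-\theta'$. By \cref{assumption:bounded-logits} all logits are bounded, so the integrand and its $t$-derivative along $\theta+th$ for small $t$ are dominated. This allows differentiation under $\E$, which gives
\[
D_\theta L(\theta,g,\gamma)[h]=\E\big[(\sigma(\theta(\vy))-w(\vy))\,h(\vy)\big].
\]
If needed, I would use the sign convention $\theta+t(\theta-\theta')$ in the definition of the directional derivative, which is the one the lemma intends.

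\textbf{Step 2: cancellation of $w$.} Subtracting the same expression evaluated at $\theta'$, the term $\E[w(\vy)h(\vy)]$ cancels exactly, since $w=g+\gamma(\vz-g)$ does not depend on $\theta$. This leaves
\[
\E\big[(\sigma(\theta(\vy))-\sigma(\theta'(\vy)))(\theta(\vy)-\theta'(\vy))\big].
\]
This step is why the constant $\lambda$ is independent of $g$ and $\gamma$.

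\textbf{Step 3: pointwise mean value bound.} For each $\vy$, the mean value theorem gives $\sigma(\theta(\vy))-\sigma(\theta'(\vy))=\sigma'(\xi)(\theta(\vy)-\theta'(\vy))$ for some $\xi$ between $\theta(\vy)$ and $\theta'(\vy)$. Both endpoints lie in $[-M,M]$ by \cref{assumption:bounded-logits}, so $\xi\in[-M,M]$. The function $\sigma'(\xi)=\sigma(\xi)(1-\sigma(\xi))$ is even and decreasing in $|\xi|$, so $\sigma'(\xi)\ge\sigma(M)(1-\sigma(M))=\lambda$. Hence the integrand is at least $\lambda(\theta(\vy)-\theta'(\vy))^2$. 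Taking expectations yields $\lambda\|\theta-\theta'\|_2^2$.

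The main (mild) obstacle is bookkeeping rather than mathematics. Three points need care. First, the sign typo in $\gL$ must be corrected. Second, linearity in $b$ must be allowed to extend to targets outside $[0,1]$; the cancellation of $w$ does not need $\gL$ to be a proper cross-entropy in $b$. Third, \cref{assumption:bounded-logits} must be applied to both $\theta$ and $\theta'$, so that the mean-value point stays in $[-M,M]$; this is where boundedness is essential. Without it, $\sigma'$ has no positive lower bound and strong convexity fails.
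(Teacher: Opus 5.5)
Your proof is correct and follows essentially the same route as the paper: both use $\partial_\theta \gL = \sigma(\theta(\vy)) - w$, so the target $w$ drops out, and then bound the sigmoid's derivative below by $\lambda = \sigma(M)(1-\sigma(M))$ on $[-M,M]$. The only difference is that you apply the mean value theorem pointwise to $\sigma$, while the paper integrates the second derivative along a path via the fundamental theorem of calculus; your version also avoids the paper's typo in that path, $\theta_t = \theta + t(\theta-\theta')$, which should be $\theta' + t(\theta-\theta')$ so that it stays in $[-M,M]$.
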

\begin{proof}
    The derivative of the point-wise binary cross-entropy loss $\gL(\theta,g,\gamma)$ with respect to $\theta$ is:
    \begin{equation}
    \partial_\theta \gL(\theta,g,\gamma) = \sigma(\theta(\vy)) - w(\vy,\vz)
    \end{equation}
    and the second derivative is:
    \begin{equation}
    \partial_\theta^2 \gL(\theta,g,\gamma) = \sigma(\theta(\vy))(1 - \sigma(\theta(\vy)))
    \end{equation}
(note the second derivative does not depend on $w$). $\sigma$ is the sigmoid function. The expectation of the right-hand side achieves the minimum at the boundary, therefore by \cref{assumption:bounded-logits}, it is at least $\lambda$ \ie the curvature of the point-wise $\gL$ is bounded below by $\lambda$.

Now, by the fundamental theorem of calculus, the difference in gradients as required by the lemma is the integral of the Hessian \ie let $\theta_t = \theta + t(\theta - \theta')$ for $t \in [0, 1]$:
\begin{equation}
    \begin{aligned}
    &D_\theta L(\theta,g,\gamma)[\theta - \theta'] - D_\theta L(\theta',g,\gamma)[\theta - \theta'] \\
    &= \int_0^1 \partial_t D_{\theta} L(\theta_t,g,\gamma) dt \\
    &= \int_0^1 D^2_{\theta} L(\theta_t,g,\gamma)[\theta - \theta'] dt \\
    &= \int_0^1 \E \partial_\theta^2 \gL(\theta_t,g,\gamma)(\theta - \theta')^2 dt \\
    &\ge \lambda \|\theta - \theta'\|_2^2
    \end{aligned}
\end{equation}
\end{proof}

Now we are ready for the main proof.
    First we use \cref{lemma:strong-convexity} for the estimated and optimal logits $\hat \theta$ and $\theta^*$:
    \begin{equation}
    D_\theta L(\hat \theta,\hat g,\hat \gamma)[\hat \theta - \theta^*] - D_\theta L(\theta^*,\hat g,\hat\gamma)[\hat\theta - \theta^*] \ge \lambda \|\hat \theta - \theta^*\|_2^2
    \end{equation}
    since $\hat\theta$ is the minimizer of $L(\hat\theta,\hat g,\hat \gamma)$, we have $D_\theta L(\hat\theta,\hat g,\hat\gamma)[\hat\theta - \theta^*] = 0$. Therefore, we get:
    \begin{equation}
    \begin{aligned}
    &\lambda \|\hat \theta - \theta^*\|_2^2 \le - D_\theta L(\theta^*,\hat g,\hat\gamma)[\hat\theta - \theta^*] \\
    &= - \E [(\sigma(\theta^*(\vy)) - w(\hat g,\hat\gamma)(\vy,\vz)) (\hat\theta(\vy) - \theta^*(\vy))]
    \end{aligned}
    \end{equation}
    using that $D_\theta L(\theta^*,g^*,\gamma^*)[\hat\theta - \theta^*] = 0$ due to optimality of $\theta^*$, and also that $D_\theta L(\theta^*,g^*,\gamma^*)[\hat\theta - \theta^*] = \E[\sigma(\theta^*) - w(g^*,\gamma^*)](\hat\theta - \theta^*)$ by simple differentiation, we add $0 = \E[\sigma(\theta^*) - w(g^*,\gamma^*)](\hat\theta - \theta^*)$ to the previous inequality to get:
    \begin{equation}
    \begin{aligned}
    &\lambda \|\hat \theta - \theta^*\|_2^2 \le \E[(w(\hat g,\hat\gamma) - w(g^*,\gamma^*))(\hat\theta - \theta^*)]
    \end{aligned}
    \label{eq:ineq-1}
    \end{equation}
We expand the target difference $w(\hat g,\hat\gamma) - w(g^*,\gamma^*)$:
\begin{equation}
    \begin{aligned}
    &w(\hat g,\hat\gamma) - w(g^*,\gamma^*) = \hat g + \hat \gamma (\vz - \hat g) - (g^* + \gamma^* (\vz - g^*))\\
    &= (\hat\gamma - \gamma^*)(\vz - g^*) + (\hat g - g^*)(1 - \gamma^*) + (\hat g - g^*)(\gamma^* - \hat\gamma)
    \end{aligned}
\end{equation}
when put into \cref{eq:ineq-1}, the first and second terms vanish because $\E[\vz - g^*|\vy] = 0$ and $\E[1 - \gamma^*|\vy] = 0$ respectively. Therefore, we get (after taking the absolute value and noting that the LHS is non-negative):
\begin{equation}
    \lambda \|\hat \theta - \theta^*\|_2^2 \le |\E[(\hat g - g^*)(\hat\gamma - \gamma^*)(\hat\theta - \theta^*)]|
\end{equation}
Using Holder's inequality with coefficients $(1/4, 1/4, 1/2)$:
\begin{equation}
    \lambda \|\hat \theta - \theta^*\|_2^2 \le \|\hat g - g^*\|_4 \|\hat\gamma - \gamma^*\|_4 \|\hat\theta - \theta^*\|_2
\end{equation}
However, since $\hat \theta$ minimizes the empirical loss, not the population loss, we need to add back the empirical process term:
\begin{equation}
    \lambda \|\hat \theta - \theta^*\|_2^2 \le \text{Rate} + \|\hat g - g^*\|_4 \|\hat\gamma - \gamma^*\|_4 \|\hat\theta - \theta^*\|_2
\end{equation}
Now we apply Cauchy-Schwarz inequality to the RHS's second term:
\begin{equation}
    \begin{aligned}
    \lambda \|\hat \theta - \theta^*\|_2^2 &\le \text{Rate} + (2/\lambda)\|\hat g - g^*\|_4^2 \|\hat\gamma - \gamma^*\|_4^2 \\
    &+ (\lambda/4)\|\hat\theta - \theta^*\|_2^2 \\
    &\le \text{Rate} + (2/\lambda)\|\hat g - g^*\|_4^4 + (\lambda/4)\|\hat\theta - \theta^*\|_2^2 \\
    \|\hat\theta - \theta^*\|_2^2 &\le \text{Rate} + O(\|\hat g - g^*\|_4^4)
    \end{aligned}
\end{equation}
we have used \cref{assumption:fast-propensity}. Finally, we can invoke \cref{assumption:vc} to convert $\text{Rate}$ to $O(1/n)$, achieving the desired claim.

\section*{Supplemental Experiment Results}

\myheading{Comparison with other robust baselines.}
\label{robust-baselines}
We compare DeDPO with 3 other baselines: the label smoothing loss by the original DPO author \citep{mitchellnote}, IPO \citep{azar2024general} and a recent distributionally robust optimization (DRO) \citep{wu2024towards}. As discussed, these works either assume a structural noise model or use DRO to guard against worst-case scenario, which is different from our setting where we have a set of labeled data to guide the correction of the synthetically-labeled data (as in \cref{prop:geometric}). Therefore, to be fair to these baselines, we also construct synthetic labels from randomly corrupted ground truth, which is ideal for their setting. In particular, as Qwen synthetic labels for pickapic FiFA-5k have an accuracy of 80\%, we randomly flip 20\% of the ground truth labels. \cref{tab:noise} shows that our DeDPO still beats other baselines and the non-robust DPO with synthetic pref. by a wide margin in both synthetic sources. Specifically, in the  Qwen setting, our method is higher than the highest baseline Label Smoothing by 0.27 in PickScore, and even the non-robust DPO is better than all robust baselines, showing that they are not suitable for our setting. In the 20\% flipped setting, the non-robust DPO with synthetic pref. and our DeDPO degrade significantly, yet while DPO is now lower than all baselines, DeDPO remains higher than all baselines, being 0.06 higher than the highest baseline DRO in PickScore.

\begin{table}[h]
    \centering
    \caption{Comparison with other robustness methods for DPO using dataset FiFA-5k and model SD1.5. The metric is PickScore.}
    \label{tab:noise}
    \resizebox{\linewidth}{!}{
        \begin{tabular}{lccccc}
            \hline
            \textbf{Synthetic Source} & \textbf{DRO \citep{wu2024towards}} & \textbf{Label Smoothing \cite{mitchellnote}} & \textbf{IPO\cite{azar2024general}} & \textbf{DPO} & \textbf{DeDPO} \\
            \hline
            Qwen & 21.63 & 21.64 & 21.52 & 21.71 & \bf 21.91 \\
            20\% Flip & 21.66 & 21.62 & 21.51 & 21.49 & \bf 21.72 \\
            \hline
        \end{tabular}
    }
\end{table}

\myheading{Synthetic Preference Bias.}
In the preceding paragraph, we have said that our setting of synthetic preference is different from noisy (flipped) labels, and our robustness method is much more suitable for the synthetic preference setting, although it still performs well on the other setting. The reason is that the synthetic preference's discrepancy with the human preference is not random noise, but rather systematic bias. This means that there are identifiable patterns in the way the synthetic preference makes errors. Obviously, the noise-model approaches are not suitable for this kind of error. To correct for bias, we need information about the true distribution that we want. Thus, we see that using a small set of high quality labeled data is an obvious solution in retrospect. With the debiased loss, DeDPO is thus able to enhance robustness to synthetic preference. To verify that there are identifiable systematic bias in the synthetic preference, we perform a detailed bias analysis comparing label generation between the synthetic annotator (Qwen-VLM) and human preference. By examining the prompts that yield disagreement in labels, we confirmed that the discrepancies are not random. There exist systematic patterns in how the synthetic model judges an image differently than a human. This systematic bias can stem from various sources, including the prompt engineering techniques used during the synthetic label generation process. \cref{fig:qual-main-4} demonstrates one such specific bias identified in our synthetic labels along with our detailed analysis.
\label{supp:qual_results}

\myheading{Robustness against randomization.}
To verify the robustness of DeDPO's performance gains over DPO and eliminate the dependency on randomness, we performed 15 independent test runs with different random seeds. Across both SD1.5 and SDXL architecture, DeDPO exhibits variance comparable to or smaller than all DPO variants. For SD1.5, the standard deviation ($\sigma$) of PickScore across runs is $\sigma_{\text{DeDPO}} = 0.0076$. This is lower than all DPO variants, with the minimum is $\sigma_{\text{DPO} + 25\%} = 0.0095$.Similarly, for SDXL, DeDPO demonstrates superior stability with a standard deviation of only $\sigma_{\text{DeDPO}} = 0.0094$, compared to the corresponding DPO's minimum baselines $\sigma_{\text{DPO} + \text{pseudo}} = 0.0112$. Moreover, DeDPO consistently maintains higher PickScores across all seeds and models. These results validate that DeDPO performance gains are robust against random initializations.

\begin{figure}[th]
    \centering
    
    \begin{subfigure}{\linewidth}
        \centering
        \includegraphics[width=\linewidth]{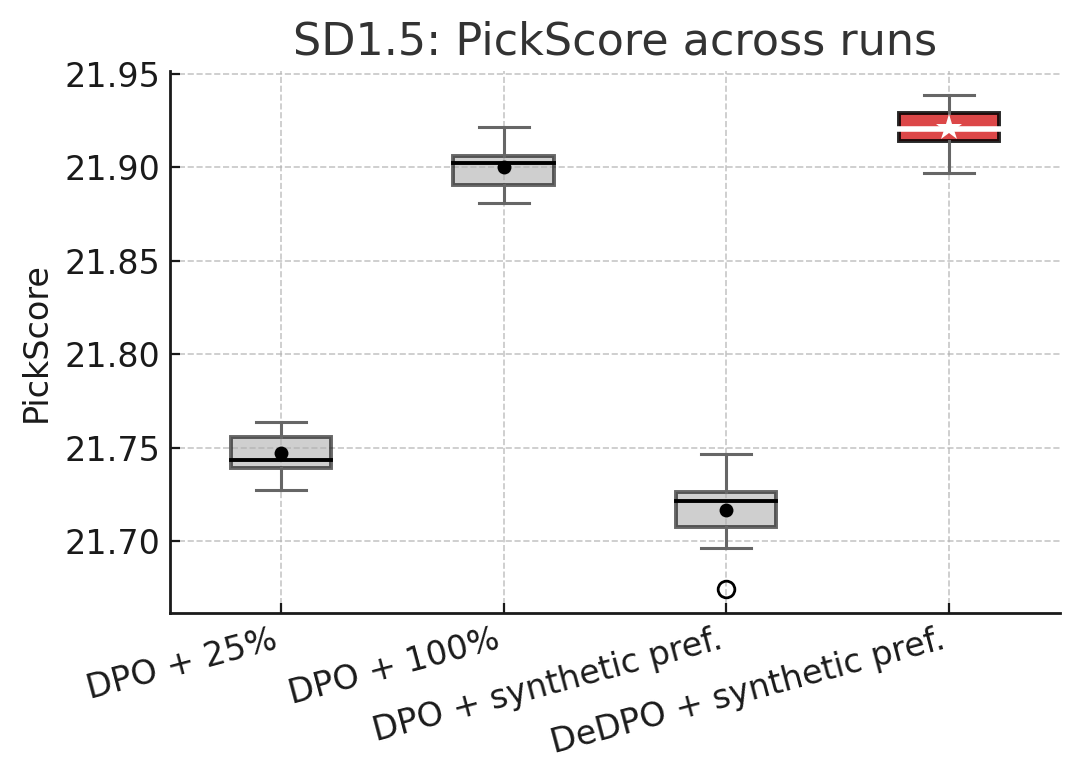}
        \caption{SD1.5: Distribution of PickScores across 15 runs.}
        \label{fig:box-sd15}
    \end{subfigure}
    
    \vspace{0.4em} 
    
    \begin{subfigure}{\linewidth}
        \centering
        \includegraphics[width=\linewidth]{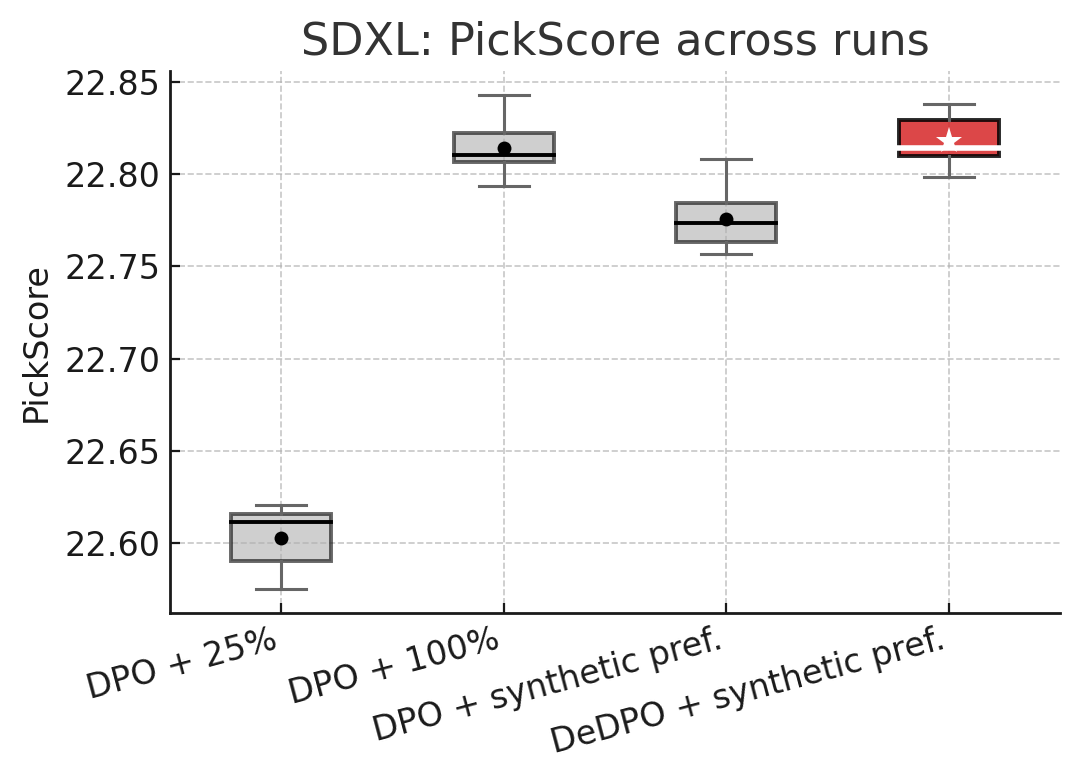}
        \caption{SDXL: Distribution of PickScores across 15 runs.}
        \label{fig:box-sdxl}
    \end{subfigure}
    
    \caption{Comparison of PickScore distributions for different preference-learning strategies with SD1.5 (top) and SDXL (bottom).}
    \vspace{-10pt}
    \label{fig:boxplots-sd15-sdxl}
\end{figure}

\label{sec:more_results}
\myheading{Benchmarking across different preference label sources.} Our main results compared DeDPO and DPO when training on synthetic preferences from different annotators, comparing on Pickscore. \cref{tab:compare_diff_sources} complements this comparison by providing further benchmarking with the HPSv2 Average score \cite{wu2023human} (referred to as HPSv2 for brevity, representing the average across its four categories) and Aesthetic score (AS). Across all three synthetic preference sources—self-training, CLIP, and Qwen-VLM—DeDPO consistently outperforms vanilla DPO on all scores. Significantly, we see a 2.58\% HPSv2 improvement on CLIP, and a 1.87\% on AS with Qwen-VLM pseudo label. This observation confirms our hypothesis that DeDPO enhances the efficacy of training processes that rely on synthetically generated preference labels. 

\begin{table}[th]
    \centering
    \caption{Benchmarking DeDPO and DDPO on different synthetic preference sources. Our 3 scores: PickScore (PS), HPSv2, and Aesthetic score (AS) )}
    \vspace{-5pt}
    \label{tab:compare_diff_sources}
    \resizebox{\linewidth}{!}{
        \begin{tabular}{lccccccc}
        \toprule
             &  \multicolumn{3}{c}{DPO} & & \multicolumn{3}{c}{DeDPO}\\
             \cmidrule{2-4} \cmidrule{6-8}
             & PS  \(\left(\uparrow\right)\) & HPSv2 \(\left(\uparrow\right)\) & AS \(\left(\uparrow\right)\) & & PS \(\left(\uparrow\right)\) & HPSv2 \(\left(\uparrow\right)\) & AS \(\left(\uparrow\right)\)
            \\
        \midrule
            Self-training & 21.79 & 27.74 & 5.36 & & 21.84 & \bf 27.92 & 5.41 \\
            CLIP & 21.46 & 27.06 & 5.27 & & 21.90 & 27.77 & 5.40 \\
            Qwen-VLM & 21.71  & 27.38 & 5.33 & & \bf 21.91 & 27.80 & \bf 5.43 \\
        \bottomrule
            
        \end{tabular}
    }
\end{table}

\myheading{Scalability.} We argue that DeDPO is suitable for training on synthetic label preferences at scale. In \cref{tab:dpo-dedpo-unlabeled} we study how performance changes when we keep the number of high quality labeled preference pairs fixed at 1k2, while increasing the amount of synthetic preferences rated on Qwen-VLM. Similar to the findings in the previous section, we provide further evidence of our method's scalability using PickScore (PS), Aesthetic Score (AS), and HPSv2 metrics.The results show that DPO exhibits marginal, non-monotonic gains as the synthetic data increases. For instance, the PS for DPO fluctuates between $21.69$ and $21.82$, and HPSv2 ranges between $27.29$ and $27.61$, indicating that DPO remains largely stagnant when scaled with more synthetic labels, exhibiting noticeable degradation at the highest scale. In contrast, DeDPO consistently achieves superior performance compared to DPO across all synthetic scales. Crucially, while a slight trend of degradation is observed at the largest scale (similar to DPO), DeDPO's overall performance remains highly stable across the $3.8$k to $98.8$k span. The total variation (maximum $\Delta$) for DeDPO is minimal: PS varies only by $0.07$ points and HPSv2 by just $0.10$ points. This comparative analysis confirms that DeDPO possesses a significantly higher degree of robustness to the quantity of unlabeled preferences, effectively leveraging large synthetic pools without incurring substantial performance degradation.

\begin{table}[th]
\centering
\caption{Comparison of DPO and DeDPO when using 1k2 high quality labeled preference pairs and increasing the number of synthethic preference pairs from Qwen-VLM.}
\label{tab:dpo-dedpo-unlabeled}
\resizebox{\linewidth}{!}{
    
    \begin{tabular}{lllllll}
    \toprule
    & \multicolumn{3}{c}{DPO} & \multicolumn{3}{c}{DeDPO} \\
    \cmidrule(lr){2-4} \cmidrule(lr){5-7}
    Unlabel. size & PS ($\uparrow$) & HPSv2 ($\uparrow$) & AE ($\uparrow$)
                   & PS ($\uparrow$) & HPSv2 ($\uparrow$) & AE ($\uparrow$) \\
    \midrule
    3k8  & 21.71 & 27.39  & 5.33 & 21.91 & \bf 27.80 & \bf 5.43 \\
    8k8  & 21.82 & 27.61 & 5.41 & 21.92 & 27.76 & 5.42 \\
    38k8 & 21.70 & 27.37 & 5.33 & \bf 21.96 & 27.76 & 5.42 \\
    98k8 & 21.69 & 27.29  & 5.32 & 21.89 & 27.70  & 5.41 \\
    \bottomrule
    \end{tabular}
    }
\end{table}

\myheading{Scaling training dataset size.} \cref{tab:dpo-dedpo-size} analyzes how performance changes as we increase the total number of preference pairs while keeping the labeled--unlabeled ratio fixed at 25\%--75\%. As in the main paper, DPO remains almost flat across dataset sizes, with PickScore (PS) and HPSv2 showing only minor fluctuations around a similar level. In contrast, DeDPO consistently outperforms DPO at every scale, with gains of roughly $0.1$--$0.2$ in PS, $0.2$--$0.5$ in HPSv2, and $0.05$--$0.1$ in Aesthetic score (AE). 

\begin{table}[th]
\centering
\caption{Comparison of DPO and DeDPO across training set sizes, with the labeled–unlabeled preference ratio fixed at 25\%–75\%.}
\vspace{-10pt}
\label{tab:dpo-dedpo-size}
\resizebox{\linewidth}{!}{
    
    \begin{tabular}{lllllll}
    \toprule
    & \multicolumn{3}{c}{DPO} & \multicolumn{3}{c}{DeDPO} \\
    \cmidrule(lr){2-4} \cmidrule(lr){5-7}
    Size & PS ($\uparrow$) & HPSv2 ($\uparrow$) & AE ($\uparrow$) 
         & PS ($\uparrow$) & HPSv2 ($\uparrow$) & AE ($\uparrow$) \\
    \midrule
    5K   & 21.71 & 27.39  & 5.33 & \bf 21.91 & 27.80 & \bf 5.43 \\
    10K  & 21.76 & 27.53 & 5.35 & 21.86 & 27.73   & 5.40 \\
    50K  & 21.76 & 27.43  & 5.35 & 21.85 & \bf 27.89  & 5.40 \\
    100K & 21.70 & 27.32   & 5.33 & 21.87 & 27.66 & 5.40 \\
    \bottomrule
    \end{tabular}
}
\end{table}

{
\setlength{\tabcolsep}{4pt}
\begin{figure*}
    \centering
    \begin{tabular}{m{8cm} m{4.2cm}<{\centering} m{4.2cm}<{\centering}}
         \textbf{Prompt (text)} & \textbf{Model Preference} & \textbf{Human Preference} \\
         \hline
         a portrait of an old coal miner in 19th century, beautiful painting with highly detailed face by greg rutkowski and magali villanueve
         & \includegraphics[width=4cm]{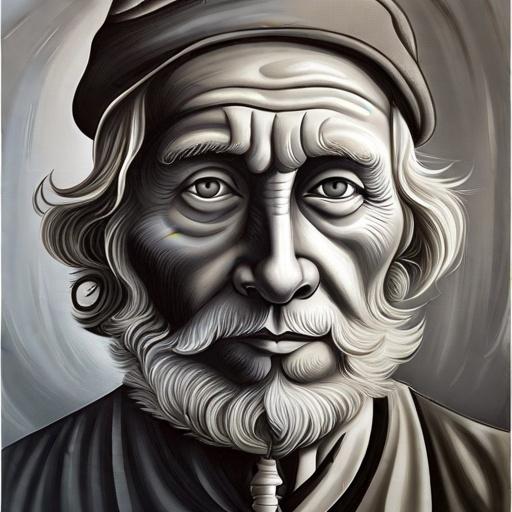} 
         & \includegraphics[width=4cm]{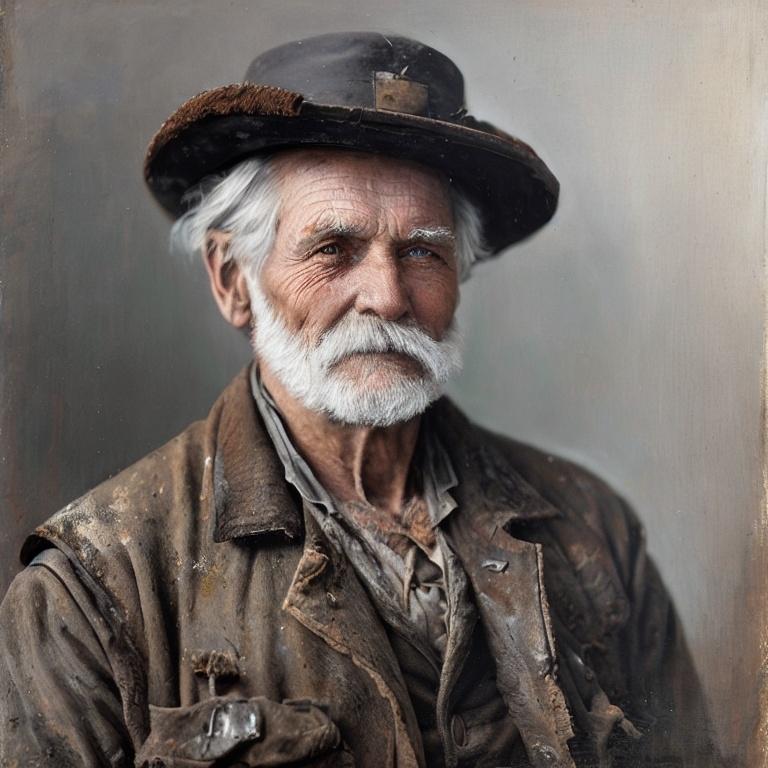} \\
         
         a young sophisticated drinking black woman, gold and filigree, cozy dimly-lit 1920s speakeasy bar, drinking at the bar, dystopian retro 1920s soviet vibe, relaxed pose, pixie cut, wild, highly detailed, digital painting, artstation, sharp focus, illustration, detailed painterly digital art style by Joe Fenton, vibrant deep colors,  \textbf{8k octane} beautifully detailed render, post-processing, extremely hyperdetailed, Art Nouveau, masterpiece
         & \includegraphics[width=4cm]{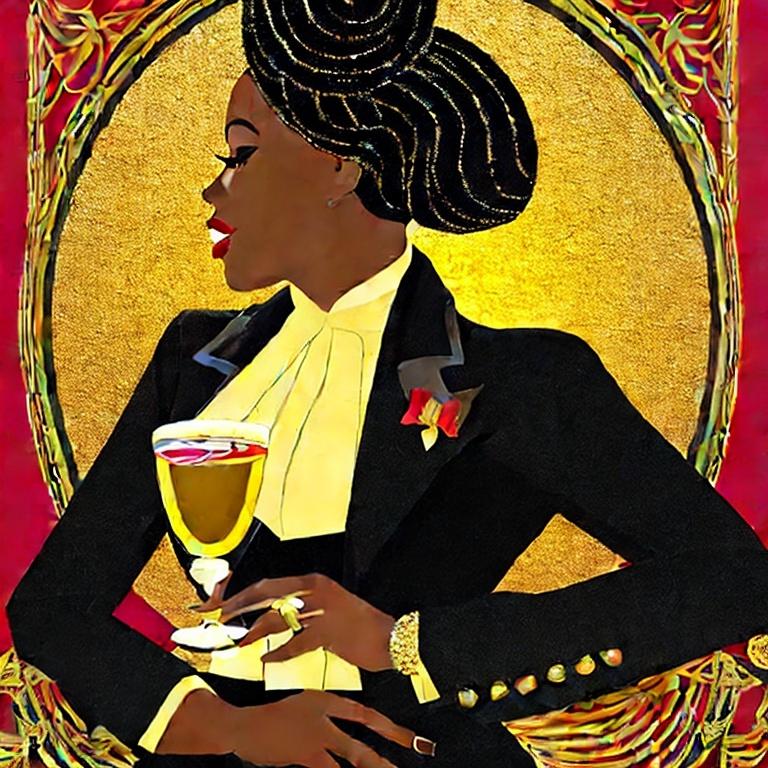} 
         & \includegraphics[width=4cm]{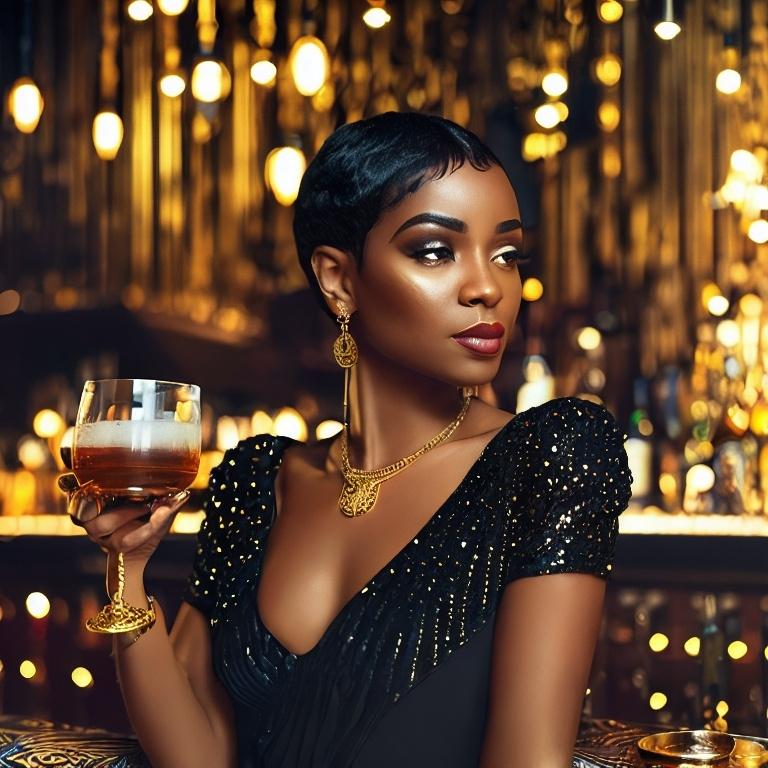} \\
         
         futuristic real realistic 4k full-frame mirrorless photo detailed city casablanca morocco cyberpunk street render concept art new historic blend market
         & \includegraphics[width=4cm]{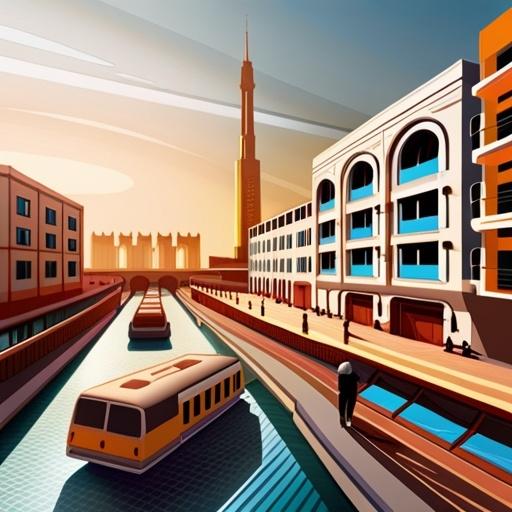} 
         & \includegraphics[width=4cm]{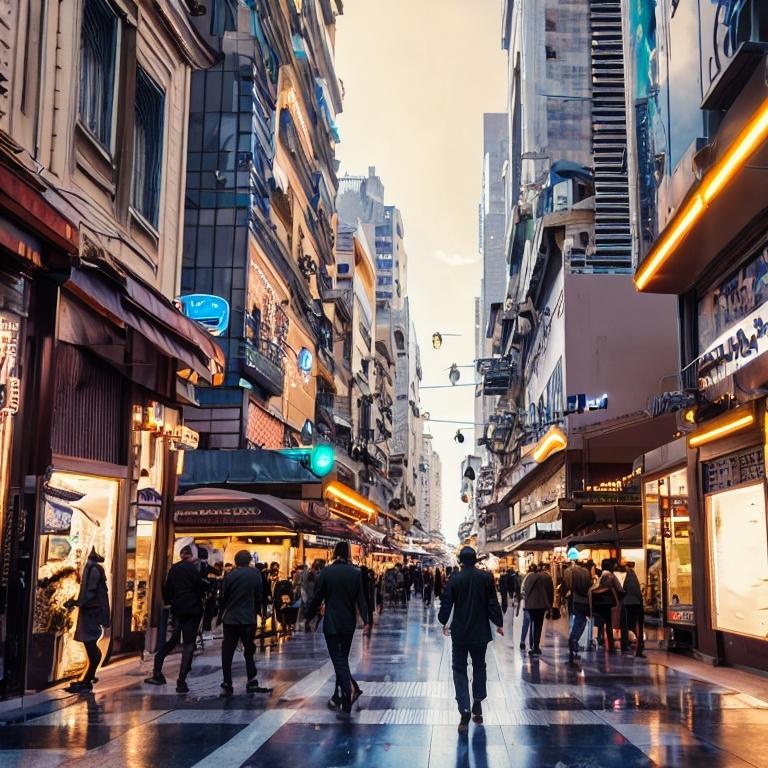} \\
         
         A painting of a wise elder from kenya by lynette yiadom-boakye . dramatic angle, ethereal lights, details, smooth, sharp focus, illustration, realistic, cinematic, artstation, award winning, rgb, unreal engine, octane render, cinematic light, macro, depth of field, blur, red light and clouds from the back, highly detailed epic cinematic concept art cg render made in maya, blender and photoshop, octane render, excellent composition, dynamic dramatic cinematic
         & \includegraphics[width=4cm]{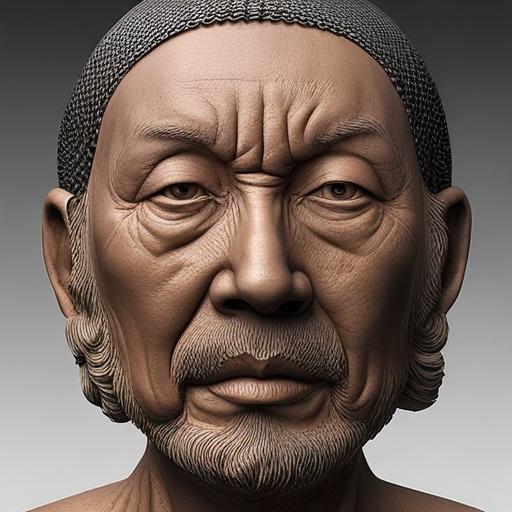} 
         & \includegraphics[width=4cm]{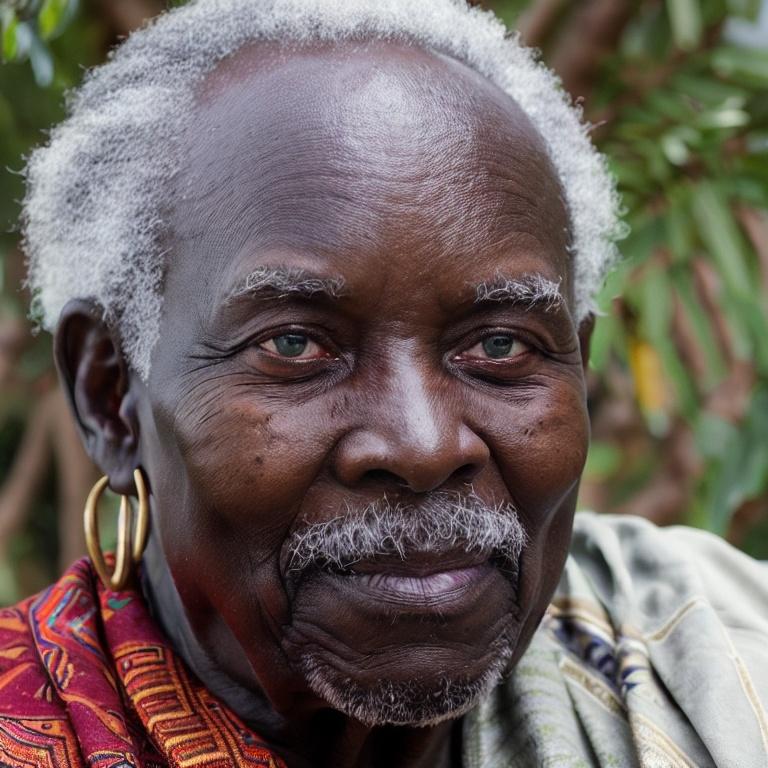} \\
         
    \end{tabular}
    \vspace{-5pt}
    \caption{\textbf{Comparison of preference mechanisms between synthetic (Qwen-VLM) and human labels. }Qwen-VLM prioritizes semantic coherence and aesthetic adherence, strictly following artistic style constraints. However, it exhibits limited grounding in photorealism tokens, often failing to associate terms like "8k" or "hyper-detailed" with realistic texturing. Conversely, human preferences show a strong bias toward photorealistic fidelity; human evaluators frequently overlook hallucinations or ignored constraints (e.g., missing details, specific styles like "painting") if the resulting image appears realistic.}
    \label{fig:qual-main-4}
\end{figure*}
}

\vspace{-5pt}
\myheading{Qualitative results.}
\cref{fig:qual-main-1} and~\cref{fig:qual-main-2} provide more examples comparing DeDPO + synthetic preferences (ours) against three DPO variants (DPO + synthetic pref., DPO + 25\% labeled, and DPO + 100\% labeled). Qualitative examination demonstrates that DeDPO, when trained on synthetic preferences, consistently produces images that show stronger prompt following than all DPO variants. Specifically, DeDPO excels at detecting and accurately rendering small, fine-grained details in the prompt (e.g., the horns on the Japanese girl, the wombat's raised hands, the wizard's hourglass iconography). Crucially, this improved adherence to detail is achieved while maintaining strong aesthetic coherence comparable to the performance of the theoretically upper-bound baseline, DPO trained on 100\% high-quality labeled data.

\label{supp:qual_results}
{
\setlength{\tabcolsep}{1pt}
\begin{figure*}
    \centering
    \begin{tabular}{m{1.5em}<{\centering}m{4cm}<{\centering}m{4cm}<{\centering}m{4cm}<{\centering}m{4cm}<{\centering}}
         & A close-up photo of a wombat wearing a red backpack and raising both arms in the air. Mount Rushmore is in the background.
 & A black and orange yin-yang symbol with tiger's heads instead of circles.
 & A Japanese girl with small horns, long hair, and an elegant smile is praying on the floor of a destroyed church, portrayed in detailed artwork by Yoji Shinkawa.
 & A bowl of soup that looks like a monster knitted out of wool. \\
         \rotatebox{90}{DeDPO + synthetic pref.} & \includegraphics[width=4cm]{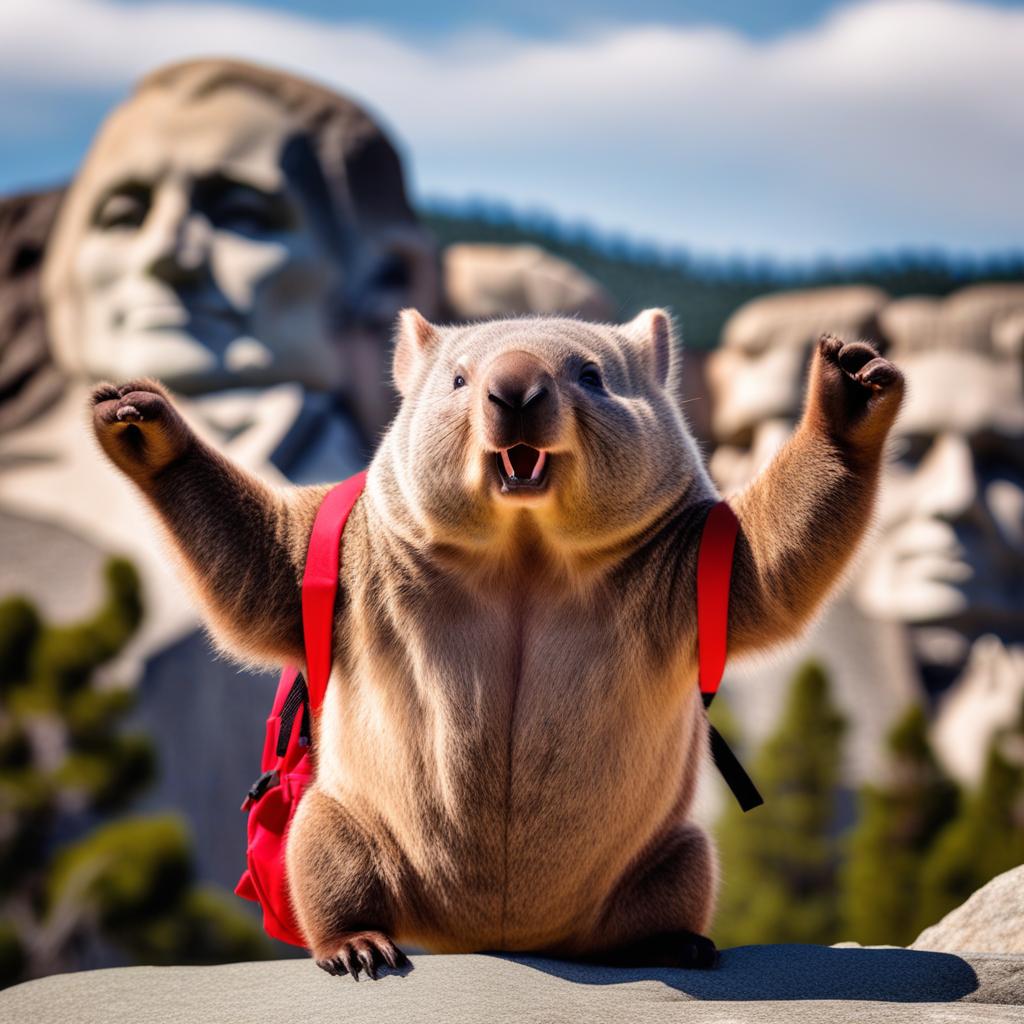} & \includegraphics[width=4cm]{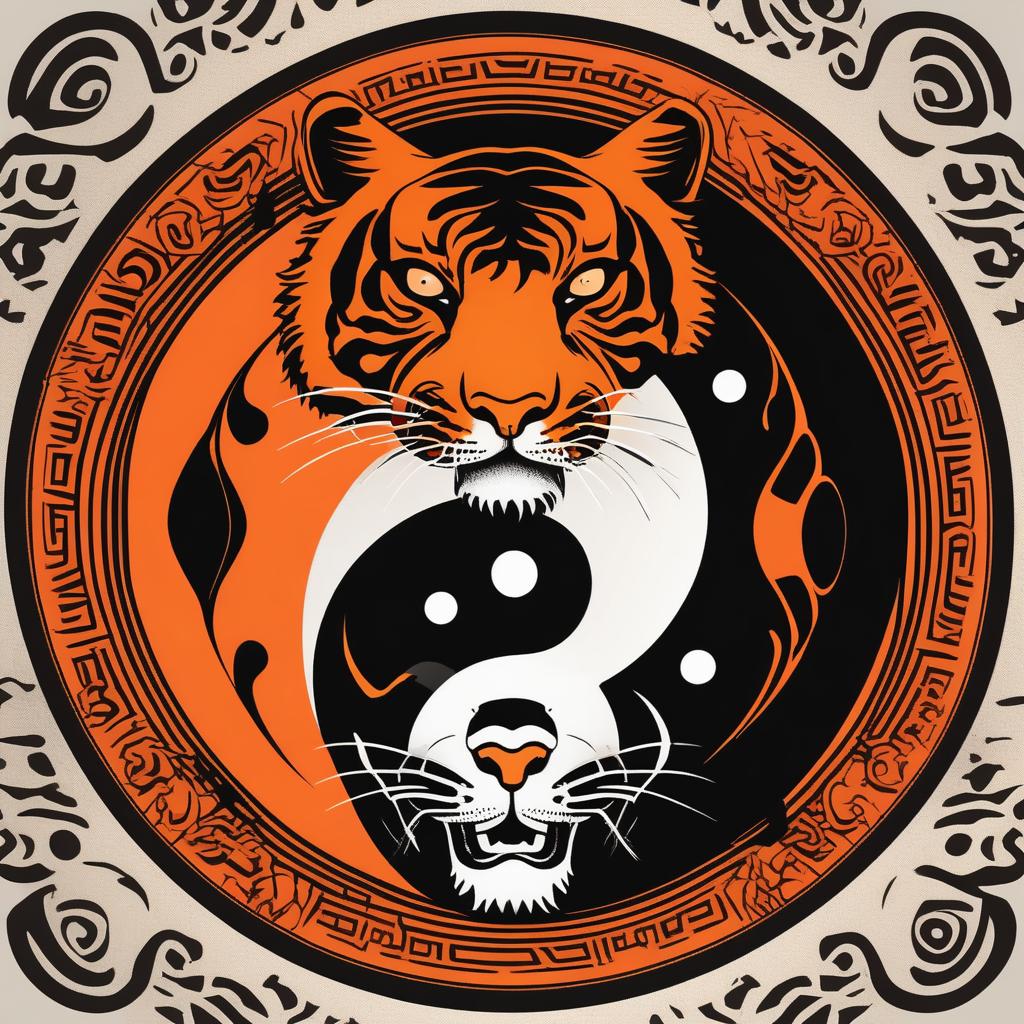} & \includegraphics[width=4cm]{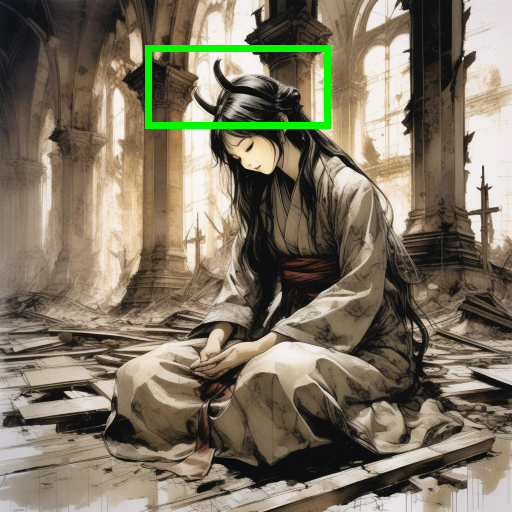} & \includegraphics[width=4cm]{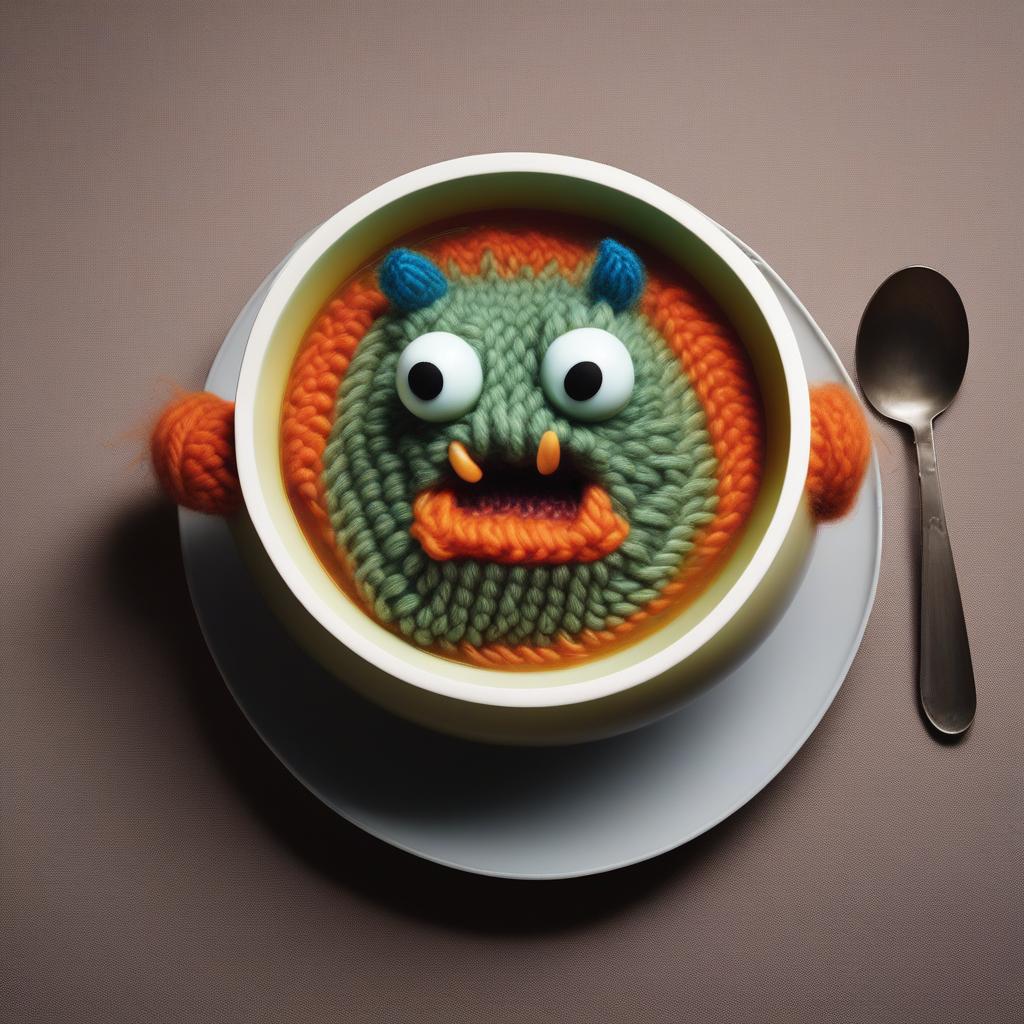} \\[-2pt]
         
         \rotatebox{90}{DPO + synthetic pref.} & \includegraphics[width=4cm]{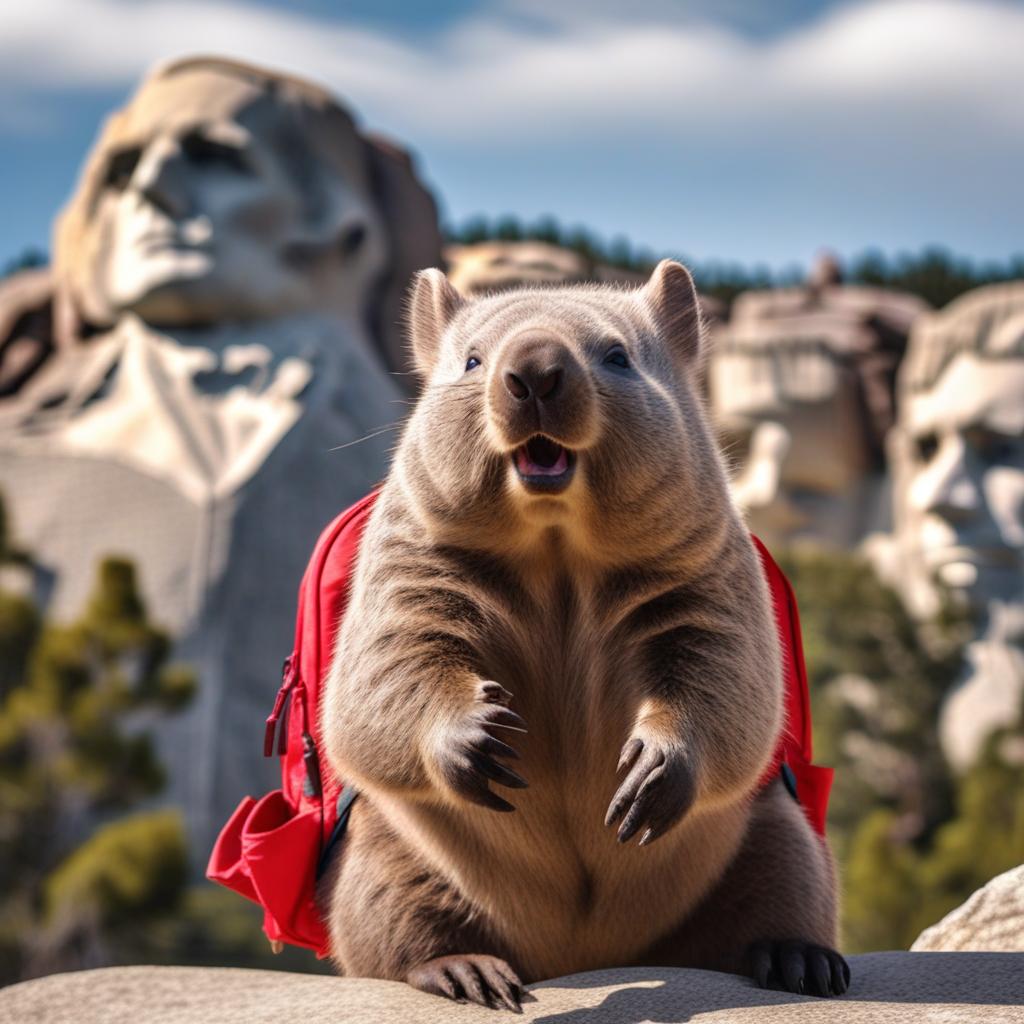} & \includegraphics[width=4cm]{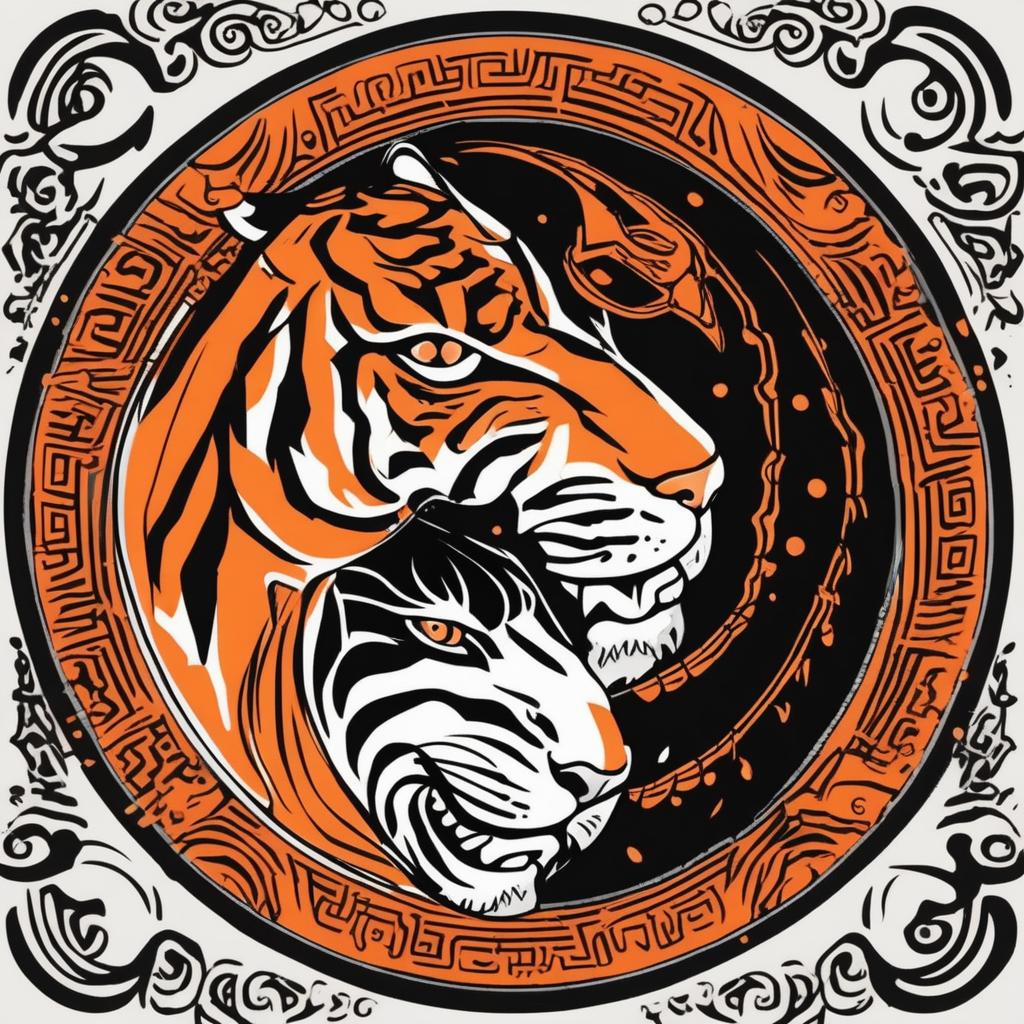} & \includegraphics[width=4cm]{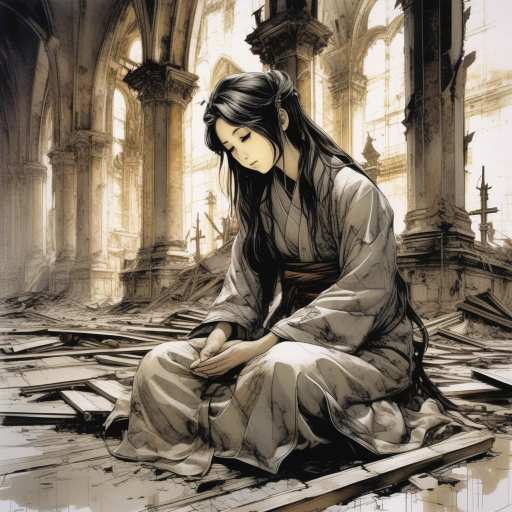} & \includegraphics[width=4cm]{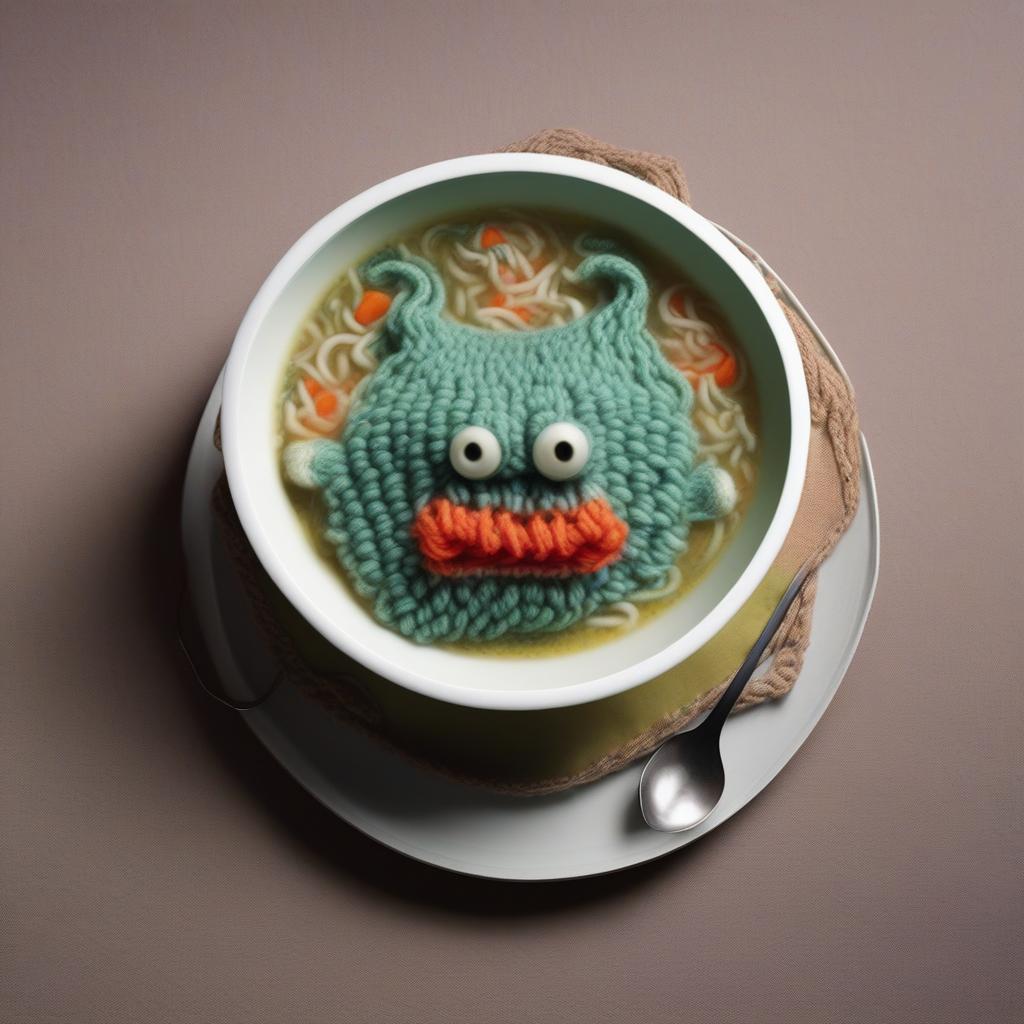} \\[-2pt]
         
         \rotatebox{90}{DPO + 25\%} & \includegraphics[width=4cm]{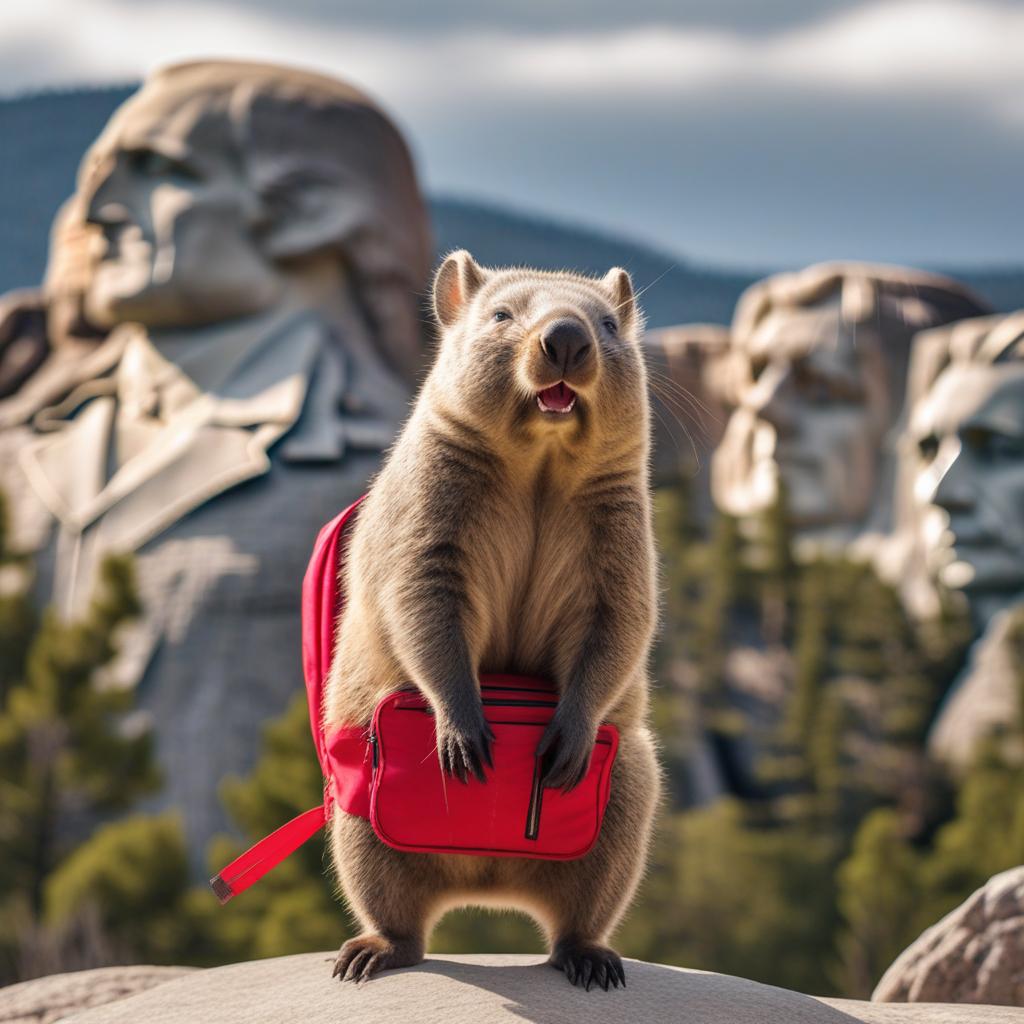} & \includegraphics[width=4cm]{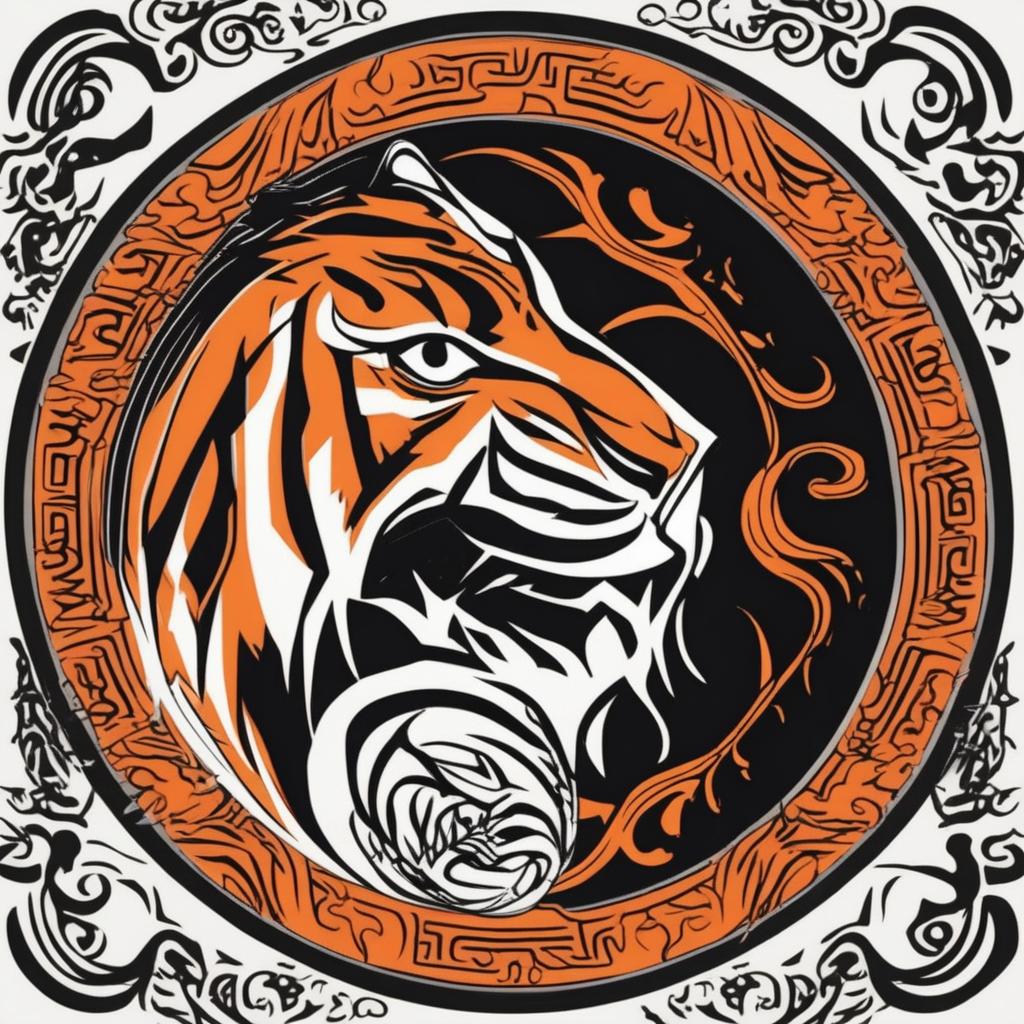} & \includegraphics[width=4cm]{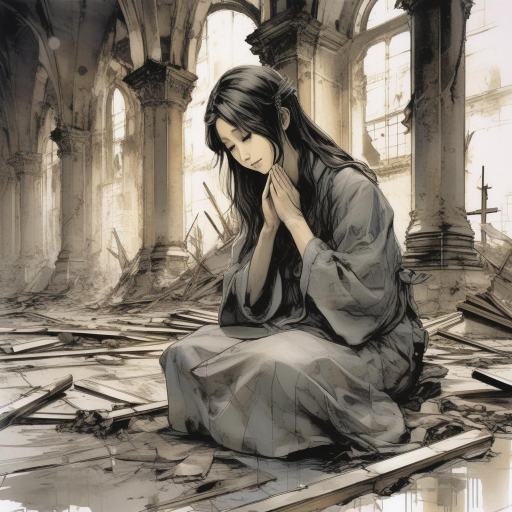} & \includegraphics[width=4cm]{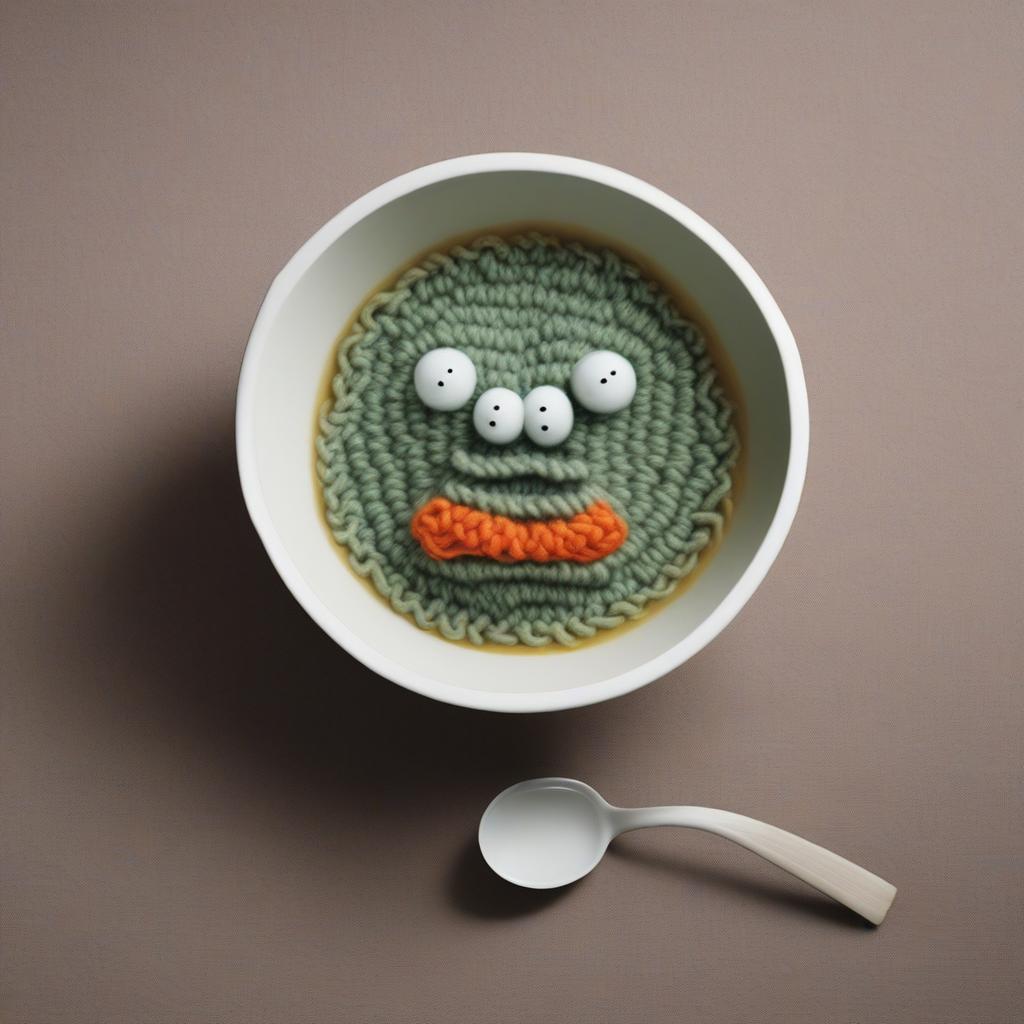} \\[-2pt]
         
         \rotatebox{90}{DPO + 100\%} & \includegraphics[width=4cm]{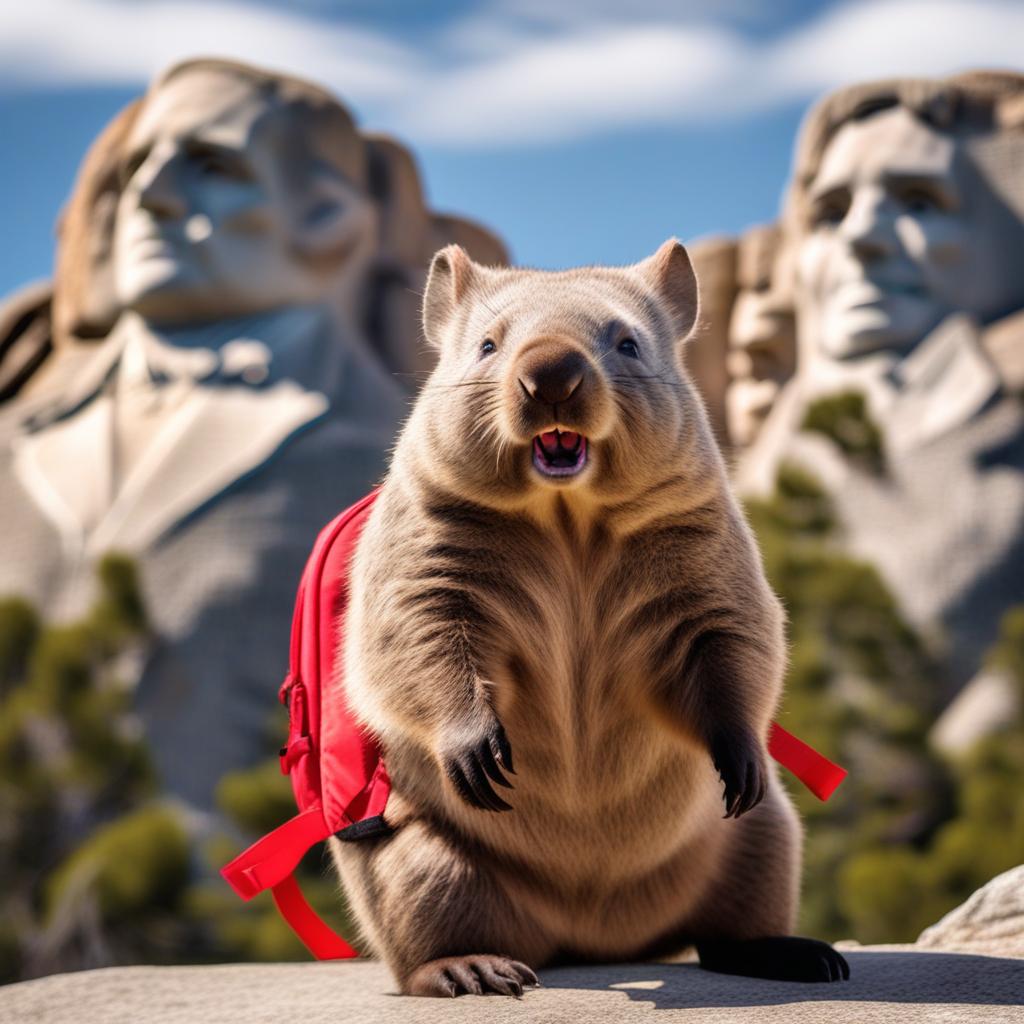} & \includegraphics[width=4cm]{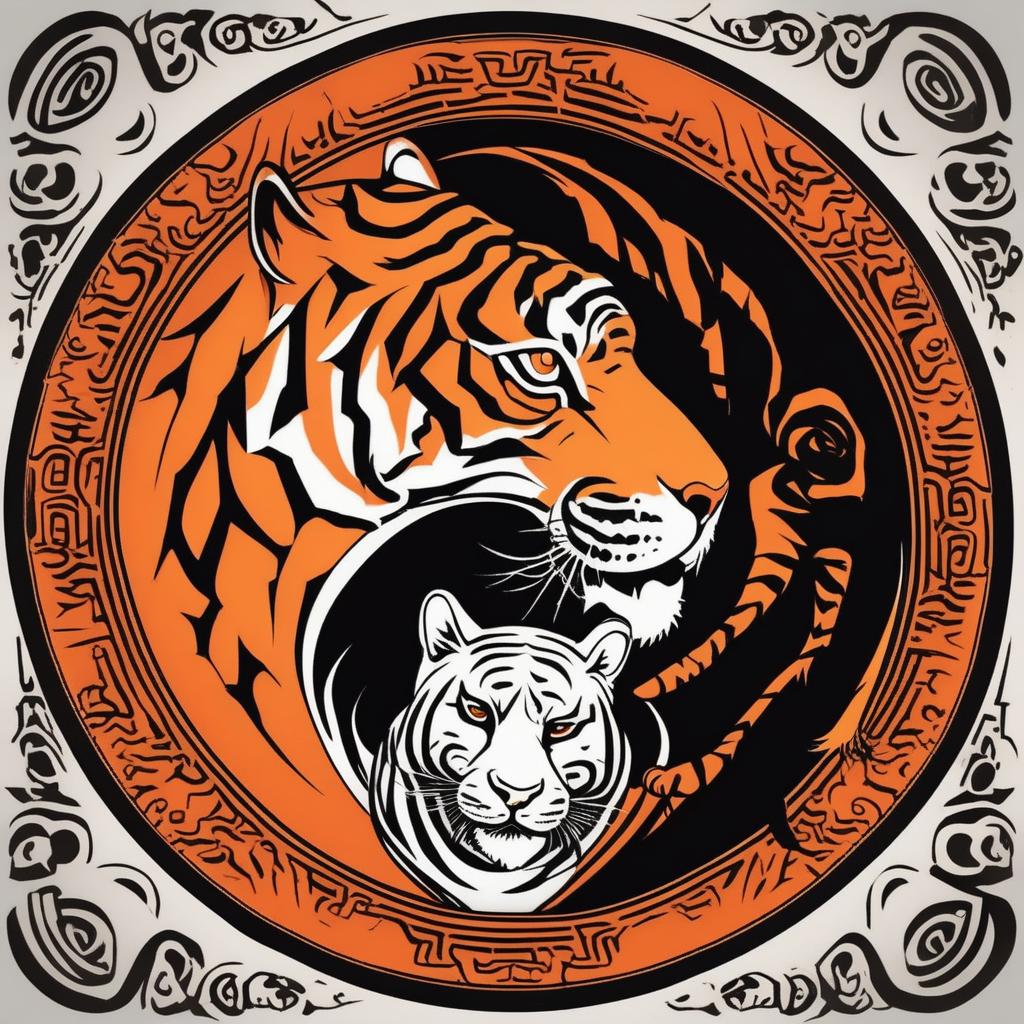} & \includegraphics[width=4cm]{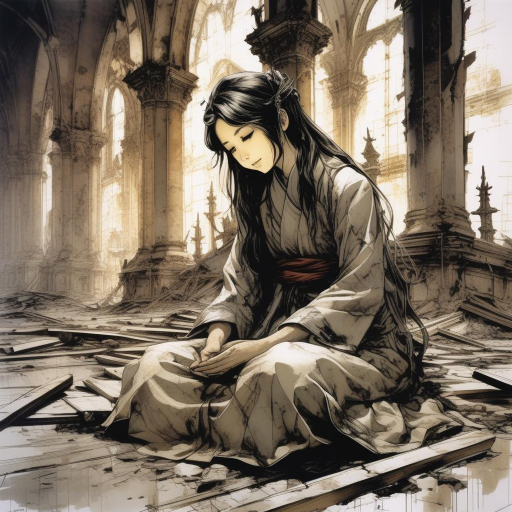} & \includegraphics[width=4cm]{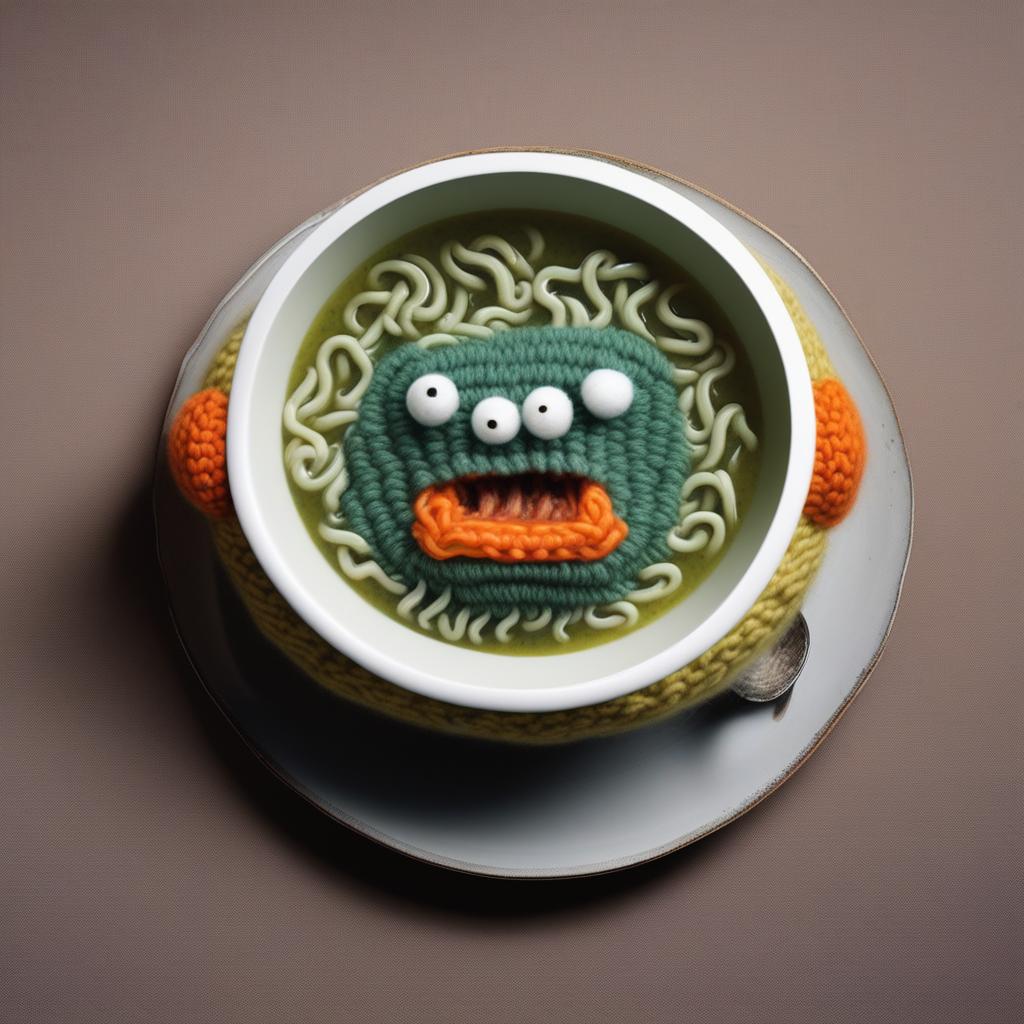} \\[-2pt]
         
    \end{tabular}
    \vspace{-10pt}
    \caption{Qualitative comparison across different training schemes. Columns show four example prompts (top), and rows show generations from DeDPO + synthetic pref., DPO + synthetic pref., DPO + 25\% labeled, and DPO + 100\% labeled. DeDPO matches the aesthetic preference from the upper-bound model DPO + 100\%, while improving prompt following on finer-grain detail of the images e.g., horns on the Japanese girl, or the raised hands of the Wombat. This improve the quality of the images generated sustainably, provide more depth to the photos.}
    \label{fig:qual-main-1}
\end{figure*}
}

{
\setlength{\tabcolsep}{1pt}
\begin{figure*}
    \centering
    \begin{tabular}{m{1.5em}<{\centering}m{4cm}<{\centering}m{4cm}<{\centering}m{4cm}<{\centering}m{4cm}<{\centering}}
         & Image of the Sandman, an ancient wizard with hour glass, casting beautiful dreams.
 & A woman with long hair next to a luminescent bird.
 & Cyberpunk girl sitting in a box in advanced anime digital art.
 &  A photo of llama wearing sunglasses standing on the deck of a spaceship with the Earth in the background \\
         \rotatebox{90}{DeDPO + synthetic pref.} & \includegraphics[width=4cm]{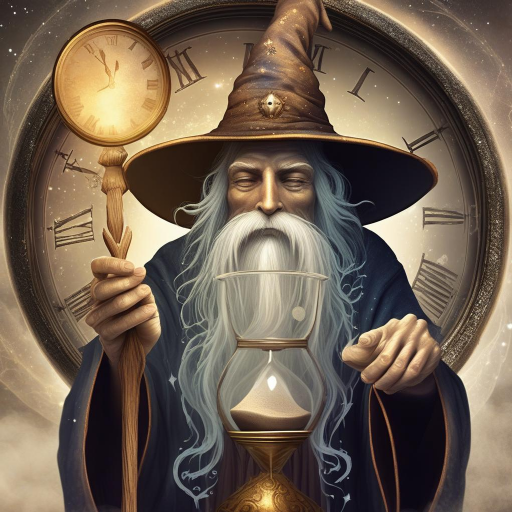} & \includegraphics[width=4cm]{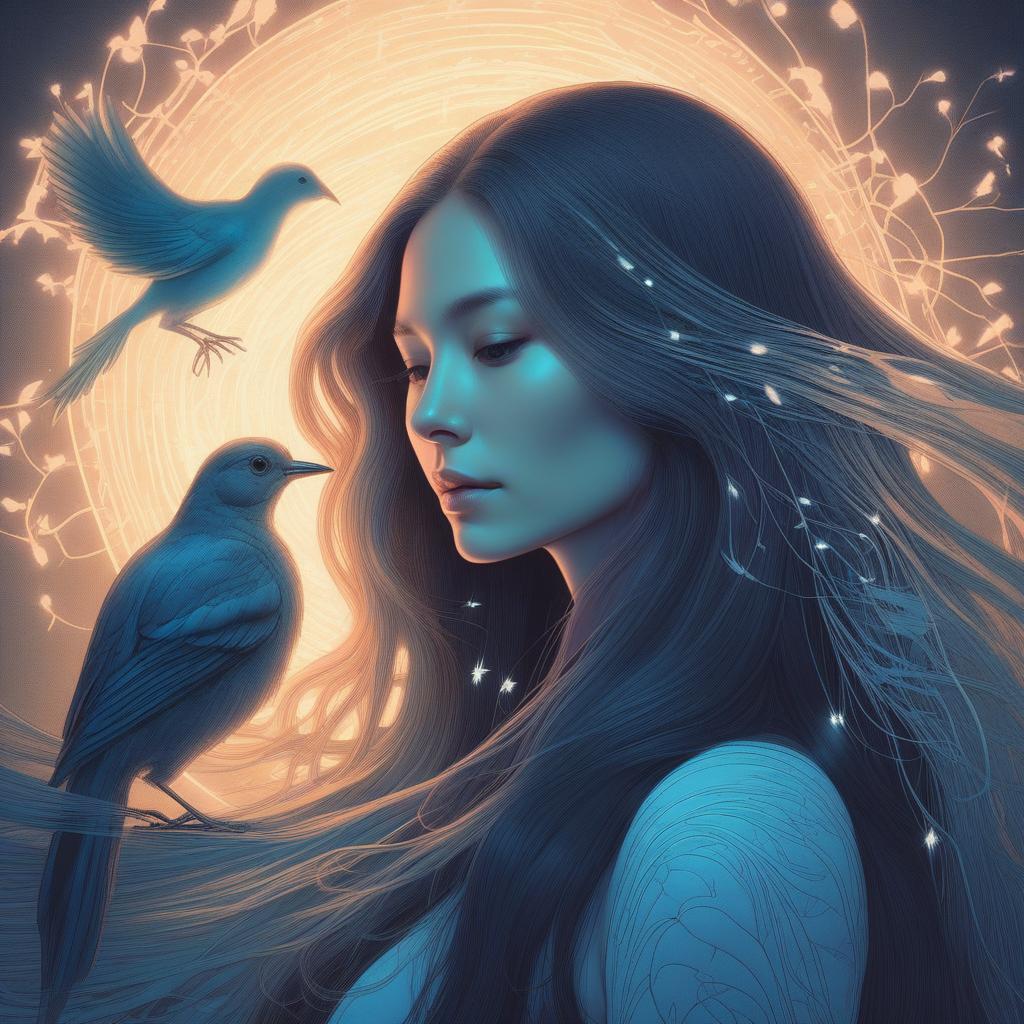} & \includegraphics[width=4cm]{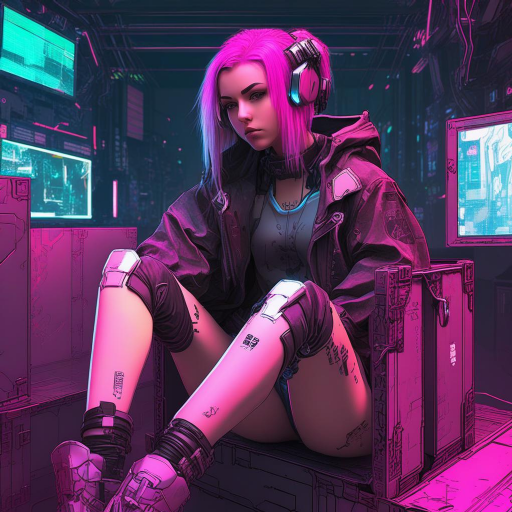} & \includegraphics[width=4cm]{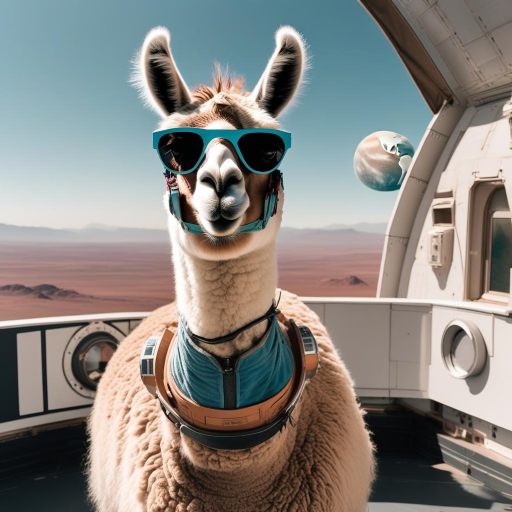} \\[-2pt]
         
         \rotatebox{90}{DPO + synthetic pref.} & \includegraphics[width=4cm]{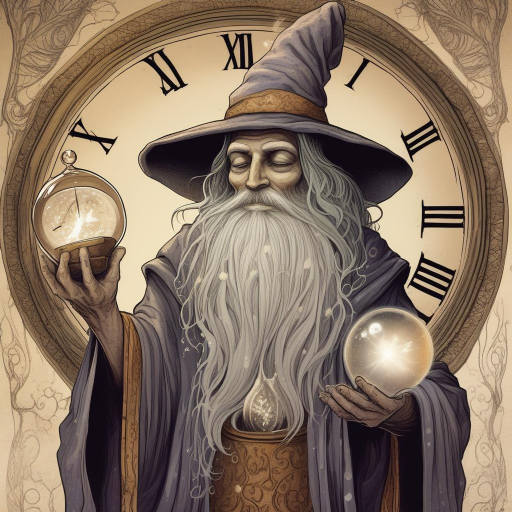} & \includegraphics[width=4cm]{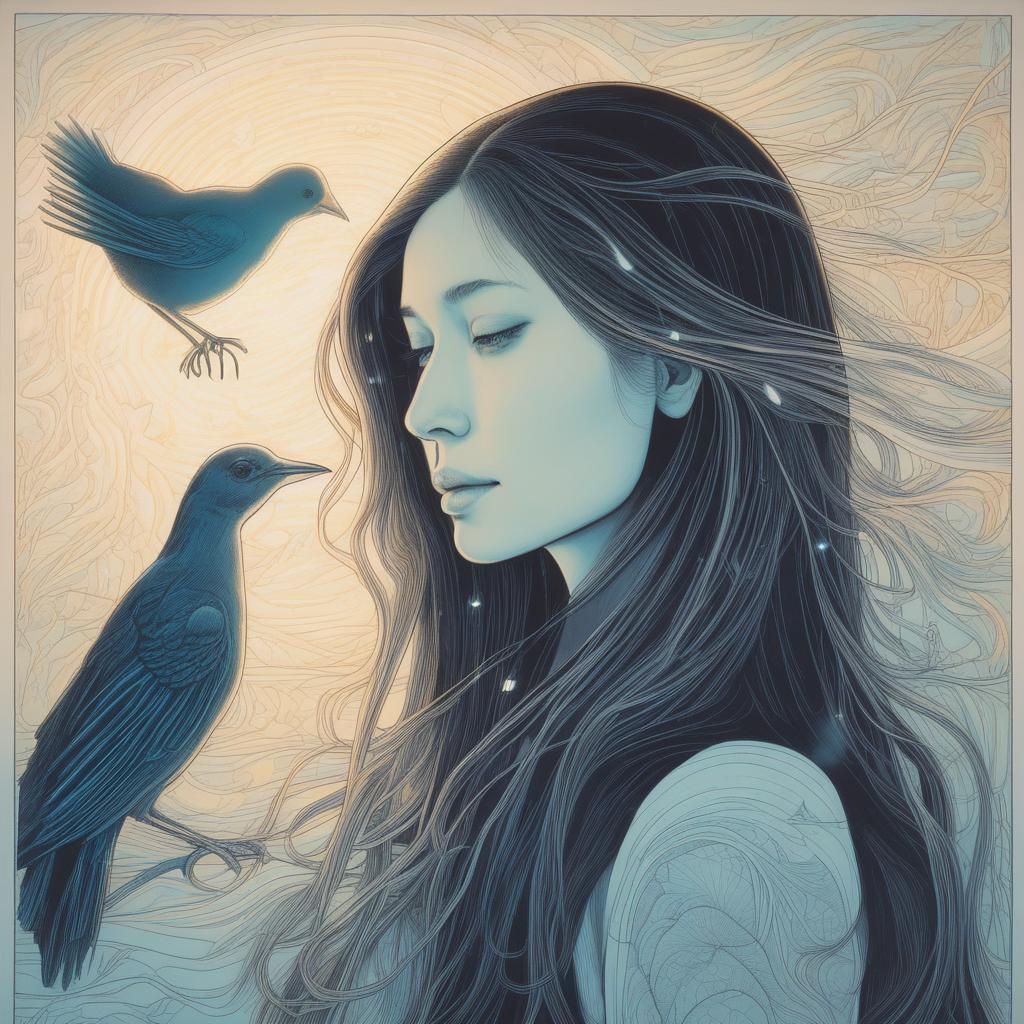} & \includegraphics[width=4cm]{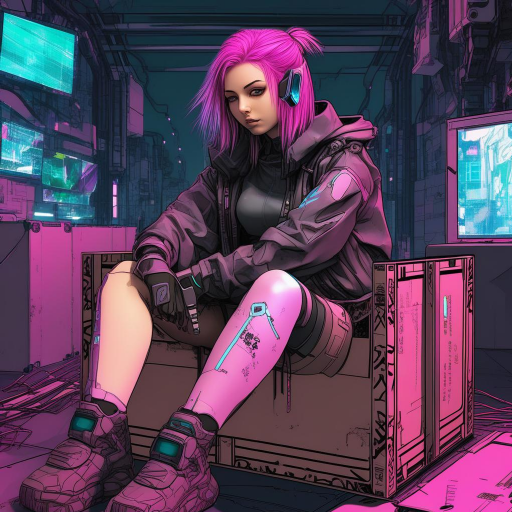} & \includegraphics[width=4cm]{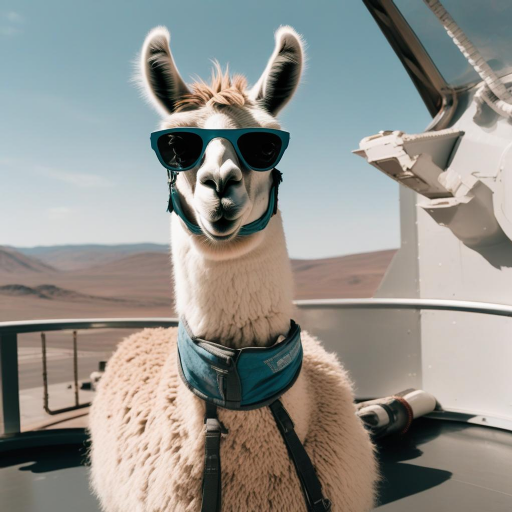} \\[-2pt]
         
         \rotatebox{90}{DPO + 25\%} & \includegraphics[width=4cm]{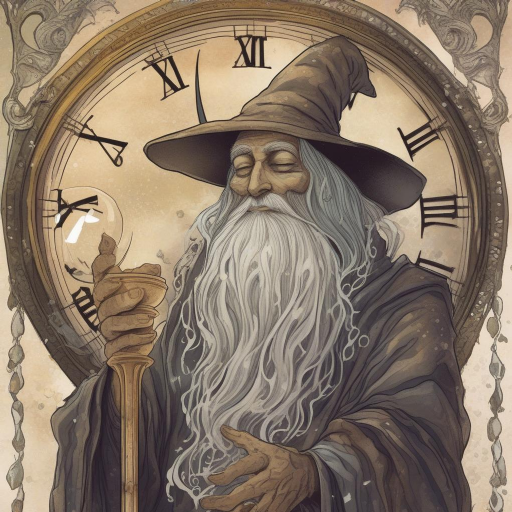} & \includegraphics[width=4cm]{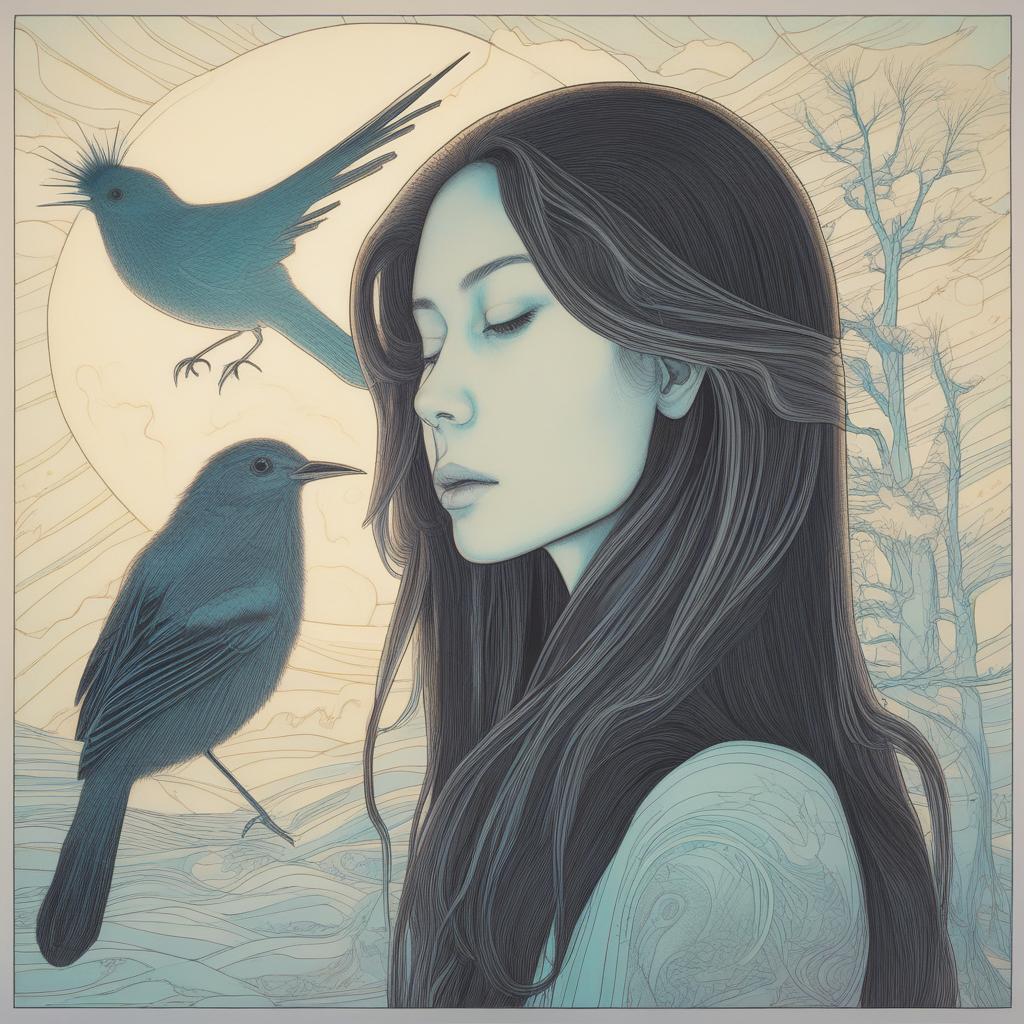} & \includegraphics[width=4cm]{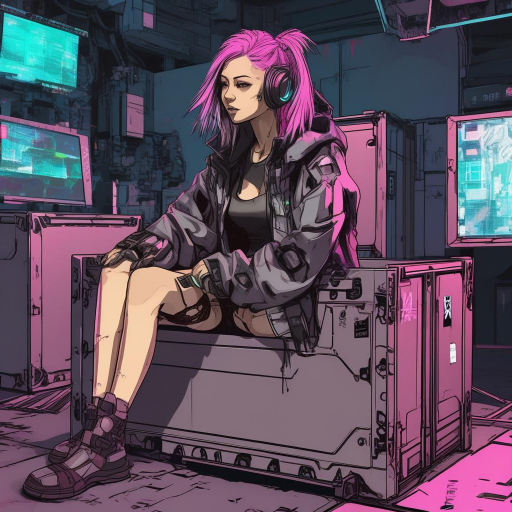} & \includegraphics[width=4cm]{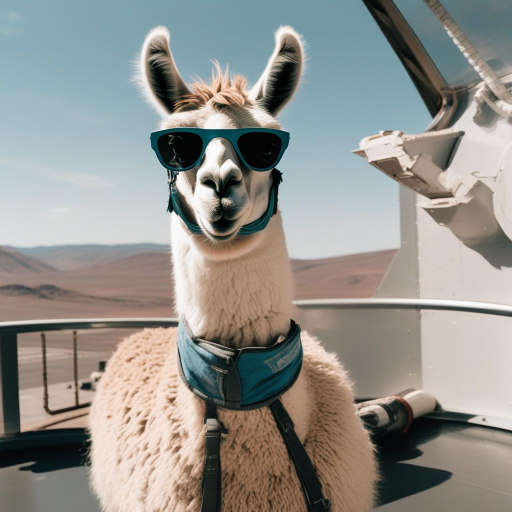} \\[-2pt]
         
         \rotatebox{90}{DPO + 100\%} & \includegraphics[width=4cm]{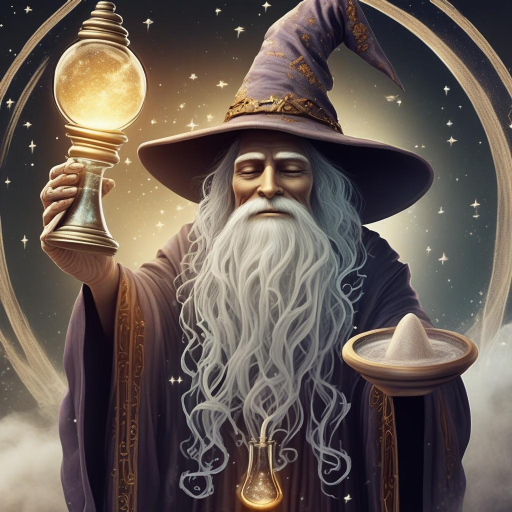} & \includegraphics[width=4cm]{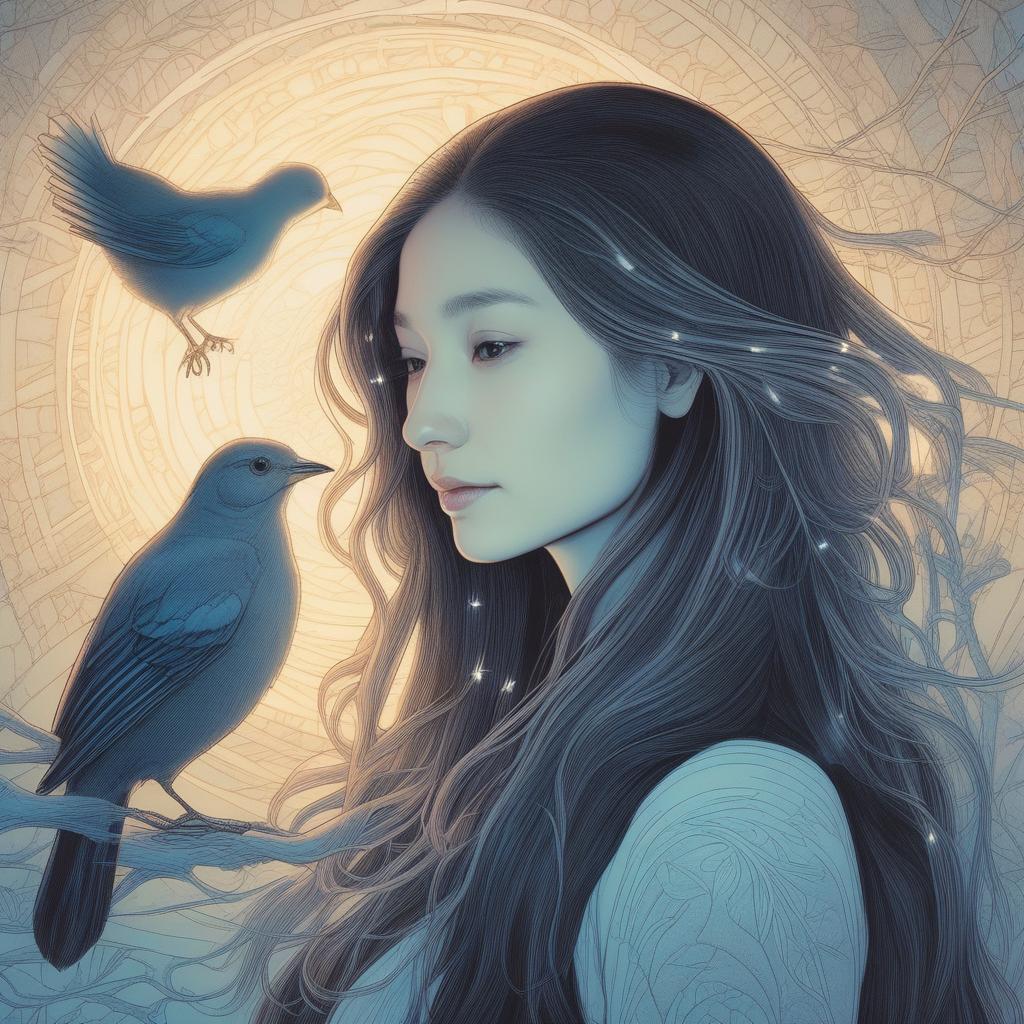} & \includegraphics[width=4cm]{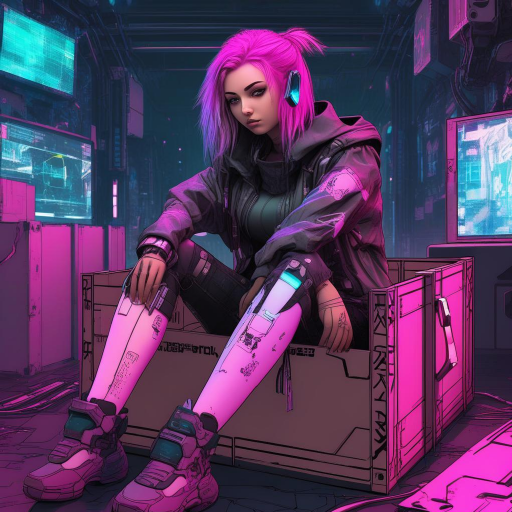} & \includegraphics[width=4cm]{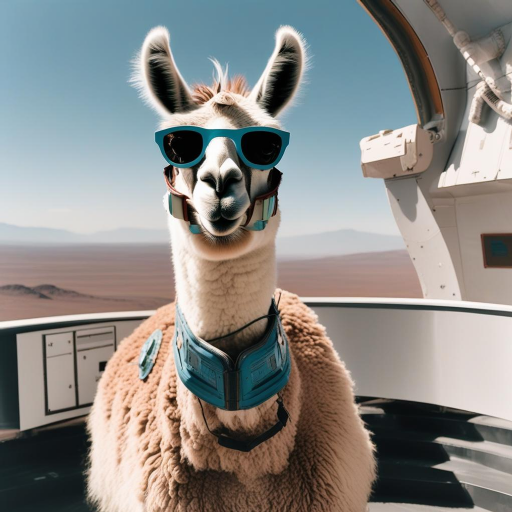} \\[-2pt]
         
    \end{tabular}
    \caption{Qualitative comparison across different training schemes. Columns show four example prompts (top), and rows show generations from DeDPO + synthetic pref. (ours), DPO + synthetic pref., DPO + 25\% labeled, and DPO + 100\% labeled. DeDPO produces images that better match fine-grained prompt details (e.g., subject attributes, composition, and style) while maintaining high aesthetic quality. For example, the Cyberpunk girl shows improvement in lighting consistency, the llama photo shows a subtle earth in the background.}
    \label{fig:qual-main-2}
\end{figure*}
}

\end{document}